\theoremstyle{plain}
\newtheorem{theorem}{Theorem}[section]
\newtheorem{proposition}[theorem]{Proposition}
\newtheorem{lemma}[theorem]{Lemma}
\theoremstyle{definition}
\newtheorem{assumption}[theorem]{Assumption}
\theoremstyle{remark}
\newtheorem{remark}[theorem]{Remark}
\icmltitlerunning{Learning Algorithm on Graph}
\begin{document}

\twocolumn[
\icmltitle{On the Interplay between Graph Structure and Learning Algorithms in Graph Neural Networks}




\begin{icmlauthorlist}
\icmlauthor{Junwei Su}{yyy}
\icmlauthor{Chuan Wu}{yyy}
\end{icmlauthorlist}

\icmlaffiliation{yyy}{School of Computing and Data Science, University of Hong Kong}

\icmlcorrespondingauthor{Junwei Su}{junweisu@connect.hku.hk}
\icmlcorrespondingauthor{Chuan Wu}{cwu@cs.hku.hk}

\icmlkeywords{Machine Learning, ICML}

\vskip 0.3in
]



\printAffiliationsAndNotice{}  

\begin{abstract}
This paper studies the interplay between learning algorithms and graph structure for graph neural networks (GNNs). Existing theoretical studies on the learning dynamics of GNNs primarily focus on the convergence rates of learning algorithms under the interpolation regime (noise-free) and offer only a crude connection between these dynamics and the actual graph structure (e.g., maximum degree). This paper aims to bridge this gap by investigating the excessive risk (generalization performance) of learning algorithms in GNNs within the generalization regime (with noise). Specifically, we extend the conventional settings from the learning theory literature to the context of GNNs and examine how graph structure influences the performance of learning algorithms such as stochastic gradient descent (SGD) and Ridge regression. Our study makes several key contributions toward understanding the interplay between graph structure and learning in GNNs. First, we derive the excess risk profiles of SGD and Ridge regression in GNNs and connect these profiles to the graph structure through spectral graph theory. With this established framework, we further explore how different graph structures (regular vs. power-law) impact the performance of these algorithms through comparative analysis. Additionally, we extend our analysis to multi-layer linear GNNs, revealing an increasing non-isotropic effect on the excess risk profile, thereby offering new insights into the over-smoothing issue in GNNs from the perspective of learning algorithms. Our empirical results align with our theoretical predictions, \emph{collectively showcasing a coupling relation among graph structure, GNNs and learning algorithms, and providing insights on GNN algorithm design and selection in practice.}
\end{abstract}

\section{Introduction}
Graph structure data is ubiquitous in the real world and many important learning problems are naturally modelled as a graph. Graph neural networks (GNNs) have emerged as a dominant class of machine learning models specifically designed for learning problems in graph-structured data. They have demonstrated considerable success in addressing a wide range of graph-related problems in various domains such as chemistry~\cite{gilmer_quantum_chem,reiser2022graph}, biology~\cite{protein,reau2023deeprank}, social networking~\cite{gnn_stochastic_train,sheng2024mspipe,gcn_diffusion,su2024pres}, and computer vision~\cite{zhu2022scene, yang2022panoptic,lu2016visual, xu2017scene, zellers2018neural}. A defining characteristic of GNNs is their use of a spatial approach through message passing on the graph structure for feature aggregation. This enables GNNs to preserve structural information and dependencies from the underlying graph structure, allowing them to be highly effective in tasks such as node regression. 

Because of their central role in numerous important applications, there is a growing body of literature on theoretical research on GNNs (see Sec.~\ref{sec:related_work} for a more detailed discussion). Existing theoretical studies on GNNs primarily concentrate on two aspects: their expressive power~\citep{gnn_power} and generalization capabilities under different measures. The expressive power of GNNs refers to their ability to distinguish between different graph structures and effectively capture node relationships and graph topology. Generalization capabilities are often explored through complexity measures, such as VC-dimension~\citep{scarselli2018vapnik} and Rademacher complexity~\citep{lv2021generalization}, or through information-theoretic measures, like mutual information and entropy. These results provide interesting insight into how powerful GNNs are as a neural model.

{\bf Existing Gap.} \emph{Nevertheless, there is a notable gap in understanding the interplay between learning algorithms (e.g., stochastic gradient descent (SGD)) and graph structure, especially when concerning their generalization performance (excessive risk) in the presence of noise (interpolation regime).}
There are few studies concerning the behaviour of learning algorithms in GNNs~\citep{awasthi2021convergence}. These studies are limited in two aspects: 1) they are only concerned with the convergence rates of the learning within the interpolation regime (noise-free), and 2) they provide only a very crude connection to the graph structure, typically represented by the maximum or minimum degree, which offers limited insight into the structure of the graph. Recognizing this gap in the theoretical understanding of GNNs, this paper investigates the interplay between graph structure and learning algorithms in the generalization regime (in the presence of noise). We extend standard settings (least squares) from learning theory literature to the case of GNN and aim to answer the following research questions:
\begin{center}
\emph{Can graph structure affect the generalization performance of learning algorithms in GNNs? If so, how does the graph structure affect the learning algorithms?}
\end{center}

\paragraph{Contribution.} 
The primary objective of this paper is to examine the interplay between learning algorithms and GNNs, particularly focusing on how graph structure impacts the generalization performance (excessive risk) of learning algorithms in the interpolation regime. The main challenges addressed in this research are establishing a connection between graph structure and learning algorithm performance, and creating a robust comparison framework. Our analysis focuses on two central learning algorithms, SGD and Ridge, aiming to understand the influence of graph structure by comparing the performance of these algorithms in different graph types (power-law vs.~regular). The contributions and results of this study are highlighted as follows:
\begin{enumerate}
    \item We extend the existing excessive risk analysis to the context of GNNs, broadening the understanding of these learning algorithms within the learning theory literature. Specifically, we have derived the excessive risk (generalization performance) profiles, including both upper and lower bounds, of learning algorithms (SGD and Ridge) in GNNs (Theorems~\ref{theorem:sgd_gnn} and ~\ref{theorem:ridge_gnn}). These profiles establish a link between the graph matrix and the excessive risk of learning algorithms in GNNs. They lay the groundwork for our subsequent investigation into the impact of graph structure on the performance of learning algorithms.
    \item Based on the established connection between the graph matrix and the excessive risk of different learning algorithms, we further utilize spectral graph theory to link the graph structure with the graph matrix. Through this connection and the results derived, we examine the interplay between graph structure and learning algorithms in GNNs. Specifically, we focus on the two types of connectivity spectra of the graph (regular vs. power-law) and demonstrate that 1) the excess risk profile of SGD is more favourable (perform better) than Ridge when the underlying graph structure is power-law and 2) the excess risk profile of Ridge is more favourable than SGD when the underlying graph structure is regular (Theorem~\ref{theorem:graph_effect}). These findings offer practical guidance for learning algorithm selection, suggesting that one should choose a learning algorithm (SGD-like vs. Ridge-like) based on the graph structure.
    \item We extend the analysis to the context of multi-layer linearized GNNs. We demonstrate that when increasing GNN layers on power-law graphs, the performance of the learning algorithms exhibits an increasingly non-isotropic effect on the excess risk profile of learning algorithms (Proposition~\ref{prop:stack}). Specifically, it becomes easier for the learning algorithms to learn the ground truth in the head eigenspace, while it becomes more challenging in the tail eigenspace. These results provide a new perspective on the well-documented over-smoothing issue in GNNs and offer a surprising insight. While it is commonly understood that adding more layers to GNNs can lead to degraded performance due to over-smoothing, our analysis suggests that increasing the number of layers can be beneficial for learning algorithms such as SGD when the ground truth is concentrated in the head eigenspace.
\end{enumerate}

The empirical results from our controlled experiments with synthetic graph models are consistent with our theoretical predictions, thus validating our analysis and findings. These results not only deepen the theoretical understanding of GNNs but also offer practical insights and guidance for selecting and designing learning algorithms for GNNs based on graph structure.

\begin{figure*}[!t]
    \centering
\includegraphics[width=1\linewidth]{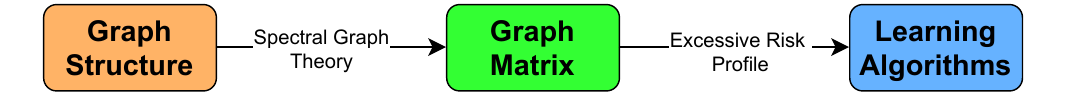}
    \caption{Illustration of how our framework connects graph structure and the performance of learning algorithms. Spectral graph theory connects graph structure and the eigenspectrum of graph matrix. The excessive risk profile of the learning algorithms connects the eigenspectrum of the graph matrix and the performance of the learning algorithms. As such, we establish a framework that can study the interplay between graph structure and learning algorithms in GNNs.}
    \label{fig:overal_framework}
\end{figure*}

\section{Related Work}\label{sec:related_work}
In this section, we provide a brief overview of the following key aspects of this paper: 1) theoretical studies of GNNs, 2) excessive risk of learning algorithms and 3) spectral graph theory. A more comprehensive discussion of related works is available in the appendix.

\paragraph{Theoretical Understanding of GNNs.} 
Due to the empirical success of GNNs, there is a growing body of theoretical studies (see ~\citep{jegelka2022theory} for a survey) focusing on the expressive power of GNNs~\citep{expressive_survey,zhang2023expressivepowergraphneural,zhang2023complete,zhang2023rethinking,gnn_power} and their generalization capabilities. The expressive power of a GNN denotes its ability to distinguish between different graph structures and capture the intricacies of node relationships and graph topology. It is often evaluated by comparing the GNN's discriminative ability against classical graph isomorphism tests, such as the Weisfeiler-Lehman test~\citep{wl_test}. Meanwhile, generalization studies of GNNs explore how well these models perform on unseen data, utilizing frameworks such as complexity measures (like VC-dimension and Rademacher complexity)~\citep{lv2021generalization, subgroup,generated_model,liao2021a}, Neural Tangent Kernel (NTK)~\citep{du2019graph} and information-theoretic approaches (such as mutual information and entropy)~\citep{stability,shiftrobust}. These studies seek to determine how factors such as network architecture and properties of the input graphs affect the model's generalization from training to testing data. Therefore, they are orthogonal to the research objectives of this paper.

There is also research on convergence analysis of GNN learning algorithms
~\citep{gnn_stochastic_train,adaptive_sample,fastgcn,awasthi2021convergence,li2018deeper,oono2020optimization}.  These studies only examine the convergence rate of the algorithms and are confined to the interpolation regime (noise-free). 
Their results only provide a crude connection between graph structure (e.g., the maximum and minimum degrees in the graph) and (the convergent rate of) learning algorithms, lacking in-depth understanding of this relationship. It remains an open question how different characteristics of graph structure (e.g., power-law vs.~regular) influence the performance of learning algorithms in the generalization regime (with noise). 
We aim to fill this gap by providing a framework for linking graph structure to the performance of learning algorithms in this setting.

\paragraph{Excessive Risk of Learning Algorithm.}
The excessive risk of different learning algorithms is a central research subject in the learning theory literature~\citep{zou2021benign,zou2023benign,tsigler2020benign,tsigler2023benign,dhillon2013risk,lakshminarayanan2018linear,jain2017markov,defossez2015averaged}. In particular, a key research question is how different learning algorithms perform under various settings~\citep{dhillon2013risk}. It remains unclear how the excessive risk of learning algorithms could manifest under graph learning and how different algorithms would perform in this setting. We expand the knowledge of the learning theory 
by providing an excessive risk analysis of SGD and Ridge regression in GNNs and comparing their performance with respect to different graph structures.

\paragraph{Spectral Graph Theory.}
Spectral graph theory is a field of mathematics that studies the properties of graphs through the analysis of eigenvalues and eigenvectors of matrices associated with the graph, such as the adjacency matrix and the Laplacian matrix~\citep{posfai2016network,spielman2012spectral,van2023graph,gera2018identifying,hammond2011wavelets,chung1997spectral}. A foundation principle and result from the spectral graph theory is that the characteristic of the graph structure is closely coupled with the eigenspectrum of the graph matrix. For example, in power-law graphs, which are characterized by a few nodes with very high degrees and many nodes with low degrees, the eigenspectrum of the graph matrix is typically broad and heavy-tailed, reflecting the heterogeneity of the degree distribution~\citep{faloutsos1999power,chung2003eigenvalues,farkas2001spectra,goh2001spectra,network_science}. In contrast, regular graphs, where all nodes have the same degree, exhibit a more even eigenspectrum, reflecting the uniformity in the degree distribution~\citep{network_science}.  
The comparison between power-law and regular graphs (visualized in Fig.~\ref{fig:power_regular_illustration} in Sec.~\ref{sec:result}) underscores the versatility of spectral graph theory in analyzing and interpreting the structural properties of different types of networks. These results serve as an important part of our toolkit for connecting the performance of the learning algorithm and graph structure.

\section{Preliminaries}
We introduce the notation, necessary background, and problem formulation in this paper.

\paragraph{Notation.} We use lowercase letters to denote scalars and use lower and uppercase boldface letters to denote vectors and matrices. For a vector $\mathbf{x} $, $\mathbf{x}[i]$ denotes the $i$-th coordinate. For two functions $f(x) \ge 0$ and $g(x) \ge 0$ defined on $x > 0$,
we write $f(x) \lesssim g(x)$ if $f(x) \le c\cdot g(x)$ for some absolute constant $c > 0$; we write $f(x) \gtrsim g(x)$ if $g(x) \lesssim f(x)$;
we write  $f(x) \eqsim g(x)$ if $f(x) \lesssim g(x)\lesssim f(x)$.
For a vector $\bm{\theta} \in \RR^d$ and a positive semidefinite matrix $\Hb \in \RR^{d\times d}$, we denote $\norm{\bm{\theta}}^2_{\Hb} :=  \bm{\theta}^\top \Hb \bm{\theta} $. We denote $[k]:=\{1,\ldots,k\}$.

\paragraph{Graph.} A (standard) graph $\cG$ with $N$ vertices is given by a two-tuple 
$(\cV,\cE)$, where $\cV = \{v_1,\ldots,v_N\}$ is the set of vertex and $\cE \subset \cV \times \cV$ is a set of edge that defines the graph structure among $\cV$. Each vertex $i \in [N]$ is associated with a feature vector $\xb_i \in \cH$ in some separable Hilbert space $\cH$. The dimensionality of $\cH$ is denoted as $d$, which can be infinite-dimensional when applicable. Let $\Xb \in \RR^{N \times d}$ denote the feature matrix of all the vertices.

\paragraph{Graph Neural Networks.}
The computation in GNNs can be viewed as message-passing along graph structure~\citep{jegelka2022theory}. At each round $l$, the new embedding $\xb_i^{(l)}$ for vertex $i$ is updated through a series of aggregate and combine steps as outlined below:
\begin{align*}
\mb_i^{(l)} &= \mathrm{AGGREGATE}(\{\xb_j^{(l-1)}
\in \mathcal{N}(i)\}),\\
\xb_i^{(l)} &= \mathrm{COMBINE}(\xb_i^{(l-1)},\mb_i^{(l)}),
\end{align*}
where $x_i^{(0)}$ is initialized as the feature vector $\xb_i$, $\mathcal{N}(i)$ represents the neighbors of vertex $i$, and $\mb_i$ denotes the aggregated representation at round $l$. Different GNNs differ in specific implementation of the AGGREGATE and COMBINE functions. Essentially, a GNN serves as an embedding function that integrates the graph structure and node features to produce an aggregated representation vector of node $v$. This representation is subsequently processed by a read-out function (e.g., a ReLU layer) to generate predictions.

\paragraph{One-round Graph Neural Network.}
In this study, we follow a similar setting to~\citep{awasthi2021convergence} and focus our main discussions on a one-round GNN which consists of an aggregation operation and a readout operation. Here we interpret the aggregation operation as a graph matrix $\Gb$ operating on the feature matrix $\Xb$.  Then, we formulate the one-round GNNs with a two-component formulation. The first component is to aggregate the feature with the given aggregation operator and a graph matrix $\Gb$. We denote the resulting space as $\cM$:
\begin{align}
    \cM = \Gb \circ \Xb.
\end{align}
$\cM$ can be viewed as the space where the aggregation representation lives. Without loss of generality, we assume $\cM$ is a subspace of $\cH$. Depending on the choice of graph matrix $\Gb$, we can recover different variants of GNNs (we provide a further discussion in this regard in Appendix~\ref{appendix:discus}). For example, when $\Gb = \hat{\Ab}$ (the normalized adjacency matrix of the graph) and $\circ$ is the simple matrix multiplication, we recover GCN~\citep{gcn}. Furthermore, for a given feature vector $\xb$ and aggregation representation $\mb$, we use 
$$\Hb := \EE_{\xb \sim \Xb}[\xb\xb^\top], \quad \Mb := \EE_{\mb \sim \cM}[\mb\mb^\top],$$
to denote the second moment of $\mb$ and $\xb$, which is the covariance matrix characterizing how the components of $\mb$ and $\xb$ vary together. Then, we use $y\in\RR$ to denote a response and it is generated with a $\mathrm{ReLU}$ readout:
\begin{align*}
    y =  \mathrm{ReLU}(\mb^\top \bm{\theta}^*) + \epsilon, \quad \mb \sim \cM ,
\end{align*}
where $\bm{\theta}^* \in \cH$ represents the unknown true model parameter and $\epsilon \in \RR$ is the model noise with zero mean. It should be noted that there can be correlations among $\mb$ manifested in the covariance of $\epsilon$. As a result, our generating model is very general. In addition, the above formulation is equivalent to the Markov-blanket formulation commonly used in other theoretical studies of GNNs ~\citep{wu2022handling}.

\paragraph{Learning Problem.} The goal of the learning problem is to estimate the true parameter $\bm{\theta}^*$ by minimizing the following:
\begin{align*}
& L(\bm{\theta}^*) = \min_{\bm{\theta}\in\cH}L(\bm{\theta}),\\
& L(\bm{\theta}):= \frac{1}{2}\EE \big[(y - \mathrm{ReLU}(\mb^\top \bm{\theta} )^2\big],
\end{align*}
where the expectation is taken over the randomness of $y$ and $\mb$. The generalization performance of an estimated $\bm{\theta}$ found by a learning algorithm (e.g., SGD) is evaluated based on the {excessive risk}:
$$\Delta(\bm{\theta}):=L(\bm{\theta}) - L(\bm{\theta}^*).$$
The excessive risk $\Delta(\bm{\theta})$ can be decomposed into bias and variance as follows:
\begin{align*}
    \Delta(\bm{\theta})  = \underbrace{\| \EE[\bm{\theta}] - \bm{\theta}^*\|_{\Mb}^2}_{\mathrm{bias}} + \underbrace{\| \bm{\theta} - \EE[\bm{\theta}]\|_{\Mb}^2}_{\mathrm{variance}}.
\end{align*}
The decomposition above shows that the excess risk profile of $\Delta$ gives a comprehensive characterization of the learning algorithm, showcasing its convergence (governed by bias) and its robustness to noise (governed by variance). Following the prior literature \citep{bartlett2020benign, zou2021benign,jain2017markov,bach2013non,berthier2020tight}, we make the following assumptions on the data feature $\Hb$ and $\Gb$.

\begin{assumption}\label{assump:data_distribution}
$\Hb$ is positive definite (PD) and its trace $\tr(\Hb)$ is finite.
\end{assumption}

\begin{assumption}\label{assump:fourth_moment}
 There exists positive constant $\alpha$ such that for every $\xb$ and any positive semidefinite matrix $\Ab$,  it holds that: $\EE[\xb \xb^\top \Ab \xb \xb^\top] - \Hb \Ab \Hb \preceq \alpha \tr(\Ab \Hb )  \cdot  \Hb.$
\end{assumption}

\begin{assumption}\label{assump:model_noise} There exists a positive constant $\sigma$ such that:  $\EE_{\xb,\epsilon}[\epsilon^2 \xb \xb^\top] \leq \sigma^2 \Hb.$
\end{assumption}

\begin{assumption}\label{assump:graph_matrix} 
$\Gb$ is PD with respect to $\xb$ and has a bounded norm.
\end{assumption}
This additional assumption on the graph matrix generally holds for common choices of graph matrices, such as the Laplacian and Adjacency matrices. To relate the graph structure to the excessive risk of the learning algorithm, we are interested in the connection between $\bm{\theta}^*$ and $\Mb$. To analyze their relation, we introduce the following notation:
\begin{align*}
  {\Mb}_{0:k} := \textstyle{\sum_{i=1}^k}\mu_i\vb_i\vb_i^\top, \quad
  {\Mb}_{k:\infty} := \textstyle{\sum_{i> k}}\mu_i\vb_i\vb_i^\top,
\end{align*}
where $\{\mu_i\}_{i=1}^\infty$ are the eigenvalues of $\Mb$ sorted in
non-increasing order and $\vb_i$'s are the corresponding eigenvectors. 
Then we define:
\begin{align*}
    \|\bm{\theta}\|^2_{\Hb_{0:k}^{-1}}=
\sum_{ i\le
  k}\frac{(\vb_i^\top\bm{\theta})^2}{\mu_i}, \quad
\|\bm{\theta}\|^2_{{\Hb}_{k:\infty}}=\sum_{i> k}\mu_i (\vb_i^\top\bm{\theta})^2.
\end{align*}

\section{Main Results}\label{sec:result}
We next present the main results of this paper,  
focusing on the description and implication of the results. Detailed proof of the results can be found in the supplementary material. 

\paragraph{Result Overview.}  
We first derive the excessive risk profile (upper bound and lower bound) of learning algorithms (SGD and Ridge) in GNNs. These excessive profiles provide a connection between the graph matrix and the excessive risk of learning algorithms in GNNs. Then we use spectral graph theory to further establish the connection between the performance of the learning algorithm and graph structure. In particular, we investigate how the connectivity structure of the graphs (power-law vs.~regular graph) affects the performance of different learning algorithms in GNNs. Furthermore, we extend the analysis to multi-layer linearized GNNs and yield a novel perspective on the over-smoothing issue of GNNs. This analysis reveals that the over-smoothing issue may be attributed to increasing excessive risk in learning algorithms due to a misalignment between the ground truth and the eigenspectrum of the aggregation space. Interestingly, increasing the number of GNN layers can be advantageous when the ground truth aligns with the headspace of the aggregation space. A visualization of the overall analytical framework is provided in Fig.~\ref{fig:overal_framework}.

\subsection{Excessive Risk Profile of Learning Algorithms}
We derive the excess risk profile 
that expands the current knowledge of learning theory and GNN, and forms a foundation for our further investigations on the effect of graph structure.

\paragraph{ SGD.} 
We consider SGD with a constant step size and tail-averaging~\citep{bach2013non,jain2017markov,jain2018parallelizing}. At the $t$-th iteration, a fresh example is sampled from the aggregation space ($\mb_t \sim \Gb \circ \Xb$) to perform the update:
\begin{align*}
    {\bm{\theta}}_{t+1}  &= \bm{\theta}_t - \gamma \cdot    \nabla l(\bm{\theta}_t;\mb_t,y_t), \\
     & = {\bm{\theta}}_{t} - \gamma \cdot \left (\mathrm{ReLU}(\mb_t^\top \bm{\theta}_{t-1}) - y_t \right ) \cdot  \mb_t,
\end{align*}
where $\gamma>0$ is a constant stepsize (also referred to as the learning rate). After $N$ iterations, which corresponds to the number of samples observed, SGD computes the final estimator as the tail-averaged iterates:
\begin{align*}
    \bm{\theta}_{\mathrm{sgd}}(N,\Gb;\gamma) := \frac{2}{N}\textstyle{\sum_{t=N/2}^{N-1}}\bm{\theta}_t.
\end{align*}
Here we focus on tail-average SGD~\footnote {we provide a more in-depth discussion of tail-average SGD in the supplementary material.},
 as it has been shown to improve the convergence properties and robustness of last-iterate SGD, particularly in the presence of noise and heavy-tailed data distributions~\citep{zou2021benign}. The following theorem gives a description of the excessive risk profile of SGD under our setting. 

\begin{theorem}[Excessive Risk Profile of SGD]\label{theorem:sgd_gnn}
    Consider SGD with tail-averaging with initialization $\bm{\theta}_0=\bm{0}$. Suppose Assumptions~\ref{assump:data_distribution}, ~\ref{assump:fourth_moment}, ~\ref{assump:model_noise} and ~\ref{assump:graph_matrix} hold and stepsize satisfies $\gamma\leq 1/\tr( \Mb)$. 
Then the excessive risk of SGD can be upper-bounded as follows:
\begin{align*}
& \Delta (\bm{\theta}_{\mathrm{sgd}}(N,\Gb;\gamma))  \lesssim  \sgdbias + \sgdvar, 
\end{align*}
\begin{align*}
    & \sgdbias  = \\
    & \quad \quad \frac{1}{\gamma^2N^2}\big\|\exp\big(-N \gamma \Mb \big)\bm{\theta}^*\big\|_{\Mb_{0:k_1}^{-1}}^2 +  \big\|\bm{\theta}^*\big\|_{\Mb_{k_1:\infty}}^2, \\
 & \sgdvar  = \\
 &\quad \quad \frac{\sigma^2 + \|\bm{\theta}^*\|_\Mb^2}{N}\cdot\bigg(k_2 + N^2\gamma^2 \sum_{i> k_2}\mu_i^2\bigg),
\end{align*}
where $k_1,k_2\in[d]$ are arbitrary.

Suppose the stepsize satisfies $\gamma \leq 1/\mu_1$.
Then the excess risk can be lower-bounded as follows:
\begin{align*}
\Delta (\bm{\theta}_{\mathrm{sgd}}(N,\Gb;\gamma))  \gtrsim  \sgdbias + \sgdvar,
\end{align*}
\begin{align*}
 & \sgdbias = \\
 & \quad \quad \quad \frac{1}{\gamma^2N^2}\big\|\exp\big(-N\gamma \Mb \big)\bm{\theta}^*\big\|_{\Mb_{0:k^*}^{-1}}^2 + \big\|\bm{\theta}^*\big\|_{\Mb_{k^*:\infty}}^2, \\
 & \sgdvar  = \\
 &\quad \quad \quad  \frac{\sigma^2 }{N}\cdot\bigg(k^* + N^2\gamma^2 \sum_{i> k^*}\mu_i^2\bigg) +   \|\bm{\theta}^*\|_\Mb^2 \frac{\gamma}{\mu_1} \sum_{i > k^{\dagger}} \mu_i^2,
\end{align*}
where $k^* = \max \{k:\mu_k \geq 1/(N\gamma)\}$, and $k^{\dagger} = \max \{k: \mu_k \geq 2/(3N\gamma)\}$.
\end{theorem}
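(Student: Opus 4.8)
The plan is to reduce the ReLU-readout GNN problem to the well-studied linear least-squares setting and then invoke (a GNN-adapted version of) the sharp SGD bias–variance bounds from the benign-overfitting literature (e.g.\ \citep{zou2021benign,jain2017markov,bach2013non}), with the covariance matrix $\Mb$ of the aggregated features playing the role of the data covariance. The first step is to linearize the readout: under Assumption~\ref{assump:graph_matrix} ($\Gb$ PD w.r.t.\ $\xb$) together with Assumption~\ref{assump:data_distribution}, the aggregation map is injective on the support of $\Xb$, so on the relevant event $\mathrm{ReLU}(\mb^\top\bm{\theta})$ and $\mathrm{ReLU}(\mb^\top\bm{\theta}^*)$ can be handled by restricting attention to the active half-space; I would argue that the SGD gradient $(\mathrm{ReLU}(\mb_t^\top\bm{\theta}_{t-1})-y_t)\mb_t$ agrees (up to controllable error) with the plain least-squares gradient $(\mb_t^\top\bm{\theta}_{t-1}-y_t)\mb_t$, so the iterates inherit the same recursion in terms of $\Mb$. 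Then translate Assumptions~\ref{assump:fourth_moment} and \ref{assump:model_noise} from $\Hb$ to $\Mb$: since $\mb=\Gb\xb$ and $\Gb$ has bounded norm, the fourth-moment condition $\EE[\mb\mb^\top\Ab\mb\mb^\top]-\Mb\Ab\Mb\preceq \alpha'\tr(\Ab\Mb)\Mb$ and the noise condition $\EE[\epsilon^2\mb\mb^\top]\preceq\sigma^2\Mb$ both carry over with constants depending on $\|\Gb\|$, which is what lets the stepsize condition $\gamma\le 1/\tr(\Mb)$ and $\gamma\le 1/\mu_1$ enter.

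Next I would set up the standard bias–variance decomposition $\Delta(\bm{\theta}_{\mathrm{sgd}})\lesssim \mathrm{bias}+\mathrm{variance}$ exactly as in the preliminaries, and analyze each term via the operator recursion for tail-averaged constant-step SGD. For the \textbf{bias} term, I would track the noise-free iterate $\bm{\theta}_t^{\mathrm{bias}}=(\Ib-\gamma\mb_t\mb_t^\top)\bm{\theta}_{t-1}^{\mathrm{bias}}$; in expectation this contracts like $(\Ib-\gamma\Mb)^t$, and the tail-averaging turns the last-iterate bound $\|(\Ib-\gamma\Mb)^{N/2}\bm{\theta}^*\|_{\Mb}^2$ into the stated form with the $\exp(-N\gamma\Mb)$ factor and the $1/(\gamma^2N^2)$ prefactor on the head subspace $\Mb_{0:k_1}$, while the tail subspace $\Mb_{k_1:\infty}$ contributes the un-contracted $\|\bm{\theta}^*\|_{\Mb_{k_1:\infty}}^2$ (one uses $\|(\Ib-\gamma\mu)^{N/2}\|\le 1$ there and $\le \tfrac{1}{N\gamma\mu}$ on the head, after averaging). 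The split at arbitrary $k_1$ comes from optimizing this trade-off pointwise per eigendirection. For the \textbf{variance} term, the fluctuation process is driven by the effective noise level $\sigma^2+\|\bm{\theta}^*\|_{\Mb}^2$ (the $\|\bm{\theta}^*\|_{\Mb}^2$ piece is the multiplicative/model-misspecification noise induced by sampling fresh $\mb_t$, controlled through Assumption~\ref{assump:fourth_moment}); bounding $\tr\!\big(\Mb\,\mathrm{Cov}(\bm{\theta}_{\mathrm{sgd}})\big)$ with the standard two-regime estimate gives $\lesssim \tfrac{1}{N}(k_2+N^2\gamma^2\sum_{i>k_2}\mu_i^2)$, again with $k_2$ free. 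Assembling these yields the upper bound.

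For the \textbf{lower bound} I would run the matching argument: since $\Mb$ is PD, the bias recursion in expectation is \emph{exactly} a product of $(\Ib-\gamma\mb_t\mb_t^\top)$ whose expectation dominates $(\Ib-\gamma\Mb)$ in the relevant sense, so one gets a two-sided estimate on each eigendirection; choosing the specific cutoff $k^*=\max\{k:\mu_k\ge 1/(N\gamma)\}$ is precisely the point where the head contraction $\tfrac{1}{(N\gamma\mu_k)^2}$ and the tail $\|\bm{\theta}^*\|_{\Mb_{k^*:\infty}}^2$ balance, so the upper-bound expression at $k_1=k^*$ is also a valid lower bound. The extra term $\|\bm{\theta}^*\|_{\Mb}^2\,\tfrac{\gamma}{\mu_1}\sum_{i>k^{\dagger}}\mu_i^2$ in the variance lower bound reflects the irreducible fluctuation injected by the multiplicative noise in the tail eigenspace and is extracted by lower-bounding the covariance recursion on directions with $\mu_k\ge 2/(3N\gamma)$ — the constant $2/3$ is the threshold that makes the geometric series in the covariance recursion genuinely bounded below. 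The main obstacle I anticipate is the linearization step: making rigorous that the ReLU nonlinearity does not change the SGD dynamics beyond lower-order terms — in particular showing the ``active set'' is stable enough along the trajectory that the effective covariance seen by the iterates is still $\Mb$ (not some data-dependent truncation of it), and that the sign mismatches between $\mathrm{ReLU}(\mb^\top\bm{\theta}_{t-1})$ and $\mathrm{ReLU}(\mb^\top\bm{\theta}^*)$ contribute only to the bias at the stated rate. If that reduction is clean, everything else is a transcription of the known linear-regression SGD bounds with $\Hb\rightsquigarrow\Mb$ and the noise level $\sigma^2\rightsquigarrow\sigma^2+\|\bm{\theta}^*\|_{\Mb}^2$.
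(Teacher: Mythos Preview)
Your overall architecture is exactly the paper's: transfer Assumptions~\ref{assump:data_distribution}--\ref{assump:model_noise} from $\Hb$ to $\Mb$ via $\mb=\Gb\xb$ (the paper proves this as separate lemmas, getting $\EE[\mb\mb^\top\Ab\mb\mb^\top]-\Mb\Ab\Mb\preceq \alpha'\tr(\Ab\Mb)\Mb$, $\EE[\epsilon^2\mb\mb^\top]\preceq\sigma^2\Mb$, and $\tr(\Mb)<\infty$ exactly as you outline), then run the bias--variance decomposition of \citep{jain2017markov} and invoke Theorems~5.1/5.2 of \citep{zou2021benign} with $\Mb$ in place of the data covariance. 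Your reading of the effective noise $\sigma^2+\|\bm{\theta}^*\|_{\Mb}^2$, the role of $k^*$ and $k^{\dagger}$, and the head/tail splitting is all consistent with what the paper does.

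The one substantive difference is how the ReLU readout is handled, and this is precisely the point you flagged as the main obstacle. You propose a \emph{trajectory-level} linearization: argue that along the SGD path the active set is stable enough that $(\mathrm{ReLU}(\mb_t^\top\bm{\theta}_{t-1})-y_t)\mb_t$ agrees with the linear gradient up to lower-order terms. The paper does \emph{not} do this. Instead it invokes a \emph{landscape-level} sandwich from \citep{wu2023finite} (their Lemma~4.2), namely
\[
0.25\,\|\bm{\theta}-\bm{\theta}^*\|_{\Mb}^2 \;\le\; \Delta(\bm{\theta}) \;\le\; \|\bm{\theta}-\bm{\theta}^*\|_{\Mb}^2,
\]
valid for every $\bm{\theta}$. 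This says the ReLU excess risk is globally equivalent (up to absolute constants) to the quadratic, so both the upper and lower risk bounds reduce to controlling $\|\bm{\theta}_{\mathrm{sgd}}-\bm{\theta}^*\|_{\Mb}^2$, and the paper then treats the iterates as if they came from the purely linear recursion on $\cM$ and applies \citep{zou2021benign} verbatim. In other words, the paper sidesteps the active-set/sign-mismatch analysis entirely by citing an off-the-shelf landscape result; your route would require genuinely new trajectory control and is strictly harder (and potentially fragile without extra distributional assumptions). If you replace your linearization step with the landscape lemma, the rest of your plan goes through essentially unchanged.
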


Theorem~\ref{theorem:sgd_gnn} establishes the excessive risk upper and lower bounds for SGD in GNNs, offering a detailed characterization of each component (bias and variance) that constitutes the excessive risk profile of SGD. Several observations are pertinent here. First, Theorem~\ref{theorem:sgd_gnn} demonstrates that the excessive risk of SGD (for both upper and lower bounds) is intimately linked to the aggregation matrix $\Mb$, which is derived from the graph matrix $\Gb$. This establishes a clear relationship between the performance of SGD and the graph matrix. Second, Theorem~\ref{theorem:sgd_gnn} reveals that the headspace and tail space of the eigenspectrum of $\Mb$ differently affect both the bias and variance. Specifically, the leading eigenspectrum shows an exponential decay in bias, indicating that SGD can effectively and efficiently approximate the ground truth associated with these directions. Third, while the decomposition of the eigenspectrum in the upper bound of SGD applies for any arbitrary $k_1, k_2 \in [d]$, the decomposition in the lower bound of SGD is more stringent and depends on the sample complexity and learning rate.

\paragraph{ Ridge Regression.} Given graph matrix $\Gb$ and feature matrix $\Xb$, Ridge regression provides an estimator for the true parameter by solving the following optimization problem:
\begin{align*}
\bm{\theta}_{\mathrm{ridge}}(N,\Gb;\lambda) := \arg\min_{\bm{\theta}\in\cH} \|(\Gb \circ \Xb)^\top \bm{\theta} - \yb\|_2^2 + \lambda\|\bm{\theta}\|^2,
\end{align*}
where $\lambda\ge 0$ is the regularization parameter. When $\lambda = 0$, the Ridge estimator reduces to the ordinary least square estimator~\citep{hastie2009elements}. In the following theorem, we give a characterization of the excessive risk profile of Ridge in the proposed setting.

\begin{theorem}[Excessive Risk Profile for Ridge]\label{theorem:ridge_gnn}    
Consider ridge regression with parameter $\lambda > 0$. Suppose Assumptions~\ref{assump:data_distribution}, ~\ref{assump:fourth_moment}, ~\ref{assump:model_noise} and ~\ref{assump:graph_matrix} hold.
For a constant $\hat{\lambda}$ depending on $\lambda$ and $N$,  Ridge has the following excessive risk upper bound for an arbitrary $k \in [d]$:
\begin{align*}
    & \Delta(\bm{\theta}_{\mathrm{ridge}}(N,\Gb; \lambda))  \lesssim \underbrace{\frac{\hat{\lambda}^2}{ N^2}\cdot\big\|\hat{\bm{\theta}}^*\big\|_{\Mb_{0:k}^{-1}}^2 + \|\hat{\bm{\theta}}^*\big\|_{\Mb_{k:\infty}}^2}_{\ridgebias} \\
    &\quad +   \underbrace{\sigma^2\cdot\bigg(\frac{k}{N}+\frac{N}{\hat{\lambda}^2}\sum_{i>k}\mu_i^2\bigg)}_{\ridgevar}.
\end{align*}
Suppose $k^* = \min\{k: N\mu_k \lesssim \hat{\lambda}\}$. Then we have the following excessive risk lower bound for Ridge:
\begin{align*}
    & \Delta(\bm{\theta}_{\mathrm{ridge}}(N,\Gb; \lambda))  \gtrsim \underbrace{\frac{\hat{\lambda}^2}{ N^2}\cdot\big\|\hat{\bm{\theta}}^*\big\|_{\Mb_{0:k^*}^{-1}}^2 + \|\hat{\bm{\theta}}^*\big\|_{\Mb_{k^*:\infty}}^2}_{\ridgebias} \\
    &\quad +   \underbrace{\sigma^2\cdot\bigg(\frac{k^*}{N}+\frac{N}{\hat{\lambda}^2}\sum_{i>k^*}\mu_i^2\bigg)}_{\ridgevar}.
\end{align*}
\end{theorem}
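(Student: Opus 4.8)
The plan is to reduce the GNN Ridge problem to a standard fixed-design (or ``Gaussian sequence''-type) Ridge regression over the aggregation space $\cM = \Gb\circ\Xb$, and then invoke the known excess-risk characterizations for Ridge regression in the presence of benign overfitting~\citep{tsigler2023benign, bartlett2020benign}. The first step is to handle the $\mathrm{ReLU}$ readout: since $\Gb$ is PD with respect to $\xb$ (Assumption~\ref{assump:graph_matrix}) and we analyze the excess risk $\Delta(\bm{\theta}) = L(\bm{\theta}) - L(\bm{\theta}^*)$ through its bias--variance decomposition against $\Mb$, I would argue that the optimization reduces to a linear least-squares problem on the event where the relevant inner products $\mb^\top\bm{\theta}$ have consistent sign, or alternatively absorb the $\mathrm{ReLU}$ into an effective linear model whose second moment is still comparable to $\Mb$ up to constants; this is where $\hat{\bm{\theta}}^*$ (the ``effective'' ground truth, some rescaling/projection of $\bm{\theta}^*$) enters, and where the constant $\hat{\lambda}$ depending on $\lambda$ and $N$ gets defined (it is the regularization level after accounting for the $1/N$ normalization of the empirical second moment, i.e.\ $\hat{\lambda}\eqsim\lambda/N$ or similar).

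Next I would set up the bias--variance split explicitly. Writing the Ridge estimator in closed form, $\bm{\theta}_{\mathrm{ridge}} = (\widehat{\Mb} + \hat\lambda I)^{-1}\widehat{\Mb}\,\hat{\bm{\theta}}^* + (\widehat{\Mb}+\hat\lambda I)^{-1}(\text{noise term})$, where $\widehat{\Mb}$ is the empirical second moment of the aggregated features, the bias is $\|(\widehat{\Mb}+\hat\lambda I)^{-1}\hat\lambda\,\hat{\bm{\theta}}^*\|_\Mb^2$ and the variance is the noise term measured in $\Mb$-norm. The core estimates are then: (i) concentration of $\widehat{\Mb}$ around $\Mb$, using Assumption~\ref{assump:fourth_moment} (the fourth-moment/Bernstein-type bound) to control the head eigendirections and a crude trace bound for the tail; (ii) splitting at an arbitrary index $k$ (for the upper bound) to get $\ridgebias \lesssim \frac{\hat\lambda^2}{N^2}\|\hat{\bm{\theta}}^*\|_{\Mb_{0:k}^{-1}}^2 + \|\hat{\bm{\theta}}^*\|_{\Mb_{k:\infty}}^2$ — the first term from directions where $\mu_i \gtrsim \hat\lambda/N$ so the resolvent acts like $\hat\lambda/(N\mu_i)$, the second from directions where the resolvent is $O(1)$ and the bias is essentially the unshrunk signal; (iii) the variance $\ridgevar \lesssim \sigma^2(k/N + \frac{N}{\hat\lambda^2}\sum_{i>k}\mu_i^2)$ from Assumption~\ref{assump:model_noise}, with the first term the ``effective dimension'' contribution of the head and the second the tail leakage. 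For the lower bound I would choose the specific $k^* = \min\{k : N\mu_k \lesssim \hat\lambda\}$, which is exactly the crossover index, and show each upper-bound step is tight at that scale (resolvent lower bounds, anti-concentration of the noise quadratic form), matching the expressions term-by-term.

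The main obstacle I expect is the $\mathrm{ReLU}$ nonlinearity in the readout: unlike the purely linear least-squares settings of the cited learning-theory papers, the population risk $L(\bm{\theta}) = \tfrac12\EE[(y - \mathrm{ReLU}(\mb^\top\bm{\theta}))^2]$ is not quadratic in $\bm{\theta}$, so neither the closed-form Ridge solution nor the clean bias--variance decomposition against $\Mb$ holds verbatim. The resolution presumably relies on showing that, under Assumptions~\ref{assump:data_distribution}--\ref{assump:graph_matrix}, the excess risk is sandwiched between constants times the corresponding linear excess risk (so that the $\lesssim / \gtrsim$ notation hides the distortion), or on restricting attention to a regime where the sign pattern is stable — and carrying the resulting ``effective'' quantities $\hat{\bm{\theta}}^*$, $\hat\lambda$ through cleanly. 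A secondary technical point is making the arbitrary-$k$ decomposition in the upper bound genuinely hold for \emph{all} $k\in[d]$ simultaneously (as opposed to a single optimized $k$), which requires the concentration arguments to be uniform in the split index; this is standard in the benign-overfitting literature but needs Assumption~\ref{assump:fourth_moment} applied carefully to the tail block $\Mb_{k:\infty}$.
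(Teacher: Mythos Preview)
Your proposal is correct and follows essentially the same route as the paper: reduce to a linear least-squares problem over the aggregation space $\cM$ (transferring Assumptions~\ref{assump:data_distribution}--\ref{assump:model_noise} from $\Hb$ to $\Mb$ via the boundedness and PD-ness of $\Gb$), establish the closed-form bias--variance decomposition of the Ridge estimator against $\Mb$, and then invoke the Tsigler--Bartlett benign-overfitting bounds~\citep{tsigler2023benign} (together with their extension in~\citep{zou2021benefits}) to obtain the head/tail split at arbitrary $k$ for the upper bound and at the crossover $k^*$ for the lower bound. Your identification of the $\mathrm{ReLU}$ readout as the main obstacle, and your guessed resolution via a two-sided sandwich $\Delta(\bm{\theta}) \eqsim \|\bm{\theta}-\bm{\theta}^*\|_{\Mb}^2$, is exactly what the paper does: it imports Lemma~4.2 of~\citep{wu2023finite}, which states $0.25\,\|\bm{\theta}-\bm{\theta}^*\|_{\Mb}^2 \le \Delta(\bm{\theta}) \le \|\bm{\theta}-\bm{\theta}^*\|_{\Mb}^2$, so the nonconvex $\mathrm{ReLU}$ landscape is quadratic up to constants and the linear analysis carries through verbatim under $\lesssim/\gtrsim$.

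Two minor calibrations relative to the paper. First, the paper does not actually redo the concentration of $\widehat{\Mb}$ around $\Mb$ or the uniform-in-$k$ argument you outline; it simply cites Lemmas~2~\&~3 of~\citep{tsigler2023benign} as black boxes once the assumptions on $\Mb$ are verified, so your plan is more detailed than what the paper executes. Second, the effective regularization $\hat{\lambda}$ is not $\lambda/N$ but rather $\hat{\lambda} = \lambda + \sum_{i>k^*}\mu_i$ (the implicit tail regularization from the benign-overfitting framework), and $\hat{\bm{\theta}}^*$ in the theorem statement is just $\bm{\theta}^*$ in the proof lemmas---there is no genuine rescaling or projection induced by the $\mathrm{ReLU}$, since the sandwich lemma handles that entirely at the level of multiplicative constants.
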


Theorem~\ref{theorem:ridge_gnn}
 establishes an excessive risk profile (upper and lower bound) for Ridge regression in GNNs. Mirroring Theorem~\ref{theorem:sgd_gnn}, Theorem~\ref{theorem:ridge_gnn} demonstrates that the excessive risk of GNNs can be 
 divided into two components, bias and variance, highlighting a structural similarity that is crucial for subsequent comparisons between the two learning algorithms. 
 It is noted that the upper bound and lower bound of the excessive risk for Ridge generally aligns better than those for SGD, consistent with existing studies~\citep{tsigler2020benign,tsigler2023benign}. By selecting 
$k$ equal to $k^*$, we achieve a unified bound for the risk associated with Ridge, differing only by a constant factor.

\subsection{Graph Structure and Learning Performance}
Next, we deepen our understanding of the connection between graph structure and the excess risk of learning algorithms using spectral graph theory. Spectral graph theory facilitates the linkage of graph structure to the eigenspectrum of the graph matrix. Specifically, we focus our analysis on the power-law graph, which is characterized by an exponential decay in the degree sequence, and the regular graph, where each vertex has a uniform degree.

\paragraph{Power-law vs.~Regular Graph.} 
As discussed in Sec.~\ref{sec:related_work}, the graph matrix of a power-law graph typically exhibits concentrated eigenvalues (i.e., a large eigenvalue associated with a few eigenvectors), resulting in a fast-decay eigenspectrum. In contrast, the graph matrix of regular graphs typically shows a uniform distribution of eigenvalues (i.e., eigenvalues of roughly the same magnitude across all eigenvectors), which leads to a slow-decay eigenspectrum. A visualization of these characteristics is given in Fig.~\ref{fig:power_regular_illustration}.

For precise analysis, we consider graphs with eigenspectrum from the following model:
\begin{align}\label{eq:graph_model}
    \mu_i(\Gb) = 1/i^{\beta},
\end{align}
where $\mu_i(\Gb)$ represents the $i$-th eigenvalue of the graph matrix $\Gb$ and $\beta >0$ is the constant that controls the rate of decay in the eigenvalues. A larger $\beta$ results in a faster decay of the eigenspectrum, similar to that seen in power-law graphs, while a smaller $\beta$ leads to a more uniform eigenspectrum, akin to that of regular graphs. Additionally, we assume that the underlying ground truth model parameters and the feature space align with the eigenspectrum of the graph matrix, reflecting typical scenarios in graph learning problems~\citep{fortunato2010community}. 
Based on this model, we present results that compare the performance of SGD and Ridge in power-law and regular graphs, thereby characterizing how graph structure influences the performance of the learning algorithm.

\begin{theorem}[Effect of Graph Structure]\label{theorem:graph_effect} Suppose Assumptions~\ref{assump:data_distribution}, ~\ref{assump:fourth_moment}, ~\ref{assump:model_noise} and ~\ref{assump:graph_matrix} hold. Consider a power-law graph $\mathcal{G}_{\mathrm{p}}$ with graph matrix $\Gb_{\mathrm{p}}$ whose eigen-spectrum is characterized by Eq.~\ref{eq:graph_model} with a large enough $\beta$. Then for every $\lambda$ for ridge regression, there exists a choice for $\gamma^*$ for SGD such that for sufficiently large $N$, we have
    \begin{align*}
        \Delta(\bm{\theta}_{\mathrm{sgd}}(N,\Gb_{\mathrm{p}};\gamma^*)) \lesssim \Delta(\bm{\theta}_{\mathrm{ridge}}(N,\Gb_{\mathrm{p}}; \lambda)).
    \end{align*}
    On the other hand, consider a regular graph $\mathcal{G}_{\mathrm{r}}$ with graph matrix $\Gb_{\mathrm{r}}$ whose eigen-spectrum is characterized by Eq.~\ref{eq:graph_model} with a small $\beta$. Then for every choice of $\gamma$ for SGD, there exists a $\lambda^*$ such that, 
    \begin{align*}
        \Delta(\bm{\theta}_{\mathrm{ridge}}(N,\Gb_{\mathrm{r}};\lambda^*)) \lesssim \Delta(\bm{\theta}_{\mathrm{sgd}}(N,\Gb_{\mathrm{r}};\gamma)).
    \end{align*}
\end{theorem}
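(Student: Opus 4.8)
The plan is to instantiate the general upper and lower bounds from Theorems~\ref{theorem:sgd_gnn} and~\ref{theorem:ridge_gnn} with the specific eigenspectrum $\mu_i(\Gb) = i^{-\beta}$ of Eq.~\ref{eq:graph_model}, and then reduce the comparison to a competition between the bias/variance terms as $\beta$ is made large (power-law) or small (regular). The crucial reduction is that, because the eigenspectra of $\Hb$ and $\Gb$ (and hence $\Mb$) are assumed aligned with $\bm{\theta}^*$, all the quantities $\|\bm{\theta}^*\|^2_{\Mb_{0:k}^{-1}}$, $\|\bm{\theta}^*\|^2_{\Mb_{k:\infty}}$, $\sum_{i>k}\mu_i^2$, etc., become explicit power sums in $i^{-\beta}$ that can be estimated by integral comparison: $\sum_{i>k} i^{-2\beta} \eqsim k^{1-2\beta}/(2\beta-1)$ for $\beta > 1/2$, and similarly for the weighted sums. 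I would first record these as a short lemma (deferred to the appendix) so the main argument is just bookkeeping.

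For the power-law direction, I would choose the effective cutoffs optimally. The key phenomenon is that when $\beta$ is large the eigenspectrum of $\Mb$ decays so fast that only a constant number of eigenvalues are ``large'' relative to $1/(N\gamma)$; that is, $k^* = \max\{k : \mu_k \ge 1/(N\gamma)\}$ grows only like $(N\gamma)^{1/\beta}$, which is $o(N^\epsilon)$ for any $\epsilon$ once $\beta$ is large enough. Then I would pick $\gamma^*$ so that the SGD bias term enjoys the exponential factor $\exp(-N\gamma\Mb)$ on the head directions — killing the head bias essentially completely — while the tail bias $\|\bm{\theta}^*\|^2_{\Mb_{k^*:\infty}}$ and the tail variance $\tfrac{\sigma^2}{N}(k^* + N^2\gamma^2\sum_{i>k^*}\mu_i^2)$ are both negligible because the tail sums are dominated by $k^{*\,(1-2\beta)}$, which vanishes fast. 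On the Ridge side, its bias term carries only the polynomial prefactor $\hat\lambda^2/N^2$ rather than an exponential decay, so for any fixed $\lambda$ (hence any $\hat\lambda$) the Ridge bias on the head directions is strictly larger, up to constants, than the (super-polynomially small) SGD head bias; matching the tail terms term-by-term via the same power-sum estimates then gives $\Delta(\bm{\theta}_{\mathrm{sgd}}) \lesssim \Delta(\bm{\theta}_{\mathrm{ridge}})$ for $N$ large. The role of ``large enough $\beta$'' is precisely to make $k^*$ grow slowly enough that the exponential gain on the head is not offset by a large head dimension.

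For the regular direction, the situation reverses: with small $\beta$ the eigenvalues $i^{-\beta}$ decay slowly, so many directions are ``effective'' and the variance-type sums $\sum_{i>k} i^{-2\beta}$ decay only slowly in $k$ (indeed for $\beta \le 1/2$ they are not even summable and one must instead truncate at the ambient dimension $d$, using $\tr(\Hb) < \infty$ from Assumption~\ref{assump:data_distribution} to keep things finite, or carry the sum up to $d$ explicitly). The SGD variance lower bound contains the extra term $\|\bm{\theta}^*\|^2_\Mb \tfrac{\gamma}{\mu_1}\sum_{i>k^\dagger}\mu_i^2$, which for a slowly decaying spectrum is $\Theta$ of a genuinely large quantity and cannot be removed by tuning $\gamma$ — shrinking $\gamma$ to reduce this term simultaneously shrinks $k^*$, which inflates the bias term $\|\bm{\theta}^*\|^2_{\Mb_{k^*:\infty}}$. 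Ridge, by contrast, has a cleaner profile (its upper and lower bounds match up to constants, as noted after Theorem~\ref{theorem:ridge_gnn}), and by choosing $\lambda^*$ to balance $\hat\lambda^2/N^2 \cdot \|\hat\bm{\theta}^*\|^2_{\Mb_{0:k}^{-1}}$ against $\sigma^2 N/\hat\lambda^2 \cdot \sum_{i>k}\mu_i^2$ at the optimal cutoff, the Ridge excess risk achieves the ``right'' bias–variance trade-off with no analogue of the problematic SGD cross term; hence $\Delta(\bm{\theta}_{\mathrm{ridge}}(N,\Gb_{\mathrm{r}};\lambda^*)) \lesssim \Delta(\bm{\theta}_{\mathrm{sgd}}(N,\Gb_{\mathrm{r}};\gamma))$ for every $\gamma$.

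The main obstacle I anticipate is making the two ``there exists a choice of stepsize/regularization'' statements genuinely uniform. For the power-law claim the quantifier order is ``for every $\lambda$, there exists $\gamma^*$'', so $\gamma^*$ may depend on $\lambda$ and $N$; I must exhibit an explicit admissible $\gamma^*$ (respecting $\gamma \le 1/\tr(\Mb)$) — presumably of the form $\gamma^* \eqsim \hat\lambda/(N\,\mu_1)$ or tuned so the SGD effective cutoff matches Ridge's $k^*$ — and then verify the inequality term-by-term, which is where the integral estimates and the ``$\beta$ large enough'' threshold must be pinned down quantitatively (I expect a condition like $\beta > 1$ or $\beta$ exceeding some explicit function of the desired slack). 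For the regular claim the order is ``for every $\gamma$, there exists $\lambda^*$'', so $\lambda^*$ may depend on $\gamma$; here the delicate point is lower-bounding the SGD risk uniformly in $\gamma$ — I would split into the regime where $\gamma$ is large (so the cross term $\tfrac{\gamma}{\mu_1}\sum_{i>k^\dagger}\mu_i^2$ dominates) versus $\gamma$ small (so the bias term $\|\bm{\theta}^*\|^2_{\Mb_{k^*:\infty}}$ dominates, since $k^*$ is then small), and show that in either regime the SGD risk is $\gtrsim$ the optimally-tuned Ridge risk. Handling the non-summable tail ($\beta \le 1/2$) carefully — truncating at $d$ and invoking Assumption~\ref{assump:data_distribution} — is the secondary technical nuisance.
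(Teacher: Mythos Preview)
Your high-level strategy is right: for the power-law claim compare the SGD upper bound against the Ridge lower bound, and for the regular claim do the reverse. But your execution diverges from the paper's in two places, one of which is a genuine problem.

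\textbf{Power-law direction.} The paper does not compute any power sums or track the growth of $k^*$ in $N$. Instead it sets $k_1=k_2=k^*$ (the Ridge cutoff) in the SGD upper bound, so the tail-bias and variance terms of SGD and Ridge have \emph{identical form}; the only difference is the head bias, where SGD carries $\tfrac{1}{\gamma^2N^2}\exp(-2N\gamma\mu_i)$ versus Ridge's $\tfrac{\hat\lambda^2}{N^2}$. The paper then picks $\gamma^*$ to be a \emph{constant} --- either $1/\hat\lambda$ (when $\hat\lambda\ge\tr(\Mb)$) or $1/\tr(\Mb)$ (otherwise) --- so that $N\gamma^*\mu_i\to\infty$ and the exponential factor genuinely beats the polynomial. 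Your suggested $\gamma^*\eqsim \hat\lambda/(N\mu_1)$ is problematic: it makes $N\gamma^*\mu_1\eqsim\hat\lambda$ a constant, so $\exp(-N\gamma^*\Mb)$ does \emph{not} decay with $N$, and the SGD head bias becomes $\mu_1/\hat\lambda^2$ up to constants, which is not $\lesssim \hat\lambda^2/(N^2\mu_1)$ for large $N$. Your alternative (``tune $\gamma^*$ so the SGD cutoff matches Ridge's $k^*$'') is closer to what actually works, but the clean way is just to take $\gamma^*$ constant and let $N$ do the work. The ``large $\beta$'' hypothesis enters only in the second case, to force $\tr(\Mb)^2/(N\mu_{k^*})^2\le \exp(2N\mu_i/\tr(\Mb))$; no integral estimates are needed.

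\textbf{Regular direction.} Here the paper's route is much simpler than yours and does not even use the extra SGD variance term $\|\bm{\theta}^*\|_\Mb^2\tfrac{\gamma}{\mu_1}\sum_{i>k^\dagger}\mu_i^2$ --- it just drops it from the lower bound. The trick is to pick $\lambda^*$ so that $\hat\lambda^*\eqsim 1/\gamma$ (concretely $\lambda^*=b/\gamma-\sum_{i>k^*}\mu_i$), which forces the Ridge cutoff to equal the SGD cutoff $k^*$. Then the variance terms match up to constants, the tail-bias terms are identical, and for the head bias one has $\mathrm{RidgeBias}[i]=\mathrm{SGDBias}[i]\cdot(\hat\lambda^*\gamma)^2\exp(2N\gamma\mu_i)$. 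Since $\hat\lambda^*\gamma\eqsim 1$ and $N\gamma\mu_i\lesssim 1$ on the head (by definition of $k^*$), the exponential is bounded and Ridge $\lesssim$ SGD follows. Your proposed case-split on large/small $\gamma$ and reliance on the extra variance term as the obstruction is workable in principle but unnecessary, and your worry about non-summable tails for $\beta\le 1/2$ never arises because the paper's argument is term-by-term with matched cutoffs, not a global sum comparison.
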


Theorem~\ref{theorem:graph_effect} suggests that SGD generally performs better (exhibits lower excessive risk) than Ridge regression in power-law graphs, 
characterized by a faster-decay graph spectrum. Conversely, Ridge regression tends to outperform SGD in regular graphs, which are characterized by a slower-decay graph spectrum. This result affirmatively supports our research question and establishes a connection between graph structure and learning algorithms via the graph spectrum. 
It also provides crucial insights for practical learning algorithm selection: 
an SGD-like learning algorithm is preferred for power-law graphs, while a Ridge-like algorithm is advisable for regular graphs.

\begin{remark}
    Our theoretical results are not tightly dependent on the exact form of the power-law decay model. We adopt this model primarily for analytical clarity, but the core insights extend to a broader class of spectral decay behaviors. The essence of our findings (e.g., Theorem~\ref{theorem:graph_effect}) lies in the rate at which the eigenvalues of the graph matrix decay, regardless of whether this follows a strict power-law. A faster-decaying spectrum—as often observed in graphs with power-law-like structures—tends to favor SGD. In contrast, a slower-decaying spectrum—as seen in more regular graphs—makes Ridge more favorable.
\end{remark}

\begin{figure*}[t!]
\subfigure[Regular Graph]{
\includegraphics[width=.21\textwidth]{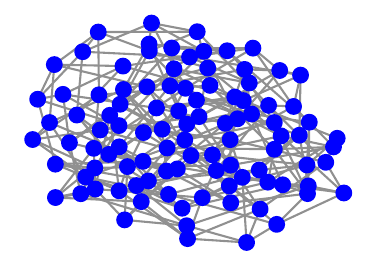}
\label{fig:regular_graph}
}
\subfigure[Power-law Graph]{
\includegraphics[width=.21\textwidth]{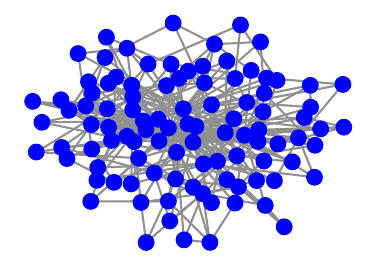}
\label{fig:power_graph}
}
\subfigure[Laplacian Eigenspectrum]{
\includegraphics[width=.24\textwidth]{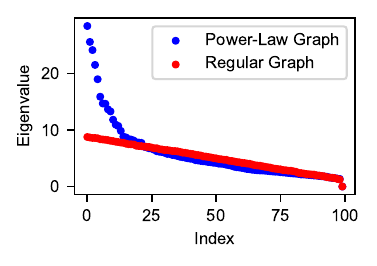}
\label{fig:graph_spectrum}
}
\subfigure[Aggregation Eigenspectrum]{
\includegraphics[width=.24\textwidth]{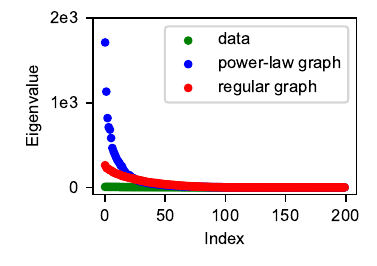}
\label{fig:agg_spectrum}
}
\caption{Illustration of the Graph Structure and Eigenspectrum of Power-law and Regular Graphs. Fig.~\ref{fig:regular_graph} and Fig.~\ref{fig:power_graph} illustrate regular and power-law graphs, respectively. Fig.~\ref{fig:graph_spectrum} 
plots 
eigenspectrum of the graph Laplacian associated with the graphs. Fig.~\ref{fig:agg_spectrum} 
shows eigenspectrum of the aggregation covariance space $\Mb$.}
\label{fig:power_regular_illustration}
\end{figure*}

\begin{figure*}[t!]
\subfigure[Power-law Graph]{
\includegraphics[width=.225\textwidth]{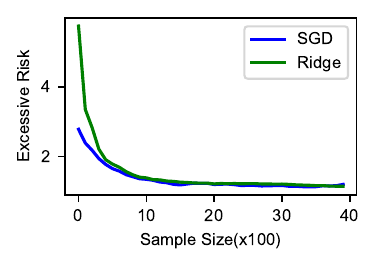}
\label{fig:learning_power}
}
\subfigure[Regular Graph]{
\includegraphics[width=.225\textwidth]{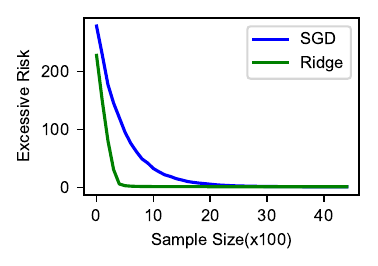}
\label{fig:learning_regular}
}
\subfigure[Mis-aligned Ground Truth]{
\includegraphics[width=.225\textwidth]{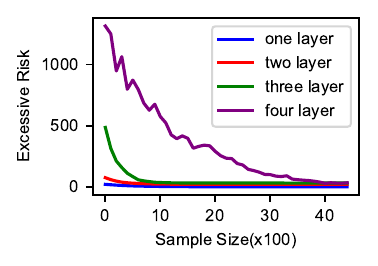}
\label{fig:stacking_misalign}
}
\subfigure[Aligned Ground Truth]{
\includegraphics[width=.225\textwidth]{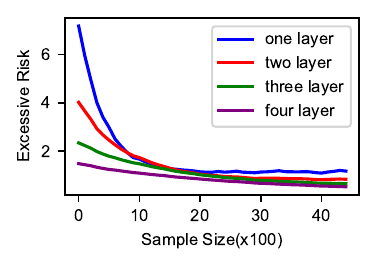}
\label{fig:stacking_align}
}
\caption{Illustration of the Performance of Learning algorithm in Different Graph Structures and the Effect of Stacking GNN Layers. Fig.~\ref{fig:learning_power} and ~\ref{fig:learning_regular} show performance comparison of SGD and Ridge in power-law and regular graph. Fig.~\ref{fig:stacking_misalign} and ~\ref{fig:stacking_align} show the effect of stacking GNN layers with/without the ground truth concentrating in the head eigenspace.}
\label{fig:agg_learning}
\end{figure*}

\subsection{Linear GNNs and Over-smoothing}
We extend our previous analysis to multi-layer GNNs and examine the cascading effect of stacking multiple layers. To make the analysis tractable, 
we focus on linearized GNNs (with ReLU readout) such as SGCN~\citep{sgc}, following the approach in~\citep{awasthi2021convergence}. We analyze the following $L$-layer linear GNN model:
\begin{align*}
    \Mb^{(l)} &= \Gb \circ \Mb^{(l-1)}, \quad 1 \leq l \leq L, \\  
    y &= \mathrm{ReLU}\left ( \left (\mb^{(L)} \right )^\top\bm{\theta}^* \right ) + \epsilon, \quad \mb^{(L)} \sim \Mb^{(L)},
\end{align*}
with the same assumptions as previously established.

\paragraph{ A New Perspective on Over-smoothing.} 
Over-smoothing in GNNs describes the phenomenon that the performance of a GNN degrades as the number of layers increases~\citep{rusch2023surveyoversmoothinggraphneural}. Previous studies have explored this issue through the lens of node representation, attributing over-smoothing to the homogenization of representations (where iterative aggregation of neighbor information causes all nodes to converge to a similar representation), thereby losing the unique structural and feature information that distinguishes them. With our established analysis and results, we present the following novel implications and connections between learning algorithms and over-smoothing.

By recursively expanding the expression of multi-layer GNNs as described, we can analyze the $L$-layer GNNs through the modified graph matrix $$\hat{\Gb}(L) = \prod_{i=1}^{L} \Gb.$$
\begin{proposition}[Effect of Stacking GNN Layers]\label{prop:stack}
    $\hat{\Gb}$ shares the same eigenbasis with $\Gb$. Furthermore, for two eigenvalues $\mu_i,\mu_j$ and a positive integer $l$, if $\mu_i > \mu_j$, then we have $$\frac{\mu_i(\hat{\Gb}(l+1))}{\mu_j(\hat{\Gb}(l+1))} > \frac{\mu_i(\hat{\Gb}(l))}{\mu_j(\hat{\Gb}(l))}.$$ 
\end{proposition}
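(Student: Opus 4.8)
The plan is to recognize that stacking $L$ linear aggregation layers simply replaces $\Gb$ by its $L$-th matrix power, after which the proposition reduces to an elementary fact about powers of a symmetric positive definite matrix. Concretely, unwinding the recursion $\Mb^{(l)} = \Gb \circ \Mb^{(l-1)}$ from $\mb^{(0)} = \xb$ (where $\circ$ is matrix multiplication in the linearized model) gives $\mb^{(L)} = \Gb^L \xb$, so $\Mb^{(L)} = \EE[\mb^{(L)}(\mb^{(L)})^\top] = \hat{\Gb}(L)\,\Hb\,\hat{\Gb}(L)^\top$ with $\hat{\Gb}(L) = \prod_{i=1}^{L}\Gb = \Gb^L$. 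This identifies the modified graph matrix as $\Gb^L$, which is the object the statement concerns.

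Next I would invoke the spectral theorem. By Assumption~\ref{assump:graph_matrix}, $\Gb$ is (symmetric) positive definite — as for the normalized adjacency and Laplacian matrices — so it has an orthonormal eigendecomposition $\Gb = \sum_i \mu_i(\Gb)\,\bm{p}_i\bm{p}_i^\top$ with every $\mu_i(\Gb) > 0$. Raising to the $l$-th power and using orthonormality of $\{\bm{p}_i\}$ yields
\begin{align*}
\hat{\Gb}(l) = \Gb^l = \sum_i \mu_i(\Gb)^l\,\bm{p}_i\bm{p}_i^\top .
\end{align*}
This establishes the first assertion: $\hat{\Gb}(l)$ has the same eigenbasis $\{\bm{p}_i\}$ as $\Gb$. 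It also identifies the eigenvalues as $\mu_i(\hat{\Gb}(l)) = \mu_i(\Gb)^l$, with the ordering of indices preserved for all $l$ since $t \mapsto t^l$ is increasing on the positive reals.

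For the ratio inequality I would substitute these eigenvalue expressions: for any positive integer $l$,
\begin{align*}
\frac{\mu_i(\hat{\Gb}(l))}{\mu_j(\hat{\Gb}(l))} = \left(\frac{\mu_i(\Gb)}{\mu_j(\Gb)}\right)^{l} .
\end{align*}
Writing $r := \mu_i(\Gb)/\mu_j(\Gb)$, the positivity of the eigenvalues together with $\mu_i(\Gb) > \mu_j(\Gb)$ gives $r > 1$, hence $r^{l+1} - r^l = r^l(r-1) > 0$. This is exactly the claimed strict inequality $\mu_i(\hat{\Gb}(l+1))/\mu_j(\hat{\Gb}(l+1)) > \mu_i(\hat{\Gb}(l))/\mu_j(\hat{\Gb}(l))$.

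There is no genuinely hard step here; the substance of the proposition is the observation that depth acts multiplicatively on the spectrum. The only points requiring care are (i) unwinding the layer recursion correctly so that the modified matrix is exactly $\Gb^L$ rather than some other product entangled with $\Hb$, and (ii) appealing to the positive-definiteness of $\Gb$, which is what simultaneously supplies a common orthonormal eigenbasis and strictly positive eigenvalues, so that the ratio $r$ is well defined and the comparison $r > 1$ is meaningful.
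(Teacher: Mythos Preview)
Your proposal is correct and follows essentially the same approach as the paper: identify $\hat{\Gb}(l)=\Gb^l$, read off that its eigenvalues are $\mu_i(\Gb)^l$, and conclude that the ratio $(\mu_i/\mu_j)^l$ is strictly increasing in $l$ since $\mu_i/\mu_j>1$. If anything, your write-up is slightly more complete than the paper's, since you explicitly invoke the spectral theorem to justify the shared eigenbasis claim, whereas the paper's proof focuses only on the ratio step.
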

Proposition~\ref{prop:stack} indicates that stacking additional GNN layers amplifies the relative difference (ratio) between the eigenvalues of different eigen-directions, thereby imposing a non-isotropic effect on the eigenspectrum of $\Gb$. More specifically, this leads to an eigenspectrum that exhibits faster decay. Building on this, and using the previous excessive risk analysis, we can further connect the performance of learning algorithms to the increasing number of GNN layers.

Most real-life graphs, particularly those used for academic benchmarks, are power-law graphs. As previously discussed, power-law graphs exhibit concentrated eigenspectra and feature a fast-decay spectrum. This creates a significant relative difference between a few directions and the rest of the directions on the eigenbasis. Accordingly, stacking additional layers of GNNs further amplify this relative difference. Based on our previous excessive risk analysis, if the ground truth does not align with the eigenspectrum of the graph matrix, increasing the number of GNN layers exacerbate this misalignment, leading to greater excessive risk (making learning more difficult for the learning algorithm and causing over-smoothing). Conversely, if the ground truth is concentrated in the few large eigen-directions, then adding more layers to GNNs will improve this alignment and consequently lead to a better excessive risk.

\subsection{Empirical Study}
We present an empirical study to validate our theoretical results and to illustrate the following: 
\begin{enumerate}
    \item the relation between the eigenspectrum of mixing space and graph structure (power-law vs.~regular).
    \item validate our theoretical prediction of SGD \& Ridge in power-law and regular graphs.
    \item the cascading effect when stacking multi-layer GNNs.
\end{enumerate}

\subsection{Setting}
To control variations in graph structures, features, and model parameters, we focus our empirical study on graph simulation models, similar to other learning theory studies~\citep{awasthi2021convergence}.
In particular, we utilize the well-known Barabasi-Albert model~\citep{posfai2016network} to generate power-law graphs. For each node in the graph, we generate a feature vector of dimension 200 from the standard Gaussian distribution $\NN(0,\Ib)$. We generate the ground truth $\bm{\theta}^*$ that aligns with the eigenspectrum of the graph matrix. The model noise variance is set as $\sigma^2=1$. Each experiment is conducted for five independent trials, and the average 
results are reported.

\subsection{Result}
The empirical results are summarized in Fig.~\ref{fig:power_regular_illustration} and ~\ref{fig:agg_learning}.  Fig.~\ref{fig:agg_spectrum} shows that the eigenspectrum of the aggregation space $\Mb$ aligns with the eigenspectrum of the graph structure, indicating that the eigenspectrum of the aggregation space (induced by power-law vs.~regular graphs) mirrors the eigenspectrum of the corresponding graph matrix (power-law vs.~regular). Fig.~\ref{fig:learning_power} and ~\ref{fig:learning_regular} offer an end-to-end comparison of the performance of SGD and Ridge regression in power-law and regular graphs, respectively. These figures demonstrate that SGD outperforms Ridge regression in power-law graphs, while Ridge regression excels over SGD in regular graphs, validating our theoretical predictions. Fig.~\ref{fig:stacking_misalign} illustrates that when the ground truth does not align well with the eigenspectrum (with ground truth vectors distributed in the tail eigenspace where $\mu_i$ is small), stacking GNN layers results in increased excessive risk, thereby causing the over-smoothing issue. Conversely, Fig.~\ref{fig:stacking_align} indicates that increasing the number of layers can be beneficial when the ground truth aligns well (with ground truth concentrated in the head eigenspace where $\mu_i$ is large).

\section{Conclusion and Discussion}
\subsection{Conclusion}
In this paper, we conduct a theoretical investigation into the interplay between graph structure and learning algorithms in GNNs. Specifically, we extend the excessive risk analysis to include two core learning algorithms (SGD and Ridge) within the context of GNNs. We further utilize spectral graph theory to link the excessive risk of learning algorithms with graph structure and carry out a comparative study across different graph structures (power-law and regular). Our findings indicate that SGD generally outperforms Ridge in power-law graphs and the reverse holds true in regular graphs. This positively answers our research question regarding whether graph structure can influence the performance of learning algorithms and demonstrates the structural relationship between graph structure and learning algorithms.

\subsection{Future Work}
This study opens up intriguing directions for future research. First, our analysis was confined to one-layer GNNs or linearized GNNs to maintain traceability in the theoretical analysis. This limitation is common in similar studies, even with I.I.D data, due to the constraints of existing theoretical tools. Developing more sophisticated analytical tools to extend the analysis to more complex models, such as those with multiple non-linear layers, would be beneficial. Second, given our focus on the interplay between graph structure and learning algorithms, we isolated the relationship and impact of graph structure and node features. Exploring how a structural relationship between graph structure and node features could further influence learning algorithms, would also be a worthwhile direction, particularly regarding its implications on over-squashing~\citep{topping2021understanding}.

Finally, extending our theoretical framework to random graph models constitutes another promising avenue. While our current results can directly apply to specific realizations of random graphs, a comprehensive probabilistic treatment—considering expected or high-probability spectral behaviors—would generalize our findings significantly. Leveraging concentration inequalities or advanced random matrix theory to derive probabilistic guarantees on the performance differences between SGD and Ridge regression across graph ensembles (e.g., Erdős–Rényi, stochastic block models, and other popular random graph families) represents a meaningful future research direction.



\section*{Impact Statement}
This paper presents work whose goal is to advance the understanding of graph neural networks. There are many potential societal consequences of our work, none which we feel must be specifically highlighted here.

\section*{Acknowledgement}
We would like to thank the anonymous reviewers and area chairs for their helpful comments. This work was supported in part by grants from Hong Kong RGC under the contracts 17207621, 17203522, and C7004-22G (CRF).

\bibliography{reference}
\bibliographystyle{icml2025}

\clearpage
\onecolumn
\appendix
\section{Excessive Risk of SGD}
In this appendix, we provide proof of the excessive risk (Theorem~\ref{theorem:sgd_gnn}) of SGD in our proposed setting. In this part, we will mainly follow the proof technique and results in ~\citep{zou2021benign} that is developed to sharply characterize the excess risk bound for SGD (with tail-averaging)  when the data distribution has a nice finite fourth-moment bound. However, such a condition does not directly apply to the case with aggregation and ReLU activation. Therefore, their result can not be directly applied here.

\subsection{Preliminary and Useful Lemmas}
We start with stating some commonly used notation and proving a set of lemmas that are going to be useful for the subsequent analysis.
First, let us recall some common notation and definitions that are going to be used in the proof. We denote the node feature matrix $\Xb$ and $d$, $\Gb$ to be the graph matrix associated with the GNN, and $\Mb$ to be the aggregation space after applying the graph matrix $\Gb$ on $\Xb$. Furthermore, we denote $$\Hb = \EE[\xb\xb^{\top}], \quad \Mb = \EE[\mb\mb^{\top}],$$
to be the covariance of data distribution and aggregation distribution. We denote $$\mu_1,...,\mu_d,$$ to be the eigenvalues for $\Mb$ in descending orders.

We start with proving the implication of the Assumptions~\ref{assump:data_distribution},~\ref{assump:model_noise}, and~\ref{assump:fourth_moment} on the aggregation space. 

\begin{lemma}\label{lemma:aggregation_moment}
    Suppose Assumption~\ref{assump:fourth_moment} holds, there exists positive constant $\alpha'$ such that for every $\mb$ and any positive semidefinite matrix $\Ab$,  it holds that: $\EE[\mb \mb^\top \Ab \mb \mb^\top] - \Mb \Ab \Mb \preceq \alpha' \tr(\Ab \Mb )  \cdot  \Mb.$
\end{lemma}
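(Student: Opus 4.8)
The plan is to exploit the fact that the GNN aggregation is a \emph{linear} operation. Writing $\mb = \Gb\xb$ — which is exactly the per-sample meaning of ``$\Gb$ operating on $\Xb$'', and for which Assumption~\ref{assump:graph_matrix} guarantees $\Gb$ is a bounded, symmetric, positive definite operator on $\cH$ — we get $\Mb = \EE[\mb\mb^\top] = \Gb\Hb\Gb$ and, for any matrix $\Ab$,
\begin{align*}
\EE\big[\mb\mb^\top\Ab\mb\mb^\top\big] = \Gb\,\EE\big[\xb\xb^\top(\Gb\Ab\Gb)\xb\xb^\top\big]\,\Gb .
\end{align*}
So the strategy is to apply the fourth-moment bound of Assumption~\ref{assump:fourth_moment} to the \emph{transformed} test matrix $\widetilde{\Ab} := \Gb\Ab\Gb$ and then conjugate the resulting operator inequality back by $\Gb$.

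First I would check admissibility: if $\Ab \succeq 0$ then $\widetilde{\Ab} = \Gb\Ab\Gb = (\Ab^{1/2}\Gb)^\top(\Ab^{1/2}\Gb) \succeq 0$, so Assumption~\ref{assump:fourth_moment} applies with test matrix $\widetilde{\Ab}$ and yields
\begin{align*}
\EE\big[\xb\xb^\top\widetilde{\Ab}\xb\xb^\top\big] - \Hb\widetilde{\Ab}\Hb \preceq \alpha\,\tr(\widetilde{\Ab}\Hb)\cdot\Hb .
\end{align*}
Next I would conjugate this Loewner inequality by $\Gb$ (congruence $\Zb \mapsto \Gb\Zb\Gb$ by a symmetric operator preserves $\preceq$, since $\Ub \succeq 0 \Rightarrow v^\top\Gb\Ub\Gb v = (\Gb v)^\top\Ub(\Gb v) \ge 0$), giving
\begin{align*}
\Gb\,\EE\big[\xb\xb^\top\widetilde{\Ab}\xb\xb^\top\big]\,\Gb - \Gb\Hb\widetilde{\Ab}\Hb\Gb \preceq \alpha\,\tr(\widetilde{\Ab}\Hb)\cdot\Gb\Hb\Gb .
\end{align*}
Then I would identify the three pieces: the left-hand expectation is $\EE[\mb\mb^\top\Ab\mb\mb^\top]$ by the first display; $\Gb\Hb\widetilde{\Ab}\Hb\Gb = (\Gb\Hb\Gb)\,\Ab\,(\Gb\Hb\Gb) = \Mb\Ab\Mb$ and $\Gb\Hb\Gb = \Mb$; and, by cyclicity of the trace, $\tr(\widetilde{\Ab}\Hb) = \tr(\Gb\Ab\Gb\Hb) = \tr(\Ab\,\Gb\Hb\Gb) = \tr(\Ab\Mb)$. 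Substituting gives exactly $\EE[\mb\mb^\top\Ab\mb\mb^\top] - \Mb\Ab\Mb \preceq \alpha\,\tr(\Ab\Mb)\cdot\Mb$, so the claim holds with $\alpha' = \alpha$ (indeed any $\alpha' \ge \alpha$).

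The algebraic identities in the last step are routine; the one point that needs care is the operator-theoretic bookkeeping in the possibly infinite-dimensional $\cH$ — ensuring the congruence $\Ab \mapsto \Gb\Ab\Gb$ is legitimate and that the trace quantities stay finite, which follows since $\Gb$ is bounded and $\tr(\Mb) = \tr(\Gb\Hb\Gb) \le \|\Gb\|^2\,\tr(\Hb) < \infty$ by Assumptions~\ref{assump:data_distribution} and~\ref{assump:graph_matrix}. The main (mild) obstacle is thus not the inequality itself but justifying cleanly that the aggregation distribution is the pushforward of the feature distribution under the linear map $\Gb$; once that is in place the fourth-moment bound transfers verbatim, and the same congruence trick simultaneously pushes Assumptions~\ref{assump:data_distribution} and~\ref{assump:model_noise} through to $\Mb$ (yielding $\Mb$ PD with finite trace and $\EE[\epsilon^2\mb\mb^\top] \preceq \sigma^2\Mb$), which is what the subsequent SGD excess-risk analysis consumes.
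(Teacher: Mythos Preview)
Your proposal is correct and follows essentially the same approach as the paper: write $\mb = \Gb\xb$, apply Assumption~\ref{assump:fourth_moment} with the transformed test matrix $\Gb\Ab\Gb$, and conjugate the resulting inequality by $\Gb$. Your trace computation via cyclicity ($\tr(\Gb\Ab\Gb\Hb)=\tr(\Ab\Mb)$) is in fact cleaner than the paper's and yields the sharp constant $\alpha'=\alpha$, whereas the paper introduces an unnecessary (and loosely justified) factor of $\tr(\Gb)^2$.
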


\begin{proof}
    By definition, we have that 
    $$\mb \sim \cM := \Gb \circ \Xb.$$
    For a given PSD matrix $\Ab$,  we denote $$\Bb = \Gb \Ab \Gb.$$
    By Assumption~\ref{assump:graph_matrix}, we have that $\Gb$ is a PSD matrix. Hence, we can immediately conclude that $\Bb$ is also a PSD matrix. 
    Then, we have that,
    \begin{align*}
        & \EE[\mb \mb^\top \Ab \mb \mb^\top] - \Mb \Ab \Mb,  \\
        & = \Gb \EE[ \xb \xb^\top \Bb \xb \xb^\top ]\Gb  - \Gb \Xb \Xb^\top \Bb \Xb \Xb^\top \Gb, \\
        & =     \Gb \EE[ \xb \xb^\top \Bb \xb \xb^\top ]\Gb  - \Gb \Hb \Bb \Hb \Gb, \\
        & = \Gb \left ( \EE[ \xb \xb^\top \Bb \xb \xb^\top ]  -  \Hb \Bb \Hb \right ) \Gb. \\
    \end{align*}
    Then, by Assumption~\ref{assump:fourth_moment}, we have that there exists a $\alpha$ such that
    \begin{align*}
        \EE[ \xb \xb^\top \Bb \xb \xb^\top ]  -  \Hb \Bb \Hb  \preceq  \alpha \tr(\Bb \Hb) \cdot \Hb
    \end{align*}
    Substitute this back into the derivation above, we have that, 
    \begin{align*}
        & \EE[\mb \mb^\top \Ab \mb \mb^\top] - \Mb \Ab \Mb,  \\ 
        & = \Gb \left ( \EE[ \xb \xb^\top \Bb \xb \xb^\top ]  -  \Hb \Bb \Hb \right ) \Gb, \\
        & \preceq \alpha \tr(\Bb \Gb  \Hb \Gb ) \cdot  \Gb 
    \Hb \Gb. \\ 
    \end{align*}
    Again, by Assumption~\ref{assump:graph_matrix}, we have that $\Gb$ is a bounded matrix and immediately obtain that there exists a $\alpha'$ such that,
    \begin{align*}
        & \EE[\mb \mb^\top \Ab \mb \mb^\top] - \Mb \Ab \Mb,  \\ 
        & \preceq \alpha \tr(\Gb \Ab \Gb \Gb  \Hb \Gb ) \cdot  \Gb 
    \Hb \Gb, \\
        & =    \alpha \tr(\Gb)^2 \tr ( \Ab \Gb  \Hb \Gb ) \cdot  \Gb 
    \Hb \Gb, \\
       & = \alpha' \tr(\Ab \Mb) \cdot  \Mb
    \end{align*}
\end{proof}
\begin{lemma}\label{lemma:model_noise} There exists a positive constant $\sigma'$ such that:  $\EE_{\mb,\epsilon}[\epsilon^2 \mb \mb^\top] \leq \sigma^2 \Mb.$
\end{lemma}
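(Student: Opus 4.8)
The plan is to prove Lemma~\ref{lemma:model_noise} by reducing the noise condition on the aggregation space to Assumption~\ref{assump:model_noise}, exactly in the same spirit as the reduction carried out for the fourth-moment bound in Lemma~\ref{lemma:aggregation_moment}. The starting point is the definition $\mb \sim \cM = \Gb \circ \Xb$, so that (interpreting $\circ$ as left multiplication by $\Gb$) we may write $\mb = \Gb \xb$ with the same noise variable $\epsilon$ attached to both models, since the response $y = \mathrm{ReLU}(\mb^\top\bm{\theta}^*) + \epsilon$ uses the same $\epsilon$. Then
\begin{align*}
\EE_{\mb,\epsilon}[\epsilon^2 \mb\mb^\top] = \EE_{\xb,\epsilon}[\epsilon^2 \Gb \xb\xb^\top \Gb] = \Gb\,\EE_{\xb,\epsilon}[\epsilon^2 \xb\xb^\top]\,\Gb.
\end{align*}

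Next I would invoke Assumption~\ref{assump:model_noise}, which gives $\EE_{\xb,\epsilon}[\epsilon^2\xb\xb^\top] \preceq \sigma^2 \Hb$. Since $\Gb$ is PSD (Assumption~\ref{assump:graph_matrix}), conjugation by $\Gb$ preserves the Loewner order, so
\begin{align*}
\Gb\,\EE_{\xb,\epsilon}[\epsilon^2\xb\xb^\top]\,\Gb \preceq \sigma^2\, \Gb\Hb\Gb.
\end{align*}
Finally I would identify $\Gb\Hb\Gb$ with $\Mb$: indeed $\Mb = \EE[\mb\mb^\top] = \EE[\Gb\xb\xb^\top\Gb] = \Gb\Hb\Gb$, which yields $\EE_{\mb,\epsilon}[\epsilon^2\mb\mb^\top] \preceq \sigma^2 \Mb$, so one may simply take $\sigma' = \sigma$ (the statement as written already uses $\sigma^2$ on the right-hand side, so no renaming is strictly needed, but the existential form with $\sigma'$ absorbs any boundedness constant from a more general reading of $\circ$).

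The only genuine subtlety — and the one place where I would be careful — is the precise meaning of the operation $\circ$ and whether $\Gb$ acts purely by left multiplication or carries an additional normalization/bounded factor, as it did in Lemma~\ref{lemma:aggregation_moment} (where a $\tr(\Gb)^2$-type constant appeared). If $\circ$ introduces such a bounded multiplicative factor $c$, then the conjugation step produces $\sigma^2 c^2 \Gb\Hb\Gb$ and one absorbs $c^2$ into $\sigma'^2$, using Assumption~\ref{assump:graph_matrix}'s bounded-norm guarantee to ensure $c < \infty$; this is exactly why the lemma is stated with an existential constant $\sigma'$. This is not really an obstacle so much as a bookkeeping point, so the proof is short and essentially a one-line conjugation argument combined with the two relevant assumptions.
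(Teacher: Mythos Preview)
Your proposal is correct and follows essentially the same approach as the paper: write $\mb = \Gb\xb$, pull $\Gb$ outside the expectation to get $\Gb\,\EE[\epsilon^2\xb\xb^\top]\,\Gb$, apply Assumption~\ref{assump:model_noise}, and then identify $\Gb\Hb\Gb = \Mb$ to conclude with $\sigma' = \sigma$. Your additional remark about absorbing a possible bounded factor from $\circ$ into $\sigma'$ is a nice piece of care that the paper does not make explicit.
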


\begin{proof}
\begin{align*}
    \EE[\epsilon^2 \mb \mb^\top] & = \Gb \EE[\epsilon^2 \xb \xb^\top]\Gb, \\
    \intertext{By Assumption~\ref{assump:model_noise}, we have that,}
    & \leq \Gb (\sigma^2 \Hb) \Gb, \\
    & = \sigma^2 \Gb  \Hb \Gb, \\
    & = \sigma^2 \Mb
\end{align*}
\end{proof}
\begin{lemma}\label{lemma:psd_finite}
    The covariance matrix of aggregation space $\Mb$ is PSD and has a finite trace.
\end{lemma}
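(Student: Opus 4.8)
The plan is to first reduce $\Mb$ to a closed form in terms of $\Hb$ and $\Gb$, exactly as was done en route to Lemma~\ref{lemma:aggregation_moment}. Since $\mb \sim \cM = \Gb \circ \Xb$ and $\Gb$ is a deterministic, symmetric operator by Assumption~\ref{assump:graph_matrix}, taking second moments gives $\Mb = \EE[\mb\mb^\top] = \Gb\,\EE[\xb\xb^\top]\,\Gb = \Gb\Hb\Gb$. With this representation both claims become essentially one-line computations.

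For positive semidefiniteness I would argue directly from the definition: for any $\mathbf{z}\in\cH$ we have $\mathbf{z}^\top \Mb \mathbf{z} = \EE[(\mb^\top\mathbf{z})^2] \ge 0$, so $\Mb \succeq 0$. Equivalently, $\Mb = \Gb\Hb\Gb^\top$ is a congruence transform of the positive definite matrix $\Hb$ (Assumption~\ref{assump:data_distribution}), hence PSD; it is moreover positive definite on the range of $\Gb$, which is precisely the content of ``$\Gb$ is PD with respect to $\xb$'' in Assumption~\ref{assump:graph_matrix}.

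For finiteness of the trace I would invoke cyclicity of the trace together with the operator-norm bound on $\Gb$: writing $\|\Gb\|_{\mathrm{op}}$ for the operator norm, $\tr(\Mb) = \tr(\Gb\Hb\Gb) = \tr(\Hb\Gb^2) \le \|\Gb\|_{\mathrm{op}}^2\,\tr(\Hb)$, where the inequality is the standard fact that $\tr(\Hb\Bb) \le \|\Bb\|_{\mathrm{op}}\,\tr(\Hb)$ for PSD $\Hb$ and PSD $\Bb := \Gb^2$. Assumption~\ref{assump:graph_matrix} gives $\|\Gb\|_{\mathrm{op}} < \infty$ and Assumption~\ref{assump:data_distribution} gives $\tr(\Hb) < \infty$, so $\tr(\Mb) < \infty$.

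The only mild subtlety, rather than a real obstacle, is making the identity $\Mb = \Gb\Hb\Gb$ rigorous when $\cH$ is infinite-dimensional, so that $\Gb$ is a bounded operator and $\Hb$ a trace-class operator rather than finite matrices; there one interprets $\circ$ and the second-moment operators in the appropriate Hilbert--Schmidt / trace-class sense, but since $\Gb$ is bounded and $\Hb$ is trace-class, cyclicity of the trace and the congruence argument both go through unchanged. I expect this bookkeeping to be the most delicate point, with the PSD and trace bounds themselves being immediate.
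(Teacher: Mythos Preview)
Your proposal is correct and follows essentially the same line as the paper: identify $\Mb=\Gb\Hb\Gb$, obtain positive semidefiniteness from the positive (semi)definiteness of the factors, and bound the trace using the hypotheses on $\Hb$ and $\Gb$. The only cosmetic difference is in the trace step, where the paper invokes a product-of-traces bound $\tr(\Mb)\lesssim\tr(\Hb)\,\tr(\Gb)$ whereas you use the operator-norm inequality $\tr(\Mb)\le\|\Gb\|_{\mathrm{op}}^{2}\,\tr(\Hb)$; your version is a bit sharper and connects more directly to the ``bounded norm'' clause of Assumption~\ref{assump:graph_matrix}.
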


\begin{proof}
    First recall that $$\Mb = \Gb \circ  \Xb.$$    
    By Assumption~\ref{assump:data_distribution}, we have that $\Xb$ is SD. Similarly, by Assumption~\ref{assump:graph_matrix}, we have that $\Gb$ is SD. We immediately obtain that $\Mb$ is PSD.

    Now, it remains to show that $\Mb$ has a finite trace. 
    \begin{align*}
        \tr(\Mb) \lesssim \tr(\Xb) \tr(\Gb)
    \end{align*}
    Again, by Assumption~\ref{assump:data_distribution} and Assumption~\ref{assump:graph_matrix}, we have that $\tr(\Xb)$ and $\tr(\Gb)$ are finite and therefore, $\tr(\Mb)$ is finite.
\end{proof}

The analysis in ~\citep{bartlett2020benign,zou2021benign} established a sharp analysis for SGD and Ridge with bounded fourth-moment assumption. However, the setting considered in these two studies assumes a linear model without non-linear activation. In other words, the SGD update is given by,
\begin{align*}
    {\bm{\theta}}_{t+1}     = {\bm{\theta}}_{t} - \gamma \cdot (\mb_t^\top \bm{\theta}_{t-1} - y_t) \cdot  \mb_t,
\end{align*}
In our study, we consider GNN layers with ReLU activation and the update is given by,
\begin{align*}
    {\bm{\theta}}_{t+1}  
 = {\bm{\theta}}_{t} - \gamma \cdot (\mathrm{ReLU}(\mb_t^\top \bm{\theta}_{t-1}) - y_t) \cdot  \mb_t,
\end{align*}
We adopt the following result from~\citep{wu2023finite} to relate the excessive risk landscape of SGD and SGD with ReLU activation.
\begin{lemma}[Restatement of Lemma 4.2 in ~\citep{wu2023finite}]\label{lemma:relu_relation}
    The excessive risk landscape of ReLU updates is given by,
    \begin{align*}
       0.25 \|\bm{\theta} - \bm{\theta}^* \|_{\Mb}^2 \leq \Delta(\bm{\theta}) \leq \|\bm{\theta} - \bm{\theta}^* \|_{\Mb}^2
    \end{align*}
\end{lemma}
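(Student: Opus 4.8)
The plan is to first reduce the excessive risk to a single expectation involving only the ReLU of the two linear predictors, and then to sandwich that quantity between constant multiples of $\|\bm{\theta}-\bm{\theta}^*\|_{\Mb}^2$. Writing $a := \mb^\top\bm{\theta}^*$ and $b := \mb^\top\bm{\theta}$ and recalling $y = \mathrm{ReLU}(a) + \epsilon$, I would expand $L(\bm{\theta})$ and subtract $L(\bm{\theta}^*) = \tfrac12\EE[\epsilon^2]$. Since the noise is mean-zero (conditionally on $\mb$), the cross term $\EE[(\mathrm{ReLU}(a)-\mathrm{ReLU}(b))\,\epsilon]$ vanishes, leaving
\[
\Delta(\bm{\theta}) = \tfrac12\,\EE\big[(\mathrm{ReLU}(a)-\mathrm{ReLU}(b))^2\big].
\]
The entire lemma then reduces to comparing $\EE[(\mathrm{ReLU}(a)-\mathrm{ReLU}(b))^2]$ with $\EE[(a-b)^2] = \|\bm{\theta}-\bm{\theta}^*\|_{\Mb}^2$.

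The upper bound is immediate and holds pointwise: $\mathrm{ReLU}$ is $1$-Lipschitz, so $(\mathrm{ReLU}(a)-\mathrm{ReLU}(b))^2 \le (a-b)^2$ for every realization of $\mb$. Taking expectations gives $\Delta(\bm{\theta}) \le \tfrac12\|\bm{\theta}-\bm{\theta}^*\|_{\Mb}^2 \le \|\bm{\theta}-\bm{\theta}^*\|_{\Mb}^2$, which already yields the right-hand inequality.

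The lower bound is the crux, and it cannot be established pointwise: $\mathrm{ReLU}$ can collapse the gap, for instance when $a>0>b$ with $|b|$ large, where $(\mathrm{ReLU}(a)-\mathrm{ReLU}(b))^2$ is far smaller than $(a-b)^2$. The remedy I would use is a symmetrization argument that exploits the symmetry of the aggregated feature distribution, i.e. $\mb \stackrel{d}{=} -\mb$, which holds when the node features are Gaussian (as in our setting) and are propagated through the linear graph operator $\Gb$. Pairing each draw $\mb$ with $-\mb$ and using the identity $\mathrm{ReLU}(t)-\mathrm{ReLU}(-t)=t$, I would set $s := \mathrm{ReLU}(a)-\mathrm{ReLU}(b)$ and $r := \mathrm{ReLU}(-a)-\mathrm{ReLU}(-b)$, so that $s - r = a-b$. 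The elementary bound $s^2 + r^2 \ge \tfrac12 (s-r)^2$ gives, pointwise,
\[
\tfrac12\big[(\mathrm{ReLU}(a)-\mathrm{ReLU}(b))^2 + (\mathrm{ReLU}(-a)-\mathrm{ReLU}(-b))^2\big] \ge \tfrac14 (a-b)^2.
\]
By the assumed symmetry the two ReLU terms have equal expectation, so taking expectations yields $\EE[(\mathrm{ReLU}(a)-\mathrm{ReLU}(b))^2] \ge \tfrac14\,\EE[(a-b)^2]$, and hence $\Delta(\bm{\theta}) \gtrsim \|\bm{\theta}-\bm{\theta}^*\|_{\Mb}^2$ with a dimension-free constant.

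The hard part is precisely this lower bound: it is the only step that genuinely requires distributional structure rather than a deterministic inequality, so the symmetry assumption on $\mb$ is what makes the argument go through. The remaining subtlety is bookkeeping of the leading constant — the symmetrization produces a universal factor, and matching the exact value $0.25$ stated in the lemma depends only on the normalization convention chosen for the risk $L$ (whether or not the $\tfrac12$ prefactor is carried), so I would track this factor carefully but expect no further obstruction. A final minor point to verify is that the cross term indeed vanishes, which follows from the zero-mean (conditional) noise assumption underlying Assumption~\ref{assump:model_noise}.
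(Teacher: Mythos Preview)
The paper does not prove this lemma; it merely restates it from \cite{wu2023finite} and uses it as a black box. So there is no ``paper's own proof'' to compare against, and your proposal is effectively supplying an argument the paper omits.

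Your reduction to $\Delta(\bm{\theta}) = \tfrac12\,\EE\big[(\mathrm{ReLU}(\mb^\top\bm{\theta}^*)-\mathrm{ReLU}(\mb^\top\bm{\theta}))^2\big]$ is correct, as is the upper bound via the $1$-Lipschitz property. The symmetrization argument for the lower bound is also sound: the identity $\mathrm{ReLU}(t)-\mathrm{ReLU}(-t)=t$ together with $s^2+r^2\ge\tfrac12(s-r)^2$ does exactly what you claim once $\mb\stackrel{d}{=}-\mb$. Two caveats are worth flagging. First, the symmetry hypothesis $\mb\stackrel{d}{=}-\mb$ is not among the paper's stated Assumptions~\ref{assump:data_distribution}--\ref{assump:graph_matrix}; it holds in the Gaussian experimental setup and is standard in the source \cite{wu2023finite}, but you are importing an assumption the paper never makes explicit, and without it the lower bound can genuinely fail (take $\mb$ supported where both $\mb^\top\bm{\theta}$ and $\mb^\top\bm{\theta}^*$ are negative). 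Second, with the paper's $\tfrac12$ prefactor in $L$, your chain yields $\Delta(\bm{\theta})=\tfrac12\EE[s^2]\ge\tfrac18\|\bm{\theta}-\bm{\theta}^*\|_{\Mb}^2$, not $\tfrac14$; this is harmless for the downstream $\lesssim/\gtrsim$ arguments, but your remark that the constant is ``only normalization'' is not quite right --- the symmetrization itself loses a factor of $2$ regardless of convention.
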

Lemma~\ref{lemma:relu_relation} suggest that even though the excess risk induced by ReLU activation could be non-convex locally, the landscape of the excess risk on a large scale is “approximately” quadratic in the sense of ignoring some multiplicative factors. This landscape enables us to build sharp upper and lower bounds on the excess risk by bounding a simpler quadratic, $$\|\bm{\theta} - \bm{\theta}^* \|_{\Mb}^2.$$

\subsection{SGD Excessive Risk}
Next, we present proof for the excessive risk profile, Theorem~\ref{theorem:sgd_gnn}, for SGD under GNN. For this, we decompose the proof and the results of the theorem into two parts: upper bound and lower bound that involve different results. 

The overall proof structures are as follows: 1) we first derive the learning dynamic with the aggregation space under the linear model and 2) we relate the excessive risk landscape of the above learning dynamic with ReLU activation with the lemma proved above.

Next, we start with considering a simplified linear model where the response is generated by the aggregation space without the ReLU activation, which amount to the following generation model,
\begin{align}\label{eq:sim_gen}
     y = \mb^\top \bm{\theta}^* + \epsilon, \quad \mb \sim \cM,
\end{align}

Let $\hat{\bm{\theta}}_t$ be the t-iteration of the SGD under the generation model Eq.~\ref{eq:sim_gen} which amounts to the following process,
\begin{align}\label{eq:sgd}
    \hat{\bm{\theta}}_{t+1}  &= \hat{\bm{\theta}}_t - \gamma \cdot    \nabla l(\hat{\bm{\theta}}_t;\mb_t,y_t), \notag \\
     & = \hat{\bm{\theta}}_{t} - \gamma \cdot (\mb_t^\top \hat{\bm{\theta}}_{t-1} - y_t) \cdot  \mb_t.
\end{align}
Then, we denote 
$$\bm{\eta}_t := \hat{\bm{\theta}}_t - \bm{\theta^*},$$ as the centered SGD iterate, and,
$$\bar{\bm{\eta}}_N = \frac{1}{N} \sum_{t=1}^{N} \bm{\eta}_t$$
Then, we want to decompose the SGD iterate into bias and variance, denoted as  $\bm{\eta}^{\mathrm{bias}}$ and $\bm{\eta}^{\mathrm{var}}$ respectively. The corresponding update rules are given by, 
\begin{align*}
    & \bm{\eta}^{\mathrm{bias}}_t  = (\Ib - \gamma\mb_t\mb_t^{\top})\bm{\eta}^{\mathrm{bias}}_{t-1}, \\ 
    &\bm{\eta}^{\mathrm{bias}}_0 = \bm{\eta}^{\mathrm{bias}}_0,\\
       & \bm{\eta}^{\mathrm{variance}}_t  = (\Ib - \gamma\mb_t\mb_t^{\top})\bm{\eta}^{\mathrm{variance}}_{t-1} + \gamma\epsilon\mb_t, \\ &\bm{\eta}^{\mathrm{variance}}_0 = \bm{\eta}^{\mathrm{variance}}_0,
\end{align*}
Thanks to ~\citep{jain2017markov}, we have the following results for the excessive risk decomposition
\begin{lemma}\label{lemma:bias_variance_decomp}
Consider SGD iterates with a linear model. Then, the excessive risk of the SGD under a linear model can be decomposed into bias and variance as follows,
    \begin{align}
        \Delta(\hat{\bm{\theta}}) = \frac{1}{2} \la \Mb, \EE[\bar{\bm{\eta}}_N \otimes \bar{\bm{\eta}}_N ] \ra  \leq (\sqrt{\mathrm{bias}}+\sqrt{\mathrm{variance}})^2,
    \end{align}
where 
$$\sqrt{\mathrm{bias}} = \frac{1}{2} \la \Mb,  \EE[\bar{\bm{\eta}}_N^{\mathrm{bias}} \otimes \bar{\bm{\eta}}_N^{\mathrm{bias}}] \ra, $$
and
$$\sqrt{\mathrm{variance}} = \frac{1}{2} \la \Mb,  \EE[\bar{\bm{\eta}}_N^{\mathrm{var}} \otimes \bar{\bm{\eta}}_N^{\mathrm{var}}] \ra, $$
\end{lemma}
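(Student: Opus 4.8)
The plan is to specialize the Markov-chain decomposition of averaged SGD iterates from~\citep{jain2017markov} to the aggregation covariance $\Mb$, carrying out three moves: (i) reduce the excess risk of the linear surrogate to a quadratic form in the centered iterate, (ii) split the centered iterate exactly into a bias part and a variance part using linearity of the recursion, and (iii) recombine the two pieces with the triangle inequality in the $\Mb$-seminorm. For the first move I would observe that under the linear surrogate model of Eq.~\ref{eq:sim_gen} we have $y-\mb^\top\bm{\theta}=\mb^\top(\bm{\theta}^*-\bm{\theta})+\epsilon$, so expanding the square and using that $\epsilon$ has zero (conditional) mean kills the cross term and gives $L(\bm{\theta})=\tfrac12\|\bm{\theta}-\bm{\theta}^*\|_{\Mb}^2+\tfrac12\EE[\epsilon^2]$, hence $\Delta(\bm{\theta})=\tfrac12\|\bm{\theta}-\bm{\theta}^*\|_{\Mb}^2$. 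Applying this to the random averaged estimator $\hat{\bm{\theta}}=\bm{\theta}^*+\bar{\bm{\eta}}_N$ and taking expectation over the SGD sample path yields the stated identity $\Delta(\hat{\bm{\theta}})=\tfrac12\la\Mb,\EE[\bar{\bm{\eta}}_N\otimes\bar{\bm{\eta}}_N]\ra$.

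For the second move I would use that the centered recursion $\bm{\eta}_t=(\Ib-\gamma\mb_t\mb_t^\top)\bm{\eta}_{t-1}+\gamma\epsilon_t\mb_t$ is affine in $\bm{\eta}_{t-1}$. Therefore, on any fixed realization of $\{(\mb_t,\epsilon_t)\}$, the solution is the superposition of the homogeneous solution started at $\bm{\eta}_0=\hat{\bm{\theta}}_0-\bm{\theta}^*$ with the noise switched off (this defines $\bm{\eta}^{\mathrm{bias}}_t$) and the particular solution started at $\bm{0}$ driven by $\gamma\epsilon_t\mb_t$ (this defines $\bm{\eta}^{\mathrm{var}}_t$). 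Consequently $\bm{\eta}_t=\bm{\eta}^{\mathrm{bias}}_t+\bm{\eta}^{\mathrm{var}}_t$ holds for every $t$ on the same sample path, and averaging gives $\bar{\bm{\eta}}_N=\bar{\bm{\eta}}^{\mathrm{bias}}_N+\bar{\bm{\eta}}^{\mathrm{var}}_N$.

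For the third move, since $\Mb$ is PSD (Lemma~\ref{lemma:psd_finite}), $\|\cdot\|_{\Mb}=\la\Mb\,\cdot\,,\cdot\ra^{1/2}$ is a seminorm, so the triangle inequality gives $\|\bar{\bm{\eta}}_N\|_{\Mb}\le\|\bar{\bm{\eta}}^{\mathrm{bias}}_N\|_{\Mb}+\|\bar{\bm{\eta}}^{\mathrm{var}}_N\|_{\Mb}$ pointwise in the randomness; squaring, taking expectations, and applying Minkowski's inequality in $L^2$ over the SGD randomness yields $\EE\|\bar{\bm{\eta}}_N\|_{\Mb}^2\le\big(\sqrt{\EE\|\bar{\bm{\eta}}^{\mathrm{bias}}_N\|_{\Mb}^2}+\sqrt{\EE\|\bar{\bm{\eta}}^{\mathrm{var}}_N\|_{\Mb}^2}\big)^2$. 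Multiplying by $\tfrac12$ and using $\tfrac12(\sqrt a+\sqrt b)^2=(\sqrt{a/2}+\sqrt{b/2})^2$, then identifying $\mathrm{bias}$ and $\mathrm{variance}$ with the quadratic forms $\tfrac12\la\Mb,\EE[\bar{\bm{\eta}}^{\mathrm{bias}}_N\otimes\bar{\bm{\eta}}^{\mathrm{bias}}_N]\ra$ and $\tfrac12\la\Mb,\EE[\bar{\bm{\eta}}^{\mathrm{var}}_N\otimes\bar{\bm{\eta}}^{\mathrm{var}}_N]\ra$, produces $\Delta(\hat{\bm{\theta}})\le(\sqrt{\mathrm{bias}}+\sqrt{\mathrm{variance}})^2$.

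The only genuinely delicate point — and the reason the conclusion is an inequality rather than an exact identity — is the recombination: $\bar{\bm{\eta}}^{\mathrm{bias}}_N$ and $\bar{\bm{\eta}}^{\mathrm{var}}_N$ are driven by the same sequence $\{\mb_t\}$ and are \emph{not} independent, so the cross term $\la\Mb,\EE[\bar{\bm{\eta}}^{\mathrm{bias}}_N\otimes\bar{\bm{\eta}}^{\mathrm{var}}_N]\ra$ cannot simply be discarded; routing through the triangle/Minkowski step avoids having to control it, at the cost of the cross-term slack inside $(\sqrt{a}+\sqrt{b})^2$. I would also flag that this lemma uses only mean-zero noise and linearity of the surrogate recursion: the fourth-moment and noise bounds (Assumptions~\ref{assump:fourth_moment} and~\ref{assump:model_noise}) and their aggregation-space versions (Lemmas~\ref{lemma:aggregation_moment}--\ref{lemma:model_noise}), as well as the ReLU correction (Lemma~\ref{lemma:relu_relation}), are not needed here — they enter only when $\mathrm{bias}$ and $\mathrm{variance}$ are bounded separately afterwards.
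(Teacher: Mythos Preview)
Your proposal is correct and is precisely the argument the paper intends: the paper's own proof of this lemma is a one-line deferral to \citep{jain2017markov}, ``by treating $\Mb$ as the new data covariance matrix,'' and your three moves (excess risk as a quadratic in $\bar{\bm{\eta}}_N$, pathwise superposition $\bm{\eta}_t=\bm{\eta}^{\mathrm{bias}}_t+\bm{\eta}^{\mathrm{var}}_t$, Minkowski in the $\Mb$-seminorm) are exactly the content of that reference specialized to the aggregation space. Your remark that the cross term forces an inequality rather than an identity, and that only mean-zero noise and linearity are used here, is also accurate.
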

\begin{proof}
    The result can be obtained by simply following the procedure in ~\citep{jain2017markov} by treating $\Mb$ as the new data covariance matrix.  
\end{proof}
Then, thanks to the analysis in ~\citep{zou2021benign}, we have a comprehensive understanding of the above iterative process of SGD and have the following results.

\begin{lemma}\label{lemma:sgd_iterate_bound}Under 
The bias and variance of the SGD iterates with respect to $\Mb$ is upper bounded by,
\begin{align*}
    \mathrm{bias}&:= \frac{1}{2} \la \Mb,  \EE[\bar{\bm{\eta}}_N^{\mathrm{bias}} \otimes \bar{\bm{\eta}}_N^{\mathrm{bias}}] \ra, \\
    &\leq \frac{1}{N^2} \sum_{t=0}^{N-1} \sum_{k=t}^{N-1} \la (\Ib - \gamma \Mb)^{k-t}\Mb, \EE[\bar{\bm{\eta}}_t^{\mathrm{bias}} \otimes \bar{\bm{\eta}}_t^{\mathrm{bias}}] \ra,
\end{align*}
and
\begin{align*}
    \mathrm{variance}&:= \frac{1}{2} \la \Mb,  \EE[\bar{\bm{\eta}}_N^{\mathrm{var}} \otimes \bar{\bm{\eta}}_N^{\mathrm{var}}] \ra, \\
    &\leq \frac{1}{N^2} \sum_{t=0}^{N-1} \sum_{k=t}^{N-1} \la (\Ib - \gamma \Mb)^{k-t}\Mb, \EE[\bar{\bm{\eta}}_t^{\mathrm{var}} \otimes \bar{\bm{\eta}}_t^{\mathrm{var}}] \ra.
\end{align*}
\end{lemma}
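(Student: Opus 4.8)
The plan is to prove both inequalities simultaneously, since the bias and variance iterates obey the same affine-contraction recursion $\bm{\eta}_t = (\Ib - \gamma\mb_t\mb_t^\top)\bm{\eta}_{t-1} + \bm{\xi}_t$, where $\bm{\xi}_t = \bm{0}$ for the bias part and $\bm{\xi}_t = \gamma\epsilon_t\mb_t$ (a conditionally mean-zero increment) for the variance part. First I would expand the averaged covariance into a double sum over pairs of iterates, $\EE[\bar{\bm{\eta}}_N \otimes \bar{\bm{\eta}}_N] = \frac{1}{N^2}\sum_{i}\sum_{j}\EE[\bm{\eta}_i \otimes \bm{\eta}_j]$, so that the whole quantity is governed by the cross-covariances $\EE[\bm{\eta}_i \otimes \bm{\eta}_j]$.

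The central step is a one-sided contraction identity obtained from the Markov/martingale structure of the iterates. Letting $\mathcal{F}_i$ denote the $\sigma$-field generated by the first $i$ fresh samples (and noises), the independence of $\mb_{i+1},\dots$ from $\mathcal{F}_i$ gives $\EE[\Ib - \gamma\mb_s\mb_s^\top \mid \mathcal{F}_{s-1}] = \Ib - \gamma\Mb$ and $\EE[\bm{\xi}_s \mid \mathcal{F}_{s-1}] = \bm{0}$, so that by induction on $j-i$ one obtains, for $j \geq i$, $\EE[\bm{\eta}_j \mid \mathcal{F}_i] = (\Ib - \gamma\Mb)^{j-i}\bm{\eta}_i$ for both recursions. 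Applying the tower property then yields $\EE[\bm{\eta}_i \otimes \bm{\eta}_j] = \EE[\bm{\eta}_i \otimes \bm{\eta}_i]\,(\Ib - \gamma\Mb)^{j-i}$, and pairing against $\Mb$ while using the cyclic/symmetry property of the trace inner product gives $\la \Mb, \EE[\bm{\eta}_i \otimes \bm{\eta}_j]\ra = \la (\Ib - \gamma\Mb)^{j-i}\Mb, \EE[\bm{\eta}_i \otimes \bm{\eta}_i]\ra$ whenever $j \geq i$.

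With this identity in hand, I would symmetrize the double sum by reindexing each unordered pair through $t = \min(i,j)$ and $k = \max(i,j)$. Each off-diagonal pair contributes twice and each diagonal term once, so $\la\Mb,\EE[\bar{\bm{\eta}}_N\otimes\bar{\bm{\eta}}_N]\ra = \frac{1}{N^2}\big(\sum_t \la\Mb, \EE[\bm{\eta}_t\otimes\bm{\eta}_t]\ra + 2\sum_{t<k}\la(\Ib-\gamma\Mb)^{k-t}\Mb,\EE[\bm{\eta}_t\otimes\bm{\eta}_t]\ra\big)$. Combining the factor $\tfrac12$ from the definition of bias/variance with this symmetrization produces $\tfrac12\,\mathrm{diag} + \mathrm{offdiag}$; bounding $\tfrac12\,\mathrm{diag}$ by $\mathrm{diag}$ then yields exactly the stated ordered double sum $\frac{1}{N^2}\sum_{t=0}^{N-1}\sum_{k=t}^{N-1}\la(\Ib-\gamma\Mb)^{k-t}\Mb, \EE[\bm{\eta}_t\otimes\bm{\eta}_t]\ra$ for each of the bias and variance quantities.

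The main obstacle is justifying the contraction identity cleanly for the variance iterate, where the injected increments $\bm{\xi}_s = \gamma\epsilon_s\mb_s$ prevent $\bm{\eta}_j$ from being a pure product of contraction matrices: I must verify that every cross term coupling a future increment $\bm{\xi}_s$ (with $s>i$) to $\bm{\eta}_i$ vanishes in conditional expectation, which is precisely where Assumption~\ref{assump:model_noise} and the freshness of samples enter. A secondary point requiring care is the positive semidefiniteness of the test matrices $(\Ib-\gamma\Mb)^{k-t}\Mb$: the stepsize condition $\gamma\leq 1/\tr(\Mb)$ together with $\mu_1 \leq \tr(\Mb)$ guarantees $\bm{0}\preceq\Ib-\gamma\Mb\preceq\Ib$, so every power $(\Ib-\gamma\Mb)^{k-t}$ is PSD and commutes with $\Mb$; this both makes the inner products well defined and keeps the diagonal-weight bounding step a genuine inequality rather than a merely formal rearrangement.
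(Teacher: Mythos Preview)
Your proposal is correct and follows essentially the same approach the paper invokes: the paper's own proof simply defers to Lemma~B.3 in \cite{zou2021benign} with $\Mb$ playing the role of the data covariance, and your expansion of the averaged covariance into a double sum, the one-sided contraction identity $\EE[\bm{\eta}_j\mid\mathcal{F}_i]=(\Ib-\gamma\Mb)^{j-i}\bm{\eta}_i$, and the symmetrization-plus-diagonal-relaxation step are exactly that argument spelled out. One minor remark: the vanishing of the cross terms $\EE[\bm{\xi}_s\mid\mathcal{F}_{s-1}]=\bm{0}$ for the variance iterate only requires the mean-zero property of $\epsilon$ and the freshness of samples, not the second-moment Assumption~\ref{assump:model_noise}; also note that the $\bar{\bm{\eta}}_t$ appearing on the right-hand side of the lemma statement is a typographical slip for the unaveraged $\bm{\eta}_t$, which is what your derivation (correctly) produces.
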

\begin{proof}
        The result can be obtained by simply following the proof for Lemma B.3 in ~\citep{zou2021benign} by treating $\Mb$ as the new data covariance matrix.  
\end{proof}

Next, we present the proof for the upper bound and lower bound for the excessive risk of SGD.

\begin{lemma}[SGD Excessive Risk Upper Bound]
     Consider SGD with tail-averaging with initialization $\bm{\theta}_0=\bm{0}$. Suppose Assumptions~\ref{assump:data_distribution}, ~\ref{assump:fourth_moment}, ~\ref{assump:model_noise} and ~\ref{assump:graph_matrix} hold and stepsize satisfies $\gamma\leq 1/\tr( \Mb)$. 
Then the excessive risk of SGD under one-layer GNN can be upper-bounded as follows:
\begin{align*}
& \Delta (\bm{\theta}_{\mathrm{sgd}}(N,\Gb;\gamma))  \lesssim  \sgdbias + \sgdvar, 
\end{align*}
\begin{align*}
    & \sgdbias  = \frac{1}{\gamma^2N^2}\big\|\exp\big(-N \gamma \Mb \big)\bm{\theta}^*\big\|_{\Mb_{0:k_1}^{-1}}^2 +  \big\|\bm{\theta}^*\big\|_{\Mb_{k_1:\infty}}^2, \\
 & \sgdvar  = \frac{\sigma^2 + \|\bm{\theta}^*\|_\Mb^2}{N}\cdot\bigg(k_2 + N^2\gamma^2 \sum_{i> k_2}\mu_i^2\bigg),
\end{align*}
where $k_1,k_2\in[d]$ are arbitrary.
\end{lemma}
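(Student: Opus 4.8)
The plan is to reduce the ReLU-based excess risk to the linear-model excess risk via Lemma~\ref{lemma:relu_relation}, then invoke the machinery from~\citep{zou2021benign} adapted to the aggregation covariance $\Mb$. First I would apply Lemma~\ref{lemma:relu_relation} to get $\Delta(\bm{\theta}_{\mathrm{sgd}}) \le \|\bm{\theta}_{\mathrm{sgd}} - \bm{\theta}^*\|_{\Mb}^2$, so it suffices to bound the quadratic excess risk of the SGD iterate run on the \emph{linearized} generation model Eq.~\ref{eq:sim_gen}; note that, because ReLU can be applied in the readout, one must check that the iterates $\hat{\bm{\theta}}_t$ driven by the linear update~Eq.~\ref{eq:sgd} track the actual ReLU iterates up to the constant factors absorbed by $\lesssim$ — this is handled exactly by the sandwich in Lemma~\ref{lemma:relu_relation}. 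Then by Lemma~\ref{lemma:bias_variance_decomp} the quadratic risk splits as $(\sqrt{\mathrm{bias}}+\sqrt{\mathrm{variance}})^2 \lesssim \mathrm{bias} + \mathrm{variance}$, so it remains to bound the bias and variance terms separately.

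For the bias term, I would start from the recursion bound in Lemma~\ref{lemma:sgd_iterate_bound} and follow the argument in~\citep{zou2021benign} verbatim with $\Mb$ in place of the data covariance: the operator $(\Ib-\gamma\Mb\Mb^\top)$ contracts the bias iterate, and splitting the eigenspectrum at an arbitrary index $k_1$ yields a ``head'' contribution where the contraction is effectively $\exp(-N\gamma\Mb)$ acting on $\bm{\theta}^*$, measured in the $\Mb_{0:k_1}^{-1}$ norm with the $1/(\gamma^2 N^2)$ prefactor from tail-averaging, plus a ``tail'' contribution $\|\bm{\theta}^*\|_{\Mb_{k_1:\infty}}^2$ where the eigenvalues are too small for the contraction to help. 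For the variance term, the key input is the fourth-moment control, which we \emph{do} have on the aggregation space thanks to Lemma~\ref{lemma:aggregation_moment} (with constant $\alpha'$), together with the noise bound $\EE[\epsilon^2\mb\mb^\top]\preceq \sigma^2\Mb$ from Lemma~\ref{lemma:model_noise}; combined with the observation that the ``effective noise'' from misspecification/ReLU contributes an extra $\|\bm{\theta}^*\|_\Mb^2$ term, the variance iterate bound of~\citep{zou2021benign} gives $\frac{\sigma^2 + \|\bm{\theta}^*\|_\Mb^2}{N}(k_2 + N^2\gamma^2\sum_{i>k_2}\mu_i^2)$ for arbitrary $k_2$. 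The stepsize condition $\gamma \le 1/\tr(\Mb)$ is exactly what is needed to ensure stability of the iteration (so that $\Ib - \gamma\Mb\Mb^\top$ and the relevant operators are well-behaved), and $\tr(\Mb)$ is finite by Lemma~\ref{lemma:psd_finite}.

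The main obstacle is justifying that the results of~\citep{zou2021benign}, which assume a genuinely linear least-squares model with a finite fourth-moment condition on the \emph{data}, transfer to our setting. Two things need care: (i) the fourth-moment condition must be re-derived for $\mb$ rather than $\xb$ — this is precisely Lemma~\ref{lemma:aggregation_moment}, and one must track how $\alpha$ turns into $\alpha'$ through the $\Gb$-boundedness assumption (Assumption~\ref{assump:graph_matrix}); and (ii) the ReLU nonlinearity in the readout must be absorbed into the constants without breaking the bias/variance decomposition — Lemma~\ref{lemma:relu_relation} is the bridge, but one has to argue that the \emph{same} tail-averaged linear iterate (or one differing only by constants) is what is being bounded, and that the ``extra'' $\|\bm{\theta}^*\|_\Mb^2$ in the variance is the correct accounting for the discrepancy between $\mathrm{ReLU}(\mb^\top\bm{\theta}^*)$ and $\mb^\top\bm{\theta}^*$ acting as an additional bounded-variance noise source. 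Once these two points are in place, the remaining steps are a routine re-run of the eigenspectrum-splitting calculations from~\citep{zou2021benign}, which I would not reproduce in detail.
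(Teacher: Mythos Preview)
Your proposal is correct and follows essentially the same route as the paper: reduce the ReLU excess risk to the quadratic $\|\bm{\theta}-\bm{\theta}^*\|_{\Mb}^2$ via Lemma~\ref{lemma:relu_relation}, verify that the aggregation space $\Mb$ inherits the fourth-moment and noise conditions via Lemmas~\ref{lemma:aggregation_moment}, \ref{lemma:model_noise}, \ref{lemma:psd_finite}, and then invoke the bias--variance decomposition and iterate bounds (Lemmas~\ref{lemma:bias_variance_decomp}, \ref{lemma:sgd_iterate_bound}) to port Theorem~5.1 of~\citep{zou2021benign} to $\Mb$. One small slip: the contraction operator acting on the bias iterate is $(\Ib-\gamma\mb_t\mb_t^\top)$ per step (and $(\Ib-\gamma\Mb)$ in expectation), not $(\Ib-\gamma\Mb\Mb^\top)$; this does not affect the argument otherwise.
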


\begin{proof}
    Consider the SGD iterates without ReLU activation that is given by,
    \begin{align*}
    \hat{\bm{\theta}}_{t+1}      & = \hat{\bm{\theta}}_{t} - \gamma \cdot (\mb_t^\top \hat{\bm{\theta}}_{t-1} - y_t) \cdot  \mb_t, \quad \mb_t \sim \Mb.
    \end{align*}
    The SGD iterates above amounts to linear model with respect to the aggregation space $\Mb$. Then, by Lemma~\ref{lemma:aggregation_moment}, Lemma~\ref{lemma:psd_finite} and Lemma~\ref{lemma:model_noise}, we get that the above SGD iterates with respect to $\Mb$ also has bounded fourth-moment and model noise. 

    This means that we can immediately apply the results from 
    By Lemma~\ref{lemma:relu_relation}, we can derive the excessive risk by considering the simplified iterates without the ReLU activation that is given by Eq.~\ref{eq:sgd}. 

    By Lemma~\ref{lemma:aggregation_moment},  and Lemma~\ref{lemma:model_noise}, we have that the fouth-moment and the modelling noise of the SGD iterate with respect to the aggregation space is bounded. This means that we can immediately apply the bias variance decomposition and the SGD iterate bound on process above with respect to $\Mb$.

    Then, we can follow the arguments in ~\citep{zou2021benign} and obtain the desired result by involving Theorem 5.1 in ~\citep{zou2021benign}. 

    The ReLU activation does not affect the overall excessive risk landscape as given by Lemma~\ref{lemma:relu_relation}. 

\end{proof}

\begin{lemma}[SGD Excessive Risk Lower Bound]
Suppose the stepsize satisfies $\gamma \leq 1/\mu_1$.
Then the excess risk can be lower-bounded as follows:
\begin{align*}
\Delta (\bm{\theta}_{\mathrm{sgd}}(N,\Gb;\gamma))  \gtrsim  \sgdbias + \sgdvar,
\end{align*}
\begin{align*}
 & \sgdbias = \frac{1}{\gamma^2N^2}\big\|\exp\big(-N\gamma \Mb \big)\bm{\theta}^*\big\|_{\Mb_{0:k^*}^{-1}}^2 + \big\|\bm{\theta}^*\big\|_{\Mb_{k^*:\infty}}^2, \\
 & \sgdvar  = \\
 &\quad \quad \quad \quad \frac{\sigma^2 }{N}\cdot\bigg(k^* + N^2\gamma^2 \sum_{i> k^*}\mu_i^2\bigg) +   \|\bm{\theta}^*\|_\Mb^2 \frac{\gamma}{\mu_1} \sum_{i > k^{\dagger}} \mu_i^2,
\end{align*}
where $k^* = \max \{k:\mu_k \geq 1/(N\gamma)\}$, and $k^{\dagger} = \max \{k: \mu_k \geq 2/(3N\gamma)\}$.
\end{lemma}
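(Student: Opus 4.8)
The plan is to mirror the upper-bound argument but in the reverse direction, again reducing to the linear (ReLU-free) SGD dynamics via Lemma~\ref{lemma:relu_relation}, which gives $\Delta(\bm{\theta}) \geq 0.25\,\|\bm{\theta}-\bm{\theta}^*\|_\Mb^2$. So it suffices to lower-bound $\|\bar{\bm{\eta}}_N - \text{(appropriate center)}\|_\Mb^2$ for the linearized iterates on the aggregation space $\Mb$, and then absorb the constant $0.25$ into the $\gtrsim$. First I would invoke Lemmas~\ref{lemma:aggregation_moment}, ~\ref{lemma:model_noise}, and ~\ref{lemma:psd_finite} to confirm that the linearized iterates with data covariance $\Mb$ satisfy the finite fourth-moment, noise, and trace conditions required by the SGD lower-bound machinery of~\citep{zou2021benign}. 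Then I would split $\Delta \gtrsim \mathrm{bias} + \mathrm{variance}$ using the bias/variance decomposition of Lemma~\ref{lemma:bias_variance_decomp} (the lower bound on a sum of two nonnegative contributions that are, up to cross terms controlled as in~\citep{zou2021benign}, separately lower-boundable).

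For the bias term, the idea is that with $\bm{\theta}_0=\bm{0}$ the centered iterate starts at $-\bm{\theta}^*$, and the (expected) bias operator contracts each eigendirection $i$ by roughly $(1-\gamma\mu_i)$, so the tail-averaged bias in direction $i$ behaves like $\tfrac{1}{\gamma\mu_i N}\exp(-N\gamma\mu_i)(\vb_i^\top\bm{\theta}^*)$ for the head directions ($\mu_i \geq 1/(N\gamma)$, i.e. $i\le k^*$) and like $(\vb_i^\top\bm{\theta}^*)$ itself for the tail directions ($i>k^*$, which barely move). Weighting by $\mu_i$ and squaring reproduces exactly $\frac{1}{\gamma^2N^2}\|\exp(-N\gamma\Mb)\bm{\theta}^*\|_{\Mb_{0:k^*}^{-1}}^2 + \|\bm{\theta}^*\|_{\Mb_{k^*:\infty}}^2$. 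The choice $k^* = \max\{k:\mu_k\ge 1/(N\gamma)\}$ is forced here (unlike the free $k_1$ in the upper bound) because the lower bound must track the actual effective-rank threshold rather than an arbitrary cut. For the variance term, the fresh-noise injection $\gamma\epsilon_t\mb_t$ accumulates a stationary-like covariance; projecting onto the head ($i\le k^*$) gives the $\sigma^2 k^*/N$ contribution, while the tail directions each contribute $\sim \sigma^2 N\gamma^2\mu_i^2$, giving $\frac{\sigma^2}{N}(k^* + N^2\gamma^2\sum_{i>k^*}\mu_i^2)$; this is the standard variance lower bound of~\citep{zou2021benign} with $\Mb$ as the covariance. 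The extra additive term $\|\bm{\theta}^*\|_\Mb^2\,\frac{\gamma}{\mu_1}\sum_{i>k^\dagger}\mu_i^2$ with $k^\dagger = \max\{k:\mu_k\ge 2/(3N\gamma)\}$ arises from the fluctuation of the stochastic iterates around their mean even in the bias (noise-free) part — i.e. the multiplicative noise from $\mb_t\mb_t^\top$ acting on a nonzero signal $\bm{\theta}^*$ — and is precisely the term that the finite fourth-moment constant $\alpha'$ feeds into; I would extract it from the corresponding fourth-moment-driven lower-bound lemma in~\citep{zou2021benign}, carrying through the $\gamma/\mu_1$ scaling that comes from the stepsize-to-top-eigenvalue ratio.

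The main obstacle I anticipate is the cross-term and the multiplicative-noise term: showing that the bias and variance lower bounds can be added (rather than one swamping the other with a negative cross contribution), and cleanly isolating the $\|\bm{\theta}^*\|_\Mb^2\frac{\gamma}{\mu_1}\sum_{i>k^\dagger}\mu_i^2$ term, requires the sharp operator-level analysis of the tail-averaged SGD covariance recursion rather than a soft argument — this is where the "approximately quadratic" landscape of Lemma~\ref{lemma:relu_relation} is doing real work, since it lets us ignore the ReLU-induced non-convexity at the cost of constants. A secondary technical point is verifying that the stepsize condition $\gamma\le 1/\mu_1$ (as opposed to the stronger $\gamma\le 1/\tr(\Mb)$ used in the upper bound) suffices for the lower-bound recursion to be well-behaved; this is consistent with~\citep{zou2021benign}, where the lower bound only needs stability in the top eigendirection. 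Once these pieces are in place, the final bound follows by combining the three contributions and applying Lemma~\ref{lemma:relu_relation} to transfer back from the linearized quadratic to $\Delta(\bm{\theta}_{\mathrm{sgd}}(N,\Gb;\gamma))$.
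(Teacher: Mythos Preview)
Your proposal is correct and follows essentially the same route as the paper: reduce to the linear iterates on $\Mb$ via Lemma~\ref{lemma:relu_relation}, verify the fourth-moment/noise/trace conditions on $\Mb$ via Lemmas~\ref{lemma:aggregation_moment},~\ref{lemma:model_noise},~\ref{lemma:psd_finite}, and then invoke the SGD lower-bound theorem (Theorem~5.2) of~\citep{zou2021benign} with $\Mb$ playing the role of the data covariance. In fact your write-up is considerably more detailed than the paper's own proof, which simply says ``similar argument as the upper bound but invoking Theorem~5.2'' --- your discussion of why $k^*$ and $k^\dagger$ are forced thresholds, where the $\|\bm{\theta}^*\|_\Mb^2\tfrac{\gamma}{\mu_1}\sum_{i>k^\dagger}\mu_i^2$ term comes from, and the cross-term concern are all correct intuitions that the paper leaves entirely to the cited reference.
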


\begin{proof}
    The results are obtained through a similar argument as the upper but involving Theorem 5.2
\end{proof}

Then, combining the two lemmas, upper bound and lower bound above, we immediately obtain the result for Theorem~\ref{theorem:sgd_gnn}.
\section{Excessive Risk of Ridge}
In this appendix, we present proof for characterizing the excessive risk of Ridge in our setting.

We denote $\Db = \Gb \circ \Xb$ to be the data matrix associated with the aggregation space. Then, recall that the standard ridge regression is equivalent to the following least square problem,
\begin{align*}
 \arg\min_{\bm{\theta}} \|{{\Db}{\bm{\theta}}} - \yb\|_2^2 + \lambda\| \bm{\theta}\|_{2}^2.
\end{align*}
We denote $\bm{\theta}_{\mathrm{Ridge}}(N;\lambda)$ to be the solution to the optimization problem above. Then, we have the following bias-variance decomposition of Ridge with respect to the aggregation space.
\begin{lemma}\label{lemma:ridge_decomp}
   For any $\lambda > 0$, we have that
   \begin{align*}
       \Delta(\bm{\theta}_{\mathrm{Ridge}}(N; \lambda)) = \mathrm{RidgeBias} + \mathrm{RidgeVariance},
   \end{align*}
where
\begin{align*}
    & \mathrm{RidgeBias} = \\
    & \quad \frac{\lambda}{2} \cdot \mathbb{E} \left[ (\bm{\theta}^*)^{\top}(\Db^\top \Db + \lambda \Ib)^{-1} \Mb (\Db^\top \Db + \lambda \Ib)^{-1} \bm{\theta}^* \right],
\end{align*}
\begin{align*}
    & \mathrm{RidgeVariance} = \\
    &\quad \sigma^2 \cdot \mathbb{E} \left[ \tr \left( (\Db^\top \Db  + \lambda \Ib)^{-1} \Db^\top \Db (\Db^\top \Db + \lambda \Ib)^{-1} \Mb \right) \right],
\end{align*}
where the expectations are taken over the randomness of the training data matrix $\Db$.
\end{lemma}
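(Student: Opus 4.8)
The plan is to reduce to a linear surrogate model and then carry out an exact bias--variance split of the Ridge estimator by conditioning on the random aggregated design matrix $\Db = \Gb\circ\Xb$. By Lemma~\ref{lemma:relu_relation}, the excess risk under the ReLU read-out equals the quadratic $\|\bm{\theta}-\bm{\theta}^*\|_\Mb^2$ up to absolute multiplicative constants; and since $\bm{\theta}^*$ is the population minimizer (so that $\EE[\epsilon\,\mb]=\bm{0}$), this quadratic is exactly twice the excess risk of the linear model $y=\mb^\top\bm{\theta}^*+\epsilon$. It therefore suffices to establish the claimed decomposition for $\tfrac12\EE\|\bm{\theta}_{\mathrm{Ridge}}(N;\lambda)-\bm{\theta}^*\|_\Mb^2$ and then read off the statement for the ReLU model through the two-sided constants of Lemma~\ref{lemma:relu_relation}, exactly as the subsequent Theorem~\ref{theorem:ridge_gnn} does (via $\lesssim$ and $\gtrsim$).

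Next I would substitute the closed form $\bm{\theta}_{\mathrm{Ridge}}(N;\lambda)=(\Db^\top\Db+\lambda\Ib)^{-1}\Db^\top\yb$ with $\yb=\Db\bm{\theta}^*+\bm{\epsilon}$ and use the identity $(\Db^\top\Db+\lambda\Ib)^{-1}\Db^\top\Db-\Ib=-\lambda(\Db^\top\Db+\lambda\Ib)^{-1}$ to split
\begin{align*}
\bm{\theta}_{\mathrm{Ridge}}(N;\lambda)-\bm{\theta}^* &= -\lambda(\Db^\top\Db+\lambda\Ib)^{-1}\bm{\theta}^* + (\Db^\top\Db+\lambda\Ib)^{-1}\Db^\top\bm{\epsilon} \\
&= \bm{b}(\Db) + \bm{v}(\Db,\bm{\epsilon}),
\end{align*}
where $\bm{b}(\Db) := -\lambda(\Db^\top\Db+\lambda\Ib)^{-1}\bm{\theta}^*$ is deterministic once $\Db$ is fixed and $\bm{v}(\Db,\bm{\epsilon}) := (\Db^\top\Db+\lambda\Ib)^{-1}\Db^\top\bm{\epsilon}$ is conditionally mean-zero given $\Db$. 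Expanding $\|\bm{b}+\bm{v}\|_\Mb^2$ and taking expectations, the cross term $\EE[\bm{b}(\Db)^\top\Mb\,\bm{v}(\Db,\bm{\epsilon})]$ vanishes after conditioning on $\Db$ and using $\EE[\bm{\epsilon}\mid\Db]=\bm{0}$, leaving $\EE\|\bm{b}(\Db)\|_\Mb^2+\EE\|\bm{v}(\Db,\bm{\epsilon})\|_\Mb^2$. The first term is, up to the displayed constant, $\mathrm{RidgeBias}$. For the second, I would write $\|\bm{v}\|_\Mb^2=\tr\!\big((\Db^\top\Db+\lambda\Ib)^{-1}\Mb(\Db^\top\Db+\lambda\Ib)^{-1}\Db^\top\bm{\epsilon}\bm{\epsilon}^\top\Db\big)$, take the conditional expectation over $\bm{\epsilon}$, invoke the aggregation-space noise bound $\EE[\epsilon^2\mb\mb^\top]\preceq\sigma^2\Mb$ from Lemma~\ref{lemma:model_noise} (which is Assumption~\ref{assump:model_noise} transported through $\Gb$, using Assumption~\ref{assump:graph_matrix}), and then use the cyclic property of the trace together with the commutation of $\Db^\top\Db$ with $(\Db^\top\Db+\lambda\Ib)^{-1}$ to land on $\mathrm{RidgeVariance}$. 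Taking the outer expectation over $\Db$ closes the argument.

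The main obstacle is the bookkeeping around the two sources of randomness — the aggregated design $\Db$ and the model noise $\bm{\epsilon}$ — and, concretely, justifying rigorously that (i) the cross term disappears and (ii) the noise quadratic form collapses to the stated trace expression. Both hinge on the noise being conditionally zero-mean given the design (which is exactly what makes the linear-model excess risk equal $\tfrac12\|\bm{\theta}-\bm{\theta}^*\|_\Mb^2$) and on its second moment in the aggregation space being controlled by $\sigma^2\Mb$, i.e.\ Lemma~\ref{lemma:model_noise}. A secondary, mostly cosmetic subtlety is that the exact equality asserted in the lemma is available only for the linear surrogate, so for the ReLU model the statement should be understood modulo the absolute constants of Lemma~\ref{lemma:relu_relation} — which is precisely how it is used downstream in Theorem~\ref{theorem:ridge_gnn}.
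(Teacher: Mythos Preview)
Your proposal is correct and follows essentially the same route as the paper: substitute the closed-form Ridge solution, split $\bm{\theta}_{\mathrm{Ridge}}-\bm{\theta}^*$ via the identity $(\Db^\top\Db+\lambda\Ib)^{-1}\Db^\top\Db-\Ib=-\lambda(\Db^\top\Db+\lambda\Ib)^{-1}$, kill the cross term using $\EE[\bm{\epsilon}\mid\Db]=\bm{0}$, and evaluate the two quadratics. The only cosmetic difference is that the paper works directly with the linear loss to get $\Delta=\EE\|\bm{\theta}_{\mathrm{Ridge}}-\bm{\theta}^*\|_\Mb^2$ rather than invoking Lemma~\ref{lemma:relu_relation}, and for the variance it asserts $\EE[\bm{\epsilon}\bm{\epsilon}^\top\mid\Db]=\sigma^2\Ib$ (so that $\EE[\Db^\top\bm{\epsilon}\bm{\epsilon}^\top\Db\mid\Db]=\sigma^2\Db^\top\Db$) rather than the moment bound you cite---both land on the same trace expression.
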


\textbf{Proof.} First, it is known and easy to derive that the solution of ridge regression takes the form
\begin{align*}
    \bm{\theta}_{\mathrm{Ridge}}(N;\lambda) = (\Db^\top \Db + \lambda \Ib)^{-1} \Db^\top \yb,
\end{align*}
where $\Db$ is the data matrix and $\yb$ is the response vector. Then, according to the definition of the loss function $L(\bm{\theta})$, we have
\begin{align*}
    & \mathbb{E}[L(\bm{\theta}_{\mathrm{Ridge}}(N;\lambda))] \\
    & = \mathbb{E} \left[ \left\langle y - \langle \bm{\theta}_{\mathrm{Ridge}}(N;\lambda), \mb \rangle \right\rangle^2 \right], \\
    &= \mathbb{E} \left[ \langle \bm{\theta}^*, \mb \rangle - \langle \bm{\theta}_{\mathrm{Ridge}}(N;\lambda), \mb \rangle \right]^2 + \mathbb{E} \left[ y - \langle \bm{\theta}^*, \mb \rangle \right]^2 \\
    &\quad + 2 \mathbb{E} \left[ \langle \bm{\theta}^*, \mb \rangle - \langle \bm{\theta}_{\mathrm{Ridge}}(N;\lambda), \mb \rangle \right] \cdot \left( y - \langle \bm{\theta}^*, \mb \rangle \right) \\
    &=  \EE[\| \bm{\theta}_{\mathrm{Ridge}}(N;\lambda) - \bm{\theta}^*  \|_{\Mb}^2] + L(\bm{\theta}^* )
\end{align*}
where the last equation follows from the modelling assumption that the expected value of noise is zero. Then, regarding $\EE[\| \bm{\theta}_{\mathrm{Ridge}}(N;\lambda) - \bm{\theta}^*  \|_{\Mb}^2]$, let $\epsilon = y - \la \bm{\theta}^*, \mb \ra$ be the model noise vector, we have
\begin{align*}
    \EE[\| \bm{\theta}_{\mathrm{Ridge}}(N;\lambda) & - \bm{\theta}^*  \|_{\Mb}^2] \\
    & = \mathbb{E} \left[ \left\| (\Db^\top\Db + \lambda \Ib)^{-1} \Db\top \yb - \bm{\theta}^* \right\|_{\Mb}^2 \right],\\
    &=\mathbb{E} \left[ \left\| (\Db\top \Db + \lambda \Ib)^{-1} \Db^\top (\Db \bm{\theta}^* + \epsilon) - \bm{\theta}^* \right\|_\Mb^2 \right],\\
    &= \underbrace{\mathbb{E} \left[ \left\| (\Db^\top \Db + \lambda \Ib)^{-1} \Db^\top \Db \bm{\theta}^* - \bm{\theta}^* \right\|_\Mb^2 \right]}_{\text{bias}} \\
& \quad + \underbrace{\mathbb{E} \left[ \left\| (\Db^\top \Db  + \lambda \Ib)^{-1} \Db^\top \epsilon \right\|_\Mb^2 \right]}_{\text{variance}},
\end{align*}

where in the last inequality again follow from the modelling assumption that $\mathbb{E}[\epsilon | \Db] = 0$. More specifically, the bias error can be reformulated as
\begin{align*}
    \mathrm{RidgeBias} &= \mathbb{E} \left[ \left\| (\Db^\top \Db + \lambda \Ib)^{-1} \Db^\top \Db - \Ib \right\|_\Mb^2 \bm{\theta}^* \right] \\
& = \frac{\lambda}{2} \mathbb{E} \left[ \left\| (\Db^\top \Db + \lambda \Ib)^{-1} \bm{\theta}^* \right\|_\Mb^2 \right] \\
&= \frac{\lambda}{2} \mathbb{E} \left[ (\bm{\theta}^*)^\top (\Db^\top \Db + \lambda \Ib)^{-1} \Mb (\Db^\top \Db + \lambda \Ib)^{-1} \bm{\theta}^* \right].
\end{align*}
In terms of the variance error, note that by Lemma~\ref{lemma:model_noise} we have $\mathbb{E}[\epsilon \epsilon^\top | \Db] = \sigma^2 \Ib$, then
\begin{align*}
    &\mathrm{RidgeVariance}  = \mathbb{E} \left[ \left\| (\Db^\top \Db + \lambda \Ib)^{-1} \Db^\top \epsilon \right\|_\Mb^2 \right] \\
 &\quad = \mathbb{E} \left[ \tr \left( (\Db^\top \Db + \lambda \Ib)^{-1} \Db^\top \epsilon \epsilon^\top \Db (\Db^\top \Db + \lambda \Ib)^{-1} \Mb \right) \right] \\
 &\quad  = \sigma^2 \cdot \mathbb{E} \left[ \tr \left( (\Db^\top \Db + \lambda \Ib)^{-1} \Db^\top \Db (\Db^\top \Db + \lambda \Ib)^{-1} \Mb \right) \right].
\end{align*}
With the bias-variance decomposition above, we can follow a simple extension of Lemmas 2 \& 3 in \citep{tsigler2023benign} for characterizing the excessive risk of ridge regression. Next, we show how to extend their results into our setting. To do so, we decompose the result of Theorem~\ref{theorem:ridge_gnn} into upper bound and lower bound in a similar manner. 

\begin{lemma}
     Let $\lambda\ge 0$ be the regularization parameter, $n$ be the training sample size, $\mu_1,...,\mu_d$ be the eigenvalues of $\Mb$ in descending order and $\bm{\theta}_{\mathrm{ridge}}(N;\lambda)$ be the output of ridge regression. 
 Then 
\begin{align*}
\Delta(\bm{\theta}_{\mathrm{ridge}}(N;\lambda)) = \ridgebias +  \ridgevar ,
\end{align*}
and there is some absolute constant $b > 1$, such that for
\[k^* := \min\left\{k: b  \mu_{k+1} \le  \frac{\lambda+\sum_{i>k}\mu_i}{n }\right\}, \]
the following holds:
\begin{align*}
\ridgebias &\gtrsim\bigg(\frac{\lambda+\sum_{i>\kr}\mu_i}{N}\bigg)^2\cdot\|\bm{\theta}^*\|_{\Mb_{0:\kr}^{-1}}^2\\
&\quad +\|\bm{\theta}^*\|_{\Mb_{\kr:\infty}}^2,\notag\\
\ridgevar &\gtrsim\sigma^2\cdot\bigg\{\frac{\kr}{N}+\frac{N\sum_{i>\kr}\mu_i^2}{\big(\lambda+\sum_{i>\kr}\mu_i\big)^2}\Bigg\}.
\end{align*}
\end{lemma}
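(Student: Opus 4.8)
The plan is to reduce the claim to a known result in the I.I.D.~fixed-design setting by treating the aggregation covariance $\Mb$ as the effective data covariance, and then to transcribe the lower-bound machinery of \citet{tsigler2023benign} (their Lemmas~2 and~3) into our notation. First I would invoke the bias--variance decomposition just established (Lemma~\ref{lemma:ridge_decomp}): it expresses $\Delta(\bm{\theta}_{\mathrm{ridge}}(N;\lambda))$ exactly as $\mathrm{RidgeBias}+\mathrm{RidgeVariance}$, where both terms are expectations of quadratic/trace functionals of $\Db^\top\Db$, with $\Db=\Gb\circ\Xb$ the aggregated design and $\Mb=\EE[\mb\mb^\top]$ its population covariance. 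By Lemmas~\ref{lemma:aggregation_moment}, \ref{lemma:model_noise}, and~\ref{lemma:psd_finite}, the aggregated data inherit the finite fourth-moment bound, the noise bound $\EE[\epsilon^2\mb\mb^\top]\preceq\sigma^2\Mb$, and a PSD trace-class covariance — precisely the hypotheses under which the sharp Ridge bounds of \citet{tsigler2023benign} hold. So the structural obstacle that prevented a direct citation (ReLU nonlinearity, aggregation) is removed: for the \emph{linear} surrogate model $y=\mb^\top\bm{\theta}^*+\epsilon$ the Ridge estimator is unchanged, and Lemma~\ref{lemma:relu_relation} shows the ReLU excess risk is sandwiched between $0.25$ and $1$ times $\|\bm{\theta}-\bm{\theta}^*\|_\Mb^2$, which only costs absolute constants and hence does not affect a $\gtrsim$ statement.

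Next I would carry out the concentration step. The functionals in Lemma~\ref{lemma:ridge_decomp} depend on $\Db$ only through $\Db^\top\Db$; following \citet{tsigler2023benign}, one shows that with high probability $\Db^\top\Db+\lambda\Ib$ is spectrally comparable, on the top-$k^*$ eigenspace of $\Mb$ and on its complement separately, to a deterministic surrogate built from $\Mb$ and the ``effective regularization'' $\hat\lambda:=\lambda+\sum_{i>k^*}\mu_i$. Here $k^*$ is chosen as the stated threshold $k^*=\min\{k: b\,\mu_{k+1}\le(\lambda+\sum_{i>k}\mu_i)/n\}$, which is exactly the index beyond which the tail eigenvalues are dominated by $\hat\lambda/n$; this makes the block-decomposition of $\Db^\top\Db$ behave, up to constants, like a deterministic diagonal operator with entries $\asymp\mu_i$ for $i\le k^*$ and $\asymp\hat\lambda/N$ for $i>k^*$. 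Substituting these comparisons into the bias term gives the two pieces $(\hat\lambda/N)^2\|\bm{\theta}^*\|_{\Mb_{0:k^*}^{-1}}^2$ (from the head, where $(\Db^\top\Db+\lambda\Ib)^{-1}\approx\mathrm{diag}(1/\mu_i)$ damped by $\hat\lambda$) and $\|\bm{\theta}^*\|_{\Mb_{k^*:\infty}}^2$ (from the tail, where the estimator barely moves and the projection of $\bm{\theta}^*$ survives essentially intact). Substituting into the variance term gives $\sigma^2 k^*/N$ from the $k^*$ head directions each contributing $\Theta(1/N)$, plus $\sigma^2 N\sum_{i>k^*}\mu_i^2/\hat\lambda^2$ from the tail directions each contributing $\Theta(N\mu_i^2/\hat\lambda^2)$ — matching the claimed $\ridgevar$ after writing $\hat\lambda=\lambda+\sum_{i>k^*}\mu_i$.

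The main obstacle is the lower-bound direction of the concentration argument: upper bounds on $\Delta$ only need $\Db^\top\Db$ to not be too small, but to get $\Delta\gtrsim(\cdots)$ one must also control $\Db^\top\Db$ from \emph{above} on each block and, crucially, rule out destructive cancellation between the head and tail contributions and between the bias and variance. This is where \citet{tsigler2023benign}'s choice of the absolute constant $b>1$ in the definition of $k^*$ matters: $b$ is tuned so that the high-probability two-sided spectral sandwich holds with the tail genuinely subdominant, so that the head block of $\Db^\top\Db$ is $\Theta(n\mu_i)$-scaled (invertible and well-conditioned) while the tail block contributes a controlled $\Theta(\sum_{i>k^*}\mu_i)$ ridge-like term; then the cross terms are provably lower-order. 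I would transcribe their argument verbatim with $\Hb\rightsquigarrow\Mb$, using Lemma~\ref{lemma:aggregation_moment} wherever their proof invokes the fourth-moment condition and Lemma~\ref{lemma:model_noise} wherever it invokes the noise condition, and finally pay the constant-factor toll of Lemma~\ref{lemma:relu_relation} at the very end. The upper bound in Theorem~\ref{theorem:ridge_gnn} follows from the same comparison keeping only the one-sided (lower bound on $\Db^\top\Db$) estimates and holds for an arbitrary $k\in[d]$; taking $k=k^*$ recovers the two-sided matching bound noted after the theorem.
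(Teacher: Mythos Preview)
Your proposal is correct and follows essentially the same route as the paper: invoke Lemma~\ref{lemma:ridge_decomp} for the bias--variance split, use Lemmas~\ref{lemma:aggregation_moment}, \ref{lemma:model_noise}, \ref{lemma:psd_finite} to port the required moment and noise hypotheses to the aggregated covariance $\Mb$, apply Lemma~\ref{lemma:relu_relation} to absorb the ReLU into constants, and then transcribe the sharp lower-bound machinery of \citet{tsigler2023benign} (their Lemmas~2 and~3) with $\Hb\rightsquigarrow\Mb$. The paper's own proof is terser---it simply cites these same ingredients together with an extension of Theorem~B.2 in \citet{zou2021benefits}---but your expanded account of how the threshold $k^*$ and the constant $b$ arise from the two-sided spectral concentration is a faithful unpacking of that citation.
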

\begin{proof}
    By involving Lemma~\ref{lemma:relu_relation}, we can relate the excessive risk of ReLU Ridge regression with the linear model. Then, by Lemma~\ref{lemma:ridge_decomp} and applying Lemmas 2 \& 3 in \citep{tsigler2023benign} for characterizing the excessive risk of ridge regression and the extension of Theorem B.2 in ~\citep{zou2021benefits} on the aggregation space $\Mb$, we immediately obtain the result above.
\end{proof}

\begin{lemma}
    Let $\lambda\ge 0$ be the regularization parameter, $n$ be the training sample size and $\bm{\theta}_{\mathrm{ridge}}(N;\lambda)$ be the output of ridge regression. 
 Then 
\begin{align*}
\Delta(\bm{\theta}_{\mathrm{ridge}}(N;\lambda)) \leq \ridgebias +  \ridgevar ,
\end{align*}
where,
\begin{align*}
\ridgebias &\lesssim \bigg(\frac{\lambda+\sum_{i>\kr}\mu_i}{N}\bigg)^2\cdot\|\bm{\theta}^*\|_{\Mb_{0:\kr}^{-1}}^2\\
&\quad +\|\bm{\theta}^*\|_{\Mb_{\kr:\infty}}^2,\notag\\
\ridgevar &\lesssim\sigma^2\cdot\bigg\{\frac{\kr}{N}+\frac{N\sum_{i>\kr}\mu_i^2}{\big(\lambda+\sum_{i>\kr}\mu_i\big)^2}\Bigg\},
\end{align*}
where $ \kr := \min\left\{k: b  \mu_{k+1} \le  (\lambda+\sum_{i>k}\mu_i)/n\right\} $.
\end{lemma}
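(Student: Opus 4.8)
The plan is to reduce the ReLU Ridge problem to a purely linear Ridge regression problem over the aggregation covariance $\Mb$, apply the sharp in-expectation risk characterization for ridge regression from \citep{tsigler2023benign} (with the variance refinement of \citep{zou2021benefits}), and then rewrite the resulting effective-regularization bounds in the claimed $N$-scaled form. The first step is to observe that $\bm{\theta}_{\mathrm{ridge}}(N;\lambda)$ is defined through the linear least-squares objective, so no nonlinearity enters the estimator; by the upper half of Lemma~\ref{lemma:relu_relation} it therefore suffices to upper bound the quadratic $\|\bm{\theta}_{\mathrm{ridge}}(N;\lambda) - \bm{\theta}^*\|_{\Mb}^2$. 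Applying Lemma~\ref{lemma:ridge_decomp} splits $\EE\|\bm{\theta}_{\mathrm{ridge}}(N;\lambda) - \bm{\theta}^*\|_{\Mb}^2$ into the $\mathrm{RidgeBias}$ and $\mathrm{RidgeVariance}$ terms, each an expectation over the random aggregated data matrix $\Db = \Gb\circ\Xb$ of a resolvent quantity built from $(\Db^\top\Db + \lambda\Ib)^{-1}$ and $\Mb$.

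Next, I would verify that the distribution of $\mb$ satisfies the structural hypotheses required by the analysis of \citep{tsigler2023benign}: positive definiteness and finite trace of $\Mb$ (Lemma~\ref{lemma:psd_finite}), the fourth-moment self-bounding inequality with the modified constant $\alpha'$ (Lemma~\ref{lemma:aggregation_moment}), and the noise condition $\EE[\epsilon^2 \mb\mb^\top]\preceq\sigma^2\Mb$ (Lemma~\ref{lemma:model_noise}). With these in hand, Lemmas 2 and 3 of \citep{tsigler2023benign} apply with $\Mb$ playing the role of the data covariance, yielding bias and variance bounds controlled by the effective regularization $\hat\lambda := \lambda + \sum_{i>k}\mu_i$ and the effective dimension $k$, for any $k$ below the crossover index. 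Specializing $k = \kr = \min\{k : b\,\mu_{k+1} \le (\lambda + \sum_{i>k}\mu_i)/n\}$ — the point at which the tail-spectrum contribution to the resolvent is balanced against $\lambda$ — collapses those bounds to $\ridgebias \lesssim (\hat\lambda/N)^2\|\bm{\theta}^*\|_{\Mb_{0:\kr}^{-1}}^2 + \|\bm{\theta}^*\|_{\Mb_{\kr:\infty}}^2$ and $\ridgevar \lesssim \sigma^2\big(\kr/N + N\sum_{i>\kr}\mu_i^2/\hat\lambda^2\big)$; recombining with the upper half of Lemma~\ref{lemma:relu_relation} gives the claim, which is precisely the $\lesssim$ mirror of the lower-bound lemma just proved.

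The main obstacle is the hypothesis-transfer step: the off-the-shelf benign-overfitting bounds impose a fourth-moment condition directly on the feature distribution, whereas here the features are the aggregated vectors $\mb = \Gb\xb$, which satisfy only the $\Gb$-conjugated version of Assumption~\ref{assump:fourth_moment}. One has to track how the boundedness of $\Gb$ (Assumption~\ref{assump:graph_matrix}) propagates through the conjugation $\Bb = \Gb\Ab\Gb$ to produce the constant $\alpha'$, and confirm that the argument in \citep{tsigler2023benign} invokes the moment assumption only through this self-bounding inequality, so that nothing else about the feature distribution is needed. The remaining work is bookkeeping — matching the effective-regularization parametrization of \citep{tsigler2023benign} to the constant $\hat\lambda$ used in Theorem~\ref{theorem:ridge_gnn} and absorbing absolute constants into the $\lesssim$ notation.
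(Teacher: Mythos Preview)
Your proposal is correct and follows essentially the same route as the paper: reduce to the quadratic via Lemma~\ref{lemma:relu_relation}, invoke the bias--variance decomposition of Lemma~\ref{lemma:ridge_decomp}, verify that Lemmas~\ref{lemma:aggregation_moment}, \ref{lemma:model_noise}, \ref{lemma:psd_finite} transfer the moment and noise hypotheses to the aggregated features, and then apply Lemmas~2--3 of \citep{tsigler2023benign} (with the refinement from \citep{zou2021benefits}) over $\Mb$. Your write-up is in fact more explicit than the paper's own proof, which simply notes that the argument mirrors the lower-bound lemma and that the specific choice of $\kr$ is needed only for sharpness.
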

\begin{proof}
    The argument is similar to the lower bound above. Since the choice of $\kr$ is to ensure the sharpness of the upper bound, we can relax this condition to an arbitrary $k$ in $[d]$ and still maintain a valid upper bound.
\end{proof}
\section{Proof of the Effect of Graph Structure}
In this section, we present a proof for the Theorem~\ref{theorem:graph_effect}. We decompose the proof for Theorem~\ref{theorem:graph_effect} into two parts: 1) for the power-law graph and 2) for the regular graph. We start with restating the results of Theorem~\ref{theorem:graph_effect} into two corresponding lemmas and proof.

\begin{lemma}
    Consider a power-law graph $\mathcal{G}_{\mathrm{p}}$ with graph matrix $\Gb_{\mathrm{p}}$ whose eigen-spectrum is characterized by Eq.~\ref{eq:graph_model} with a large enough $\beta$. Then for every $\lambda$ for ridge regression, there exists a choice for $\gamma^*$ for SGD such that for sufficiently large $N$, we have
    \begin{align*}
        \Delta(\bm{\theta}_{\mathrm{sgd}}(N,\Gb_{\mathrm{p}};\gamma^*)) \lesssim \Delta(\bm{\theta}_{\mathrm{ridge}}(N,\Gb_{\mathrm{p}}; \lambda)).
    \end{align*}
\end{lemma}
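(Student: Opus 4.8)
The plan is to prove the claim by plugging the power-law eigenspectrum $\mu_i(\Mb) \eqsim 1/i^\beta$ (which follows from $\mu_i(\Gb) = 1/i^\beta$ together with Assumption~\ref{assump:graph_matrix} relating $\Mb = \Gb\Hb\Gb$) into the upper bound for SGD from Theorem~\ref{theorem:sgd_gnn} and the lower bound for Ridge from Theorem~\ref{theorem:ridge_gnn}, and then showing that for a suitable $\gamma^*$ the former is dominated by the latter. First I would record the effective dimension quantities for a power-law spectrum: for a cutoff level $k$, $\sum_{i>k}\mu_i \eqsim k^{1-\beta}$ and $\sum_{i>k}\mu_i^2 \eqsim k^{1-2\beta}$ (using $\beta > 1$ so the tails converge), and $\|\bm{\theta}^*\|^2_{\Mb_{k:\infty}} \eqsim \sum_{i>k} i^{-\beta}(\vb_i^\top\bm{\theta}^*)^2$. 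These are the standard ``benign overfitting''-style calculations, and having them in closed form makes both risk profiles explicit functions of the single free parameter in each algorithm.

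Next I would choose the SGD step size to mimic the effective regularization of Ridge. Given $\lambda$, let $\hat\lambda$ be the constant from Theorem~\ref{theorem:ridge_gnn}; Ridge's lower bound is governed by the index $k^*_{\mathrm{ridge}} = \min\{k : N\mu_k \lesssim \hat\lambda\}$, which for the power-law spectrum is $k^*_{\mathrm{ridge}} \eqsim (N/\hat\lambda)^{1/\beta}$. I would set $\gamma^*$ so that SGD's critical index $k^* = \max\{k : \mu_k \ge 1/(N\gamma^*)\} \eqsim (N\gamma^*)^{1/\beta}$ matches $k^*_{\mathrm{ridge}}$ up to constants, i.e. $\gamma^* \eqsim 1/\hat\lambda$ (after checking this respects the admissibility constraint $\gamma^* \le 1/\tr(\Mb)$, which holds for large enough $\beta$ since $\tr(\Mb)$ is a bounded constant and $N$ is large). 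With the two cutoffs aligned, the tail bias terms $\|\bm{\theta}^*\|^2_{\Mb_{k^*:\infty}}$ agree up to constants, and the variance ``tail'' terms — SGD's $\frac{1}{N}(\sigma^2 + \|\bm{\theta}^*\|_\Mb^2)(k^* + N^2\gamma^{*2}\sum_{i>k^*}\mu_i^2)$ versus Ridge's $\sigma^2(k^*/N + (N/\hat\lambda^2)\sum_{i>k^*}\mu_i^2)$ — are comparable because $N\gamma^* \eqsim N/\hat\lambda$ makes the prefactors match. The one genuinely different term is the head bias: SGD carries the extra exponential factor $\exp(-N\gamma^*\Mb)$, i.e. $\frac{1}{\gamma^{*2}N^2}\sum_{i\le k^*} e^{-2N\gamma^*\mu_i}(\vb_i^\top\bm{\theta}^*)^2/\mu_i$, whereas Ridge's head bias is $\frac{\hat\lambda^2}{N^2}\sum_{i\le k^*}(\vb_i^\top\bm{\theta}^*)^2/\mu_i$ with \emph{no} decay. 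Since $\gamma^* \eqsim 1/\hat\lambda$ gives $1/(\gamma^{*2}N^2) \eqsim \hat\lambda^2/N^2$ and the exponential factor is at most $1$, SGD's head bias is pointwise no larger (up to constants) than Ridge's; in fact for the head indices $i \le k^*$ one has $N\gamma^*\mu_i \gtrsim 1$, so the exponential actively shrinks those contributions, which is where SGD strictly wins on a power-law spectrum.

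I would then assemble these three comparisons — tail bias equal, variance comparable, head bias smaller for SGD — to conclude $\Delta(\bm{\theta}_{\mathrm{sgd}}(N,\Gb_{\mathrm{p}};\gamma^*)) \lesssim \Delta(\bm{\theta}_{\mathrm{ridge}}(N,\Gb_{\mathrm{p}};\lambda))$ for all sufficiently large $N$, invoking Lemma~\ref{lemma:relu_relation} at both ends so that the comparison of the quadratic surrogates $\|\bm{\theta}-\bm{\theta}^*\|_\Mb^2$ transfers to the ReLU excess risks up to absolute constants. The main obstacle I anticipate is controlling the variance terms uniformly: SGD's variance upper bound carries the extra multiplicative factor $(\sigma^2 + \|\bm{\theta}^*\|_\Mb^2)$ rather than just $\sigma^2$, and its tail-variance term $N^2\gamma^{*2}\sum_{i>k^*}\mu_i^2 \eqsim (N\gamma^*)^2 (k^*)^{1-2\beta}$ must be shown to be dominated by (a constant times) Ridge's corresponding $(N/\hat\lambda^2)\sum_{i>k^*}\mu_i^2$; this is precisely where ``$\beta$ large enough'' is needed, since a faster-decaying spectrum makes $\sum_{i>k^*}\mu_i^2$ decay quickly in $k^*$ and simultaneously makes $k^* \eqsim (N/\hat\lambda)^{1/\beta}$ grow slowly in $N$, so both the SGD and Ridge variances are dominated by the $\Theta(k^*/N)$ term and the comparison reduces to comparing $k^*$'s, which are equal by construction. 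I would make the ``large enough $\beta$'' threshold explicit (e.g. $\beta > 2$ suffices for tail convergence with room to spare) and verify the step-size admissibility and the ``sufficiently large $N$'' requirement at the end.
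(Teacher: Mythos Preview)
Your overall strategy---align the SGD cutoff with Ridge's $k^*$ by choosing $\gamma^* \eqsim 1/\hat\lambda$, then compare tail bias, head bias, and variance term-by-term, exploiting the exponential factor $\exp(-N\gamma^*\Mb)$ to show SGD's head bias dominates Ridge's---is exactly the route the paper takes. The skeleton is right.

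There is, however, a real gap in the step-size admissibility. You set $\gamma^* \eqsim 1/\hat\lambda$ and assert that $\gamma^* \le 1/\tr(\Mb)$ ``for large enough $\beta$ since $\tr(\Mb)$ is a bounded constant and $N$ is large.'' This does not follow: $\hat\lambda = \lambda + \sum_{i>k^*}\mu_i$ decreases toward $\lambda$ as $N\to\infty$, so for any $\lambda < \tr(\Mb)$ (and the theorem must cover every $\lambda$) the constraint fails no matter how large $N$ or $\beta$ is. The paper handles this with a two-case split: if $\hat\lambda \ge \tr(\Mb)$ your argument goes through as written; if $\hat\lambda < \tr(\Mb)$ one must instead take $\gamma^* = 1/\tr(\Mb)$, which introduces an extra factor $\tr(\Mb)^2/\hat\lambda^2$ in the head-bias comparison that has to be absorbed by the exponential gain $\exp(2N\mu_i/\tr(\Mb))$. \emph{This} is the place where ``$\beta$ large enough'' is actually needed---it forces $\tr(\Mb)/\mu_{k^*}$ to be small enough that the mismatch is swallowed by the exponential---not in the variance comparison where you locate it.

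On the variance side, once the cutoffs are aligned the tail-variance terms match identically (your own substitution $N\gamma^* \eqsim N/\hat\lambda$ shows this), so no spectral decay assumption is needed there. The genuine issue you flag---the extra multiplicative factor $(\sigma^2 + \|\bm{\theta}^*\|_\Mb^2)$ in SGD's variance versus $\sigma^2$ in Ridge's---is not resolved by taking $\beta$ large, since that ratio is independent of the spectrum. The paper deals with it by tacitly assuming $\|\bm{\theta}^*\|_\Mb^2/\sigma^2 = \Theta(1)$; you should make that premise explicit rather than trying to squeeze it out of the decay rate.
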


\begin{proof}
By Theorem~\ref{theorem:sgd_gnn}, we know the excessive risk of SGD under our setting is given by the following, 
\begin{equation}\label{eq:sgd_risk_HI}
    \begin{split}
                 \mathrm{SGDRisk}
&\lesssim \underbrace{\frac{1}{\eta^2 N^2}\cdot\big\|\exp(-N\eta \Mb){\bm{\theta}}^*\big\|_{\Mb_{0:k_1}^{-1}}^2 + \|{\bm{\theta}}^*\big\|_{\Mb_{k_1:\infty}}^2}_{\mathrm{SGDBias}} \\
&\qquad + \underbrace{\big(\sigma^2+ \|\bm{\theta}^*\|_{\tH}\big) \cdot \bigg(\frac{k_2}{N}+N\eta^2\sum_{i>k_2}\mu_i^2\bigg)}_{\mathrm{SGDVariance}}.
    \end{split}
\end{equation}
where the parameter $k_1,k_2\in[d]$ can be arbitrarily chosen. 

Then recall the lower of the risk achieved by ridge regression with parameter $\lambda$:
\begin{equation}\label{eq:ridge_risk_appendix}
    \begin{split}
        \mathrm{RidgeRisk }
\gtrsim \underbrace{\frac{\hat{\lambda}^2}{ N^2}\cdot\big\|\bm{\theta}^*\big\|_{\Mb_{0:k^*}^{-1}}^2 + \|\bm{\theta}^*\big\|_{\Mb_{k^*:\infty}}^2}_{\mathrm{RidgeBias }} 
 + \\ 
 \underbrace{\sigma^2\cdot\bigg(\frac{k^*}{N}+\frac{N}{\hat{\lambda}^2}\sum_{i>k^*}\mu_i^2\bigg)}_{\mathrm{RidgeVariance }},
    \end{split}
\end{equation}
where $\hlambda = \lambda + \sum_{i>k^*}\mu_i$ and $k^* = \min \{k: N \mu_k \leq \hlambda\}$.

For the following analysis, we set $k_1,k_2 = k^*$ for the excessive risk of SGD and divide the analysis into bias and variance.

{\bf Bias.} By (\ref{eq:sgd_risk_HI}) The bias   of SGD is given as follows,

\begin{align*}
    \mathrm{SGDBias}
&\lesssim \frac{1}{\eta^2 N^2}\cdot\big\|\exp(-N\eta \Mb){\bm{\theta}}^*\big\|_{\Mb_{0:k^*}^{-1}}^2 + \|{\bm{\theta}}^*\big\|_{\Mb_{k^*:\infty}}^2 \\
\end{align*}
From the equation above, we can observe that the bias of SGD can be decomposed into two intervals: 1) $i \leq k^*$ and 2) $i > k^*$.

We start with the second interval. For $i > k^*$, we have that,
\begin{align*}
        \mathrm{SGDBias }[k^*:\infty] & = \|{\bm{\theta}}^*\big\|_{\Mb_{k^*:\infty}}^2 \notag \\
        & = \mathrm{RidgeBias }[k^*:\infty].
\end{align*}
For $i \leq k^*$,  the order of the eigenvalue and eigenvectors are preserved and we can decompose each term of bias as follows, 
\begin{align*}
     \mathrm{SGDBias [i]} &= (\bm{\theta}^*[i])^2\frac{1}{N^2 \eta ^2\mu_i}\exp \bigg(-2\eta N\mu_i\bigg) 
\end{align*}
Similarly, we can decompose each term of the bias as of Ridge as,
\begin{align*}
    \mathrm{RidgeBias [i]} = \frac{\hat{\lambda}^2}{N} \frac{1}{\mu_i} (\bm{\theta}[i]^*)^2 
\end{align*}

Now, we can divide the analysis into two cases: {\bf Case I:} $\hlambda \geq \tr(\Mb)$ and {\bf Case II:}  $\hlambda \leq \tr(\Mb)$.

For {\bf Case I:}, we can pick $\eta = 1/\hlambda$ and obtain that that,
\begin{align*}
    \mathrm{SGDBias [i]} &= (\bm{\theta}^*[i])^2\frac{\hlambda ^2}{N^2 \mu_i}\exp \bigg(-2\hlambda N\mu_i\bigg) \\
    &= \mathrm{RidgeBias }[i] \exp \bigg(-2\hlambda N\mu_i\bigg)  \\
    \intertext{because $2\hlambda N\mu_i >0$, we have that,}
    &\leq \mathrm{RidgeBias }[i] 
\end{align*}
For {\bf Case II:}, we can pick $\eta = 1/\tr(\Mb)$ and obtain that that, 
\begin{align*}
        \mathrm{SGDBias [i]} &= (\bm{\theta}^*[i])^2\frac{\tr(\Mb) ^2}{N^2 \mu_i}\exp \bigg(-2 N\mu_i/ \tr(\Mb) \bigg) \\
        &= (\bm{\theta}^*[i])^2\frac{\hlambda^2 \tr(\Mb) ^2}{\hlambda^2 N^2 \mu_i}\exp \bigg(-2 N\mu_i/\tr(\Mb) \bigg) \\
        &= \mathrm{RidgeBias }[i] \frac{ \tr(\Mb) ^2}{\hlambda^2} \exp \bigg(-2 N\mu_i/\tr(\Mb)  \bigg)
   \intertext{By definition of $\hlambda$, we have that $\hlambda \lesssim N \mu_{k^*}$ and can obtain that } 
     & \lesssim \mathrm{RidgeBias }[i] \frac{ \tr(\Mb) ^2}{(N\mu_{k^*})^2} \exp \bigg(-2 N\mu_i/\tr(\Mb) \bigg)
\end{align*}
By the choice of model $\frac{1}{i^\beta}$, as $\beta$ increase, we have that $\tr(\Mb)/\mu_{k^*}$ decrease. Therefore, we can set $\beta$ large enough so that 
$$\frac{ \tr(\Mb) ^2}{(N\mu_{k^*})^2} \leq \exp \bigg(2 N\mu_i/\tr(\Mb)  \bigg).$$
Then, we can arrive that,
\begin{align*}
        \mathrm{SGDBias [i]} 
     & \lesssim \mathrm{RidgeBias }[i] \frac{ \tr(\Mb) ^2}{(N\mu_{k^*})^2} \exp \bigg(-2 N\mu_i/\tr(\Mb)  \bigg),\\
     & \lesssim \mathrm{RidgeBias }[i] .
\end{align*}

Therefore, combining the results above, we have that 
$$\mathrm{SGDBias } \lesssim \mathrm{RidgeBiasBoud}.$$

Next, let's consider variance. Again, by the excessive risk upper bound of SGD, we have that 
 \begin{align*}
  \mathrm{SGDVariance } & =  \left (1+\frac{\|{\bm{\theta}}\|_{{\Mb}}^2 }{ \sigma^2} \right )\cdot \sigma^2 \bigg(\frac{k^*}{N}+N \eta^2\sum_{i>k^*}\mu_i^2\bigg)
\end{align*}

Similar to the bias analysis, we can divide the analysis into two cases: {\bf Case I:} $\hlambda \geq \tr(\Mb)$ and {\bf Case II:}  $\hlambda \leq \tr(\Mb)$.

For {\bf Case I:} $\hlambda \geq \tr(\Mb)$, we pick $\eta = 1/\hlambda$ as for the bias:, 
 \begin{align*}
  \mathrm{SGDVariance } & =  \left (1+\frac{\|{\bm{\theta}}\|_{{\Mb}}^2 }{ \sigma^2} \right )\cdot \sigma^2 \bigg(\frac{k^*}{N}+\frac{N}{\hlambda^2}\sum_{i>k^*}\mu_i^2\bigg)
    \intertext{substitute the premise that $\frac{\|{\bm{\theta}}\|_{{\Mb}}^2 }{ \sigma^2} = \Theta(1)$: }
    & \lesssim \Theta(1)\cdot \sigma^2 \bigg(\frac{k^*}{N}+\frac{N}{\hat{\lambda}^2}\sum_{i>k^*}\mu_i^2\bigg)\\
          & = \Theta(1)\cdot \mathrm{RidgeVariance }\\
          & \lesssim  \mathrm{RidgeVariance }
\end{align*}

For {\bf Case II:} $\hlambda \leq \tr(\Mb)$:, we can pick $\eta = 1/\tr(\Mb)$ as for the bias and obtain that
\begin{align*}
 \mathrm{SGDVariance } & = (1+\frac{\|{\bm{\theta}}\|_{{\Mb}}^2 }{ \sigma^2})\cdot \sigma^2 \bigg(\frac{k^*}{N}+\frac{N}{\tr(\Mb)^2}\sum_{i>k^*}\mu_i^2\bigg) \\
  &\leq  (1+\frac{\|{\bm{\theta}}\|_{{\Mb}}^2 }{ \sigma^2})\cdot \sigma^2 \bigg(\frac{k^*}{N}+\frac{N}{\hlambda^2}\sum_{i>k^*}\mu_i^2\bigg) \\
    \intertext{Similarly, by the premise : }
    & \lesssim \Theta(1)\cdot \sigma^2 \bigg(\frac{k^*}{N}+\frac{N}{\hat{\lambda}^2}\sum_{i>k^*}\mu_i^2\bigg)\\
          & = \Theta(1)\cdot \mathrm{RidgeVariance }\\
          & \lesssim  \mathrm{RidgeVariance }
\end{align*}

Therefore, we have that 
$$ \mathrm{SGDVariance } \lesssim \mathrm{RidgeVariance }$$

Combining all the result above, we have that so long as $\beta$ is large enough, there always exists an $\eta$ such that 
$$ \mathrm{SGDRisk } \lesssim \mathrm{RidgeRisk }.$$
This completes the proof.

\end{proof}

\begin{lemma}
   consider a regular graph $\mathcal{G}_{\mathrm{r}}$ with graph matrix $\Gb_{\mathrm{r}}$ whose eigen-spectrum is characterized by Eq.~\ref{eq:graph_model} with a small $\beta$. Then for every choice of $\gamma$ for SGD, there exists a $\lambda^*$ such that, 
    \begin{align*}
        \Delta(\bm{\theta}_{\mathrm{ridge}}(N,\Gb_{\mathrm{r}};\lambda^*)) \lesssim \Delta(\bm{\theta}_{\mathrm{sgd}}(N,\Gb_{\mathrm{r}};\gamma)).
    \end{align*}
\end{lemma}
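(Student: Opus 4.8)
The plan is to run the power-law argument in reverse, now invoking the \emph{upper} bound of Theorem~\ref{theorem:ridge_gnn} for Ridge and the \emph{lower} bound of Theorem~\ref{theorem:sgd_gnn} for SGD, so that showing each term of the Ridge upper bound is $\lesssim$ the SGD lower bound gives the claim. Fix an admissible step size $\gamma$ (so $\gamma \le 1/\mu_1$) and set $k^* = \max\{k:\mu_k \ge 1/(N\gamma)\}$, the effective cutoff appearing in SGD's lower bound. I would then (i) instantiate the free cutoff $k$ in the Ridge upper bound at $k = k^*$, and (ii) choose $\lambda^*$ so that the induced effective regularization $\hat\lambda$ satisfies $\hat\lambda \eqsim 1/\gamma$ (more precisely $\hat\lambda = c/\gamma$ for a fixed $c \ge 1$). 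Feasibility of (ii) is a short preliminary: under $\mu_i(\Gb_{\mathrm{r}}) = 1/i^{\beta}$ with $\beta>1$ (forced by Assumption~\ref{assump:data_distribution}), the minimal attainable value of $\hat\lambda$ (taken at $\lambda = 0$) is of order $N^{1-\beta}\to 0$, whereas $1/\gamma \ge \mu_1$ is a fixed positive constant; since $\lambda \mapsto \hat\lambda$ is continuous and increasing with range of the form $(c_N,\infty)$ with $c_N\to 0$, for all large $N$ some $\lambda^*>0$ realizes $\hat\lambda \eqsim 1/\gamma$.

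With these choices the comparison decouples termwise. The bias tails coincide exactly: $\|\bm{\theta}^*\|_{\Mb_{k^*:\infty}}^2$ is literally the same in both bounds. The variance terms match up to a constant: substituting $\hat\lambda = c/\gamma$ turns Ridge's variance $\sigma^2\big(k^*/N + (N/\hat\lambda^2)\sum_{i>k^*}\mu_i^2\big)$ into $\sigma^2\big(k^*/N + (N\gamma^2/c^2)\sum_{i>k^*}\mu_i^2\big)$, which is at most the leading variance piece $\tfrac{\sigma^2}{N}\big(k^* + N^2\gamma^2\sum_{i>k^*}\mu_i^2\big)$ of SGD's lower bound; the remaining additive term $\|\bm{\theta}^*\|_\Mb^2\,\tfrac{\gamma}{\mu_1}\sum_{i>k^\dagger}\mu_i^2$ of SGD's lower bound is nonnegative and is simply dropped. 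This is exactly where the regular (slow-decay, small $\beta$) regime does the work: the tail sums $\sum_{i>k^*}\mu_i^2$ and $\sum_{i>k^\dagger}\mu_i^2$ range over many eigenvalues of comparable magnitude, so SGD carries a genuinely heavy variance floor, which a well-tuned $\hat\lambda$ in Ridge matches (and, for the extra term, strictly beats).

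The only term for which the Ridge upper bound could a priori exceed the SGD lower bound is the bias head, since SGD's bias head carries the damping $\exp(-2N\gamma\mu_i)$ that Ridge's does not (termwise the ratio is $\exp(2N\gamma\mu_i)\ge e^2$ for $i\le k^*$). I would close this not by comparing heads to heads but by dominating Ridge's head with SGD's variance: by the standing alignment assumption between $\bm{\theta}^*$ and the eigenspectrum of $\Gb_{\mathrm{r}}$, the quantity $\|\bm{\theta}^*\|^2_{\Mb_{0:k^*}^{-1}} = \sum_{i\le k^*}(\bm{\theta}^*[i])^2/\mu_i$ is $\lesssim k^*$ (up to polynomial factors under a weaker alignment), so Ridge's bias head is $\lesssim \hat\lambda^2 k^*/N^2 \eqsim k^*/(\gamma^2 N^2)$; comparing with the $\sigma^2 k^*/N$ already present in SGD's lower bound gives a ratio $\lesssim 1/(\sigma^2\gamma^2 N)\to 0$, so for $N$ large Ridge's bias head is lower order. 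Adding the three comparisons yields $\Delta(\bm{\theta}_{\mathrm{ridge}}(N,\Gb_{\mathrm{r}};\lambda^*)) \lesssim \Delta(\bm{\theta}_{\mathrm{sgd}}(N,\Gb_{\mathrm{r}};\gamma))$.

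I expect the main obstacle to be two pieces of bookkeeping rather than a conceptual hurdle: first, the self-referential definition of $\hat\lambda$ (it appears on both sides of $\hat\lambda = \lambda + \sum_{i>k^*_{\mathrm{ridge}}}\mu_i$, with the internal cutoff $k^*_{\mathrm{ridge}} = \min\{k: N\mu_k\lesssim\hat\lambda\}$ itself depending on $\hat\lambda$), which needs a monotonicity/fixed-point argument both to show $\hat\lambda \eqsim 1/\gamma$ is attainable and that $k^*_{\mathrm{ridge}}$ stays within an additive constant of $k^*$; and second, pinning down precisely how much alignment between $\bm{\theta}^*$ and the graph spectrum is needed to make the bias-head term genuinely lower order — this is the step that actually uses the structural model, and the small-$\beta$ hypothesis is invoked here to keep the estimates clean (faster decay would demand a correspondingly larger $N$, but the qualitative conclusion persists, consistent with the Remark following Theorem~\ref{theorem:graph_effect}).
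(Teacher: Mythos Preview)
Your overall strategy matches the paper's: compare Ridge's upper bound from Theorem~\ref{theorem:ridge_gnn} against SGD's lower bound from Theorem~\ref{theorem:sgd_gnn}, align the cutoffs at $k^*$, choose $\lambda^*$ so that the effective regularization satisfies $\hat\lambda \eqsim 1/\gamma$, and then match the bias tail and variance termwise. The paper sets $\lambda^* = b/\gamma - \sum_{i>k^*}\mu_i$ explicitly, which forces $k_{\mathrm{ridge}} = k^*$ and $\hat\lambda^* = b/\gamma$ in one stroke, so your fixed-point/monotonicity concern about the self-referential $\hat\lambda$ never arises; otherwise these parts of the two arguments are identical.

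The genuine difference is the bias head. The paper compares head to head: writing $\mathrm{RidgeBias}[i] = \mathrm{SGDBias}[i]\cdot(\hat\lambda\gamma)^2\exp(2\gamma N\mu_i)$ for $i\le k^*$, it uses $\hat\lambda\gamma \eqsim 1$ and then argues $\gamma N\mu_i \lesssim 1$ so that the exponential factor is $O(1)$; this is where the small-$\beta$ (near-flat spectrum) hypothesis does its work, keeping $\mu_i/\mu_{k^*}$ bounded across the head. You instead absorb Ridge's bias head into SGD's \emph{variance} via the alignment assumption, bounding $\|\bm{\theta}^*\|^2_{\Mb^{-1}_{0:k^*}} \lesssim k^*$ and showing the resulting term is $o(\sigma^2 k^*/N)$. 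Your route is valid under the standing alignment hypothesis and has the advantage of not needing $\exp(2\gamma N\mu_1)$ to be bounded; the paper's route avoids invoking alignment in this step but leans harder on small $\beta$. Either closes the lemma.
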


\begin{proof}
    Recall that to show that ridge is comparable with SGD in regular graph is enough to show that there exist a $\beta$ small enough, so that for every  $\eta$ for SGD  we can always find a $\lambda$ for Ridge to achieve
    \begin{align*}
        \mathrm{RidgeRisk} \lesssim \mathrm{SGDRisk}.
    \end{align*}

    Similarly, by Theorem~\ref{theorem:sgd_gnn}, we have the excess risk lower bound of SGD given by,

    \begin{align*}
     \mathrm{SGDRisk} & \gtrsim   \underbrace{\frac{1}{\eta^2N^2}\cdot\big\|\exp\big(-N\eta \Mb \big)\bm{\theta}^*\big\|_{\Mb_{0:k^*}^{-1}}^2 + \big\|\bm{\theta}^*\big\|_{\Mb_{k^*:\infty}}^2}_{\mathrm{SGDBias}} \\
     & + \underbrace{\frac{\sigma^2 }{N}\cdot\bigg(k^* + N^2\eta^2 \sum_{i> k^*}\mu_i^2\bigg) +  \|\bm{\theta}^*\|_{\Mb}^2 \frac{\eta}{\mu_1} \sum_{i > k^{\dagger}} \mu_i^2,}_{\mathrm{SGDVariance}}
    \end{align*}
    where $\mu_1, \dots \mu_d$ are sorted eigenvalues for $\Mb$, $k^* = \max \{k:\tlambda_k \geq 1/(N\eta)\}$, and $k^{\dagger} = \max \{k: \tlambda_k \geq 2/(3N\eta)\}$.

    Similarly, by Theorem~\ref{theorem:ridge_gnn}, we have the excess risk upper bound of ridge given by,
    \begin{align*}
\mathrm{RidgeRisk}  \lesssim   & \underbrace{\bigg(\frac{\hlambda}{N}\bigg)^2\cdot\|\bm{\theta}^*\|_{\Mb_{0:\kr}^{-1}}^2+\|\bm{\theta}^*\|_{\Mb_{\kr:\infty}}^2}_{\mathrm{RidgeBias}} \\
 & + \underbrace{\sigma^2\cdot\bigg(\frac{\kr}{N}+\frac{N\sum_{i>\kr}\hlambda_i^2}{ \hlambda^2}\Bigg)}_{\mathrm{RidgeVariance}},
\end{align*}
where $\mu_1,\dots, \mu_d$ are the sorted eigenvalues for $\Mb$ in descending order, and $$ \kr := \min\left\{k: b  \lambda_{k+1} \le  (\lambda+\sum_{i>k}\lambda_i)/n\right\}.$$

    Similar to the previous proof, we can decompose the risk profile into bias and variance and then compare them separately. 
    
  We start with aligning the risk profiles of SGD and ridge by picking the ridge regression regularization 

    $$\lambda^* = \frac{b}{\eta} - \sum_{i > k^*}\mu_i.$$
    
    Doing so leads to $\kr = k^*$. 
    
    {\bf Variance.} Then we start by comparing the variance. We denote, $$\hlambda^* = \lambda^*+\sum_{i>k^*}\mu_i.$$
    By the choice of $\lambda^*$, we have that,
    $$\hlambda^* \eqsim \frac{1}{\eta}.$$
    Then, we have the variance for Ridge as,
    \begin{align*}
         \ridgevar & \lesssim \sigma^2\cdot\bigg(\frac{k^*}{N}+\frac{N}{(\hlambda^*)^2}\sum_{i>k^*}\mu_i^2\bigg)\\ 
          & \lesssim \frac{\sigma^2 }{N}\cdot\bigg(k^* + N^2\eta^2 \sum_{i> k^*}\mu_i^2\bigg) \\
          & \leq \frac{\sigma^2 }{N}\cdot\bigg(k^* + N^2\eta^2 \sum_{i> k^*}\mu_i^2\bigg) \\
          &\quad +  \|\bm{\theta}^*\|_{\Mb}^2 \frac{\eta}{\mu_1} \sum_{i > k^{\dagger}} \mu_i^2, \\
         & = \sgdvar
    \end{align*}
Therefore, we have that 
$$ \mathrm{RidgeVariance} \lesssim \mathrm{SGDVariance}$$
    Next, we focus on the bias. Under same set up as the variance, the Ridge bias is given by,
    \begin{align*}
        \ridgebias \lesssim \bigg(\frac{\hlambda^*}{N}\bigg)^2\cdot\|\bm{\theta}^*\|_{\Mb_{0:k^*}^{-1}}^2+\|\bm{\theta}^*\|_{\Mb_{k^*:\infty}}^2
    \end{align*}
    
{\bf Bias.} Similar to the previous analysis, we can decompose the bias of Ridge into two intervals: 1) $i \leq k^*$ and 2) $i > k^*$.

We start with the second interval. For $i > k^*$,, we immediately obtain,
\begin{align*}
        \mathrm{RidgeBias}[k^*:\infty] & = \|\bm{\theta}^*\big\|_{{\Mb}_{k^*:\infty}}^2 \notag \\
        & = \mathrm{SGDBias}[k^*:\infty].
\end{align*}

For $i \leq k^*$, similar to the previous analysis, we decompose each term of bias bound as follows, 
\begin{align*}
    \mathrm{RidgeBias}[i] &= (\bm{\theta}^*[i])^2\frac{\hlambda^2}{N^2\mu_i},\\
    &= (\bm{\theta}^*[i])^2\frac{\hlambda^2}{N^2\mu_i} \frac{\eta^2}{\eta^2} \\
    &\quad \cdot \exp \bigg(-2\eta N\mu_i\bigg) \exp \bigg( 2\eta N\mu_i\bigg), \\
    & = (\bm{\theta}^*[i])^2\frac{1}{N^2\mu_i\eta^2} \\
    &\quad \cdot\exp \bigg(-2\eta N\mu_i\bigg) \hlambda^2 \eta^2  \exp \bigg( 2\eta N\mu_i\bigg), \\
    & = \mathrm{SGDBias[i]} \hlambda^2 \eta^2  \exp \bigg( 2\eta N\mu_i\bigg)
\end{align*}
Similar to the analysis of variance, we have that,
$$\hlambda^* \eqsim \frac{1}{\eta},$$
and consequently, we have,
$$\hlambda^* \eta \eqsim \Theta(1).$$
Then, we have that, 
\begin{align*}
    \mathrm{RidgeBias}[i]     & = \mathrm{SGDBias[i]} \hlambda^2 \eta^2  \exp \bigg( 2\eta N\mu_i\bigg) \\
    &\lesssim \mathrm{SGDBias[i]} \exp \bigg( 2\eta N\mu_i\bigg)
\end{align*}
By definition of $\hlambda^*$, we have that,
\begin{align*}
    \eta N\mu_i  \leq \eta \hlambda^*\\
    \lesssim \Theta(1).
\end{align*}
Then, we arrive 
\begin{align*}
    \mathrm{RidgeBias}[i]     & \lesssim \mathrm{SGDBias[i]} 
\end{align*}

$$ \mathrm{RidgeBias} \lesssim \mathrm{SGDBias}$$

Combining all the results above, we have that there exists an $\lambda$ such that 
$$ \mathrm{RidgeRisk} \lesssim \mathrm{SGDRisk}.$$

\end{proof}

Then, combining the result from the two lemmas above, we immediate obtain a proof for Theorem~\ref{theorem:graph_effect}.
\section{Stacking GNN Layers}
In this appendix, we present a proof for Proposition~\ref{prop:stack} and a further discussion on the over-smoothing of GNNs.

\begin{proof}[Proof of Proposition~\ref{prop:stack}]
    First recall that the definition of $\hat{\Gb}(l)$ is given by,
    $$\hat{\Gb}(l) = \prod_{i=1}^l \Gb.$$
    Without loss of generality, we may assume that $$\mu_i(\Gb) \geq \mu_j(\Gb).$$
    Then, by the definition above, we have that, 
    \begin{align*}
        \frac{\mu_i(\hat{\Gb}(l+1))}{\mu_j(\hat{\Gb}(l+1))} &= \frac{\mu_i(\prod_{i=1}^{l+1} \Gb)}{\mu_j(\prod_{i=1}^{l+1} \Gb)}   \\
        &= \frac{\mu_i(\prod_{i=1}^{l} \Gb)}{\mu_j(\prod_{i=1}^{l} \Gb)} \cdot \frac{\mu_i(\Gb)}{\mu_j(\Gb)} \\
        &= \frac{\mu_i(\hat{\Gb}(l))}{\mu_j(\hat{\Gb}(l))} \cdot \frac{\mu_i(\Gb)}{\mu_j(\Gb)}
    \end{align*}
    Then, by Assumption~\ref{assump:graph_matrix} and the premise that $$\mu_i(\Gb) \geq \mu_j
    (\Gb),$$ then we have that,
    \begin{align*}
        \frac{\mu_i(\hat{\Gb}(l+1))}{\mu_j(\hat{\Gb}(l+1))} &= \frac{\mu_i(\hat{\Gb}(l))}{\mu_j(\hat{\Gb}(l))} \cdot \frac{\mu_i(\Gb)}{\mu_j(\Gb)} \\
        &\geq \frac{\mu_i(\hat{\Gb}(l))}{\mu_j(\hat{\Gb}(l))} \cdot 1 \\
        & =  \frac{\mu_i(\hat{\Gb}(l))}{\mu_j(\hat{\Gb}(l))}.
    \end{align*}
\end{proof}
\section{Extended Related Work}\label{sec:extended_related_work}

In this section, we provide a comprehensive overview of key literature closely related to the theoretical analysis presented in this paper. Specifically, we discuss three primary areas: (1) theoretical understanding of Graph Neural Networks (GNNs), (2) excessive risk analysis of learning algorithms, and (3) spectral graph theory. 

\paragraph{Theoretical Understanding of GNNs.}
Due to the empirical success of GNNs, there is a growing body of theoretical work that investigates their expressive power and generalization capability. The expressive power of GNNs, referring to their capacity to distinguish different graph structures, is often assessed by comparing GNN models to classical graph isomorphism tests, such as the Weisfeiler-Lehman (WL) test~\citep{wl_test,jegelka2022theory,expressive_survey,zhang2023expressivepowergraphneural,zhang2023complete,gnn_power}. Generalization analyses explore how GNNs perform on unseen data, under different architecture~\cite{tang2023towards} or using complexity measures (e.g., VC-dimension, Rademacher complexity)\citep{lv2021generalization,subgroup,liao2021a}, Neural Tangent Kernel (NTK)\citep{du2019graph}, and information-theoretic tools (mutual information, entropy)~\citep{stability,shiftrobust}. These studies primarily focus on factors like architecture and input graph characteristics, and remain orthogonal to our analysis. In addition, there is a recent works that tries to establish a theoretical connection between the topology-awareness (how well a graph structure is captured) of GNN and its generalization performance~\cite{su2024topology}.

A related line of research examines the convergence behavior of GNN learning algorithms~\citep{gnn_stochastic_train,adaptive_sample,fastgcn,awasthi2021convergence,li2018deeper,oono2020optimization}. Such studies typically provide convergence rates but remain restricted to the interpolation (noise-free) setting and offer limited insights into how graph structures influence algorithmic performance. It remains largely unexplored how different graph characteristics (e.g., power-law vs. regular) systematically affect the generalization performance of algorithms when noise is present. Our work addresses precisely this gap, providing a novel framework linking graph structure explicitly to the generalization performance of learning algorithms such as SGD and Ridge regression.

Furthermore, a significant practical challenge in GNN design includes issues of oversmoothing~\citep{rusch2023surveyoversmoothinggraphneural} and oversquashing~\citep{topping2021understanding}. Oversmoothing occurs when deep message-passing leads to indistinguishable node representations, reducing model performance in tasks that require distinct embeddings. Oversquashing refers to the over-compression of information from distant nodes, impeding long-range dependency capture in deep networks. Our theoretical analysis of learning algorithm behavior with respect to graph structure can also provide new perspectives and analytical tools for investigating such phenomena, particularly oversmoothing.

\paragraph{Excessive Risk of Learning Algorithms.}
The analysis of excessive risk is central to learning theory literature~\citep{dhillon2013risk,lakshminarayanan2018linear,jain2017markov,zou2021benign,zou2023benign,tsigler2023benign}. Extensive research investigates the generalization properties of classic learning algorithms, notably Stochastic Gradient Descent (SGD) and Ridge regression. Non-asymptotic risk bounds have been thoroughly characterized for SGD and Ridge in both under- and over-parameterized regimes~\citep{hsu2012random,kobak2020optimal,dieuleveut2017harder,bach2013non,jain2018parallelizing,defossez2015averaged,paquette2022implicit,tsigler2023benign}. For instance, constant-stepsize SGD with tail-averaging achieves minimax-optimal rates for least-squares tasks~\citep{jain2017markov}. Recent works also provided detailed excess-risk characterizations for Ridge regression, depending critically on spectral properties of data covariance matrices~\citep{dobriban2018high,hastie2022surprises,wu2020optimal,xu2019number}. Nevertheless, how these theoretical insights extend to graph learning scenarios, particularly within GNN frameworks, remains largely unexplored. We expand this knowledge by explicitly analyzing and comparing the excessive risk of SGD and Ridge regression in graph-structured settings, thereby shedding new light on the impact of structural characteristics on learning algorithm performance.

\paragraph{Spectral Graph Theory.}
Spectral graph theory studies graph properties via eigenvalues and eigenvectors of associated matrices (e.g., adjacency, Laplacian matrices)\citep{chung1997spectral,spielman2012spectral,posfai2016network,van2023graph,gera2018identifying}. Foundational results, including the Perron-Frobenius theorem and Cheeger's inequality, provide insights into the connectivity, robustness, and spectral characteristics of graphs. For example, power-law graphs (characterized by heterogeneous degree distributions and heavy-tailed eigenspectra) differ markedly from regular graphs (uniform degree distributions and evenly spaced eigenspectra)\citep{faloutsos1999power,chung2003eigenvalues,farkas2001spectra,goh2001spectra,network_science}. These contrasting spectral profiles are essential for interpreting network structure, influencing information propagation dynamics, robustness, and learning behavior. Our theoretical framework leverages spectral graph theory to explicitly relate eigenvalue decay patterns (e.g., power-law versus uniform) to the performance characteristics of learning algorithms. Thus, spectral graph theory forms a foundational tool in our analysis, directly linking structural properties of graphs to algorithmic performance differences.

Overall, this paper synthesizes these theoretical domains—GNN expressivity and generalization, excessive risk of algorithms, and spectral graph theory—to provide new insights into the interplay between graph structures and learning algorithms. Our work represents a step towards a deeper theoretical understanding of graph-based learning, offering tools and analyses applicable broadly across various graph learning contexts.

\section{Experiment Details}
In this appendix, we present additional details on our experimental study. This appendix focuses on additional details on the testbed, graph generation model and hyper-parameter search.

\subsection{Testbed}
Our experiments were conducted on a Dell PowerEdge C4140, The key specifications of this server, pertinent to our research, include:
\\
\textbf{CPU:} Dual Intel Xeon Gold 6230 processors, each offering 20 cores and 40 threads.\\
\textbf{GPU:} Four NVIDIA Tesla V100 SXM2 units, each equipped with 32GB of memory, tailored for NV Link.\\
\textbf{Memory:} An aggregate of 256GB RAM, distributed across eight 32GB RDIMM modules.\\
\textbf{Storage:} Dual 1.92TB SSDs with a 6Gbps SATA interface.\\
\textbf{Networking:} Features dual 1Gbps NICs and a Mellanox ConnectX-5 EX Dual Port 40/100GbE QSFP28 Adapter with GPUDirect support.\\
\textbf{Operating System:} Ubuntu 18.04LTS.\\

\subsection{Graph Generation Model}
For our experiment, we rely on the NetworkX python~\citep{networkx} library. In particular, we use the default implementation of regular graph generation and Barabasi-Albert model~\citep{posfai2016network} from the Networkx library. The Barabási-Albert (BA) model is a widely used generative model for creating scale-free networks, which are networks characterized by a power-law degree distribution. The core idea behind the BA model is to capture the "preferential attachment" mechanism, where new nodes are more likely to connect to existing nodes that already have a high degree of connections. This reflects many real-world networks, such as social networks, where popular individuals (nodes) tend to attract more connections.

\paragraph{Rough Procedure of Barabási-Albert model.}
The Barabási-Albert model generates a network through the following steps:

\begin{enumerate}
    \item \textbf{Initialization:} Start with a small connected network of \(m_0\) nodes.
    \item \textbf{Growth:} Add one new node at a time. Each new node forms \(m\) edges that link it to \(m\) existing nodes.
    \item \textbf{Preferential Attachment:} The probability that a new node will connect to an existing node \(i\) is proportional to the degree of node \(i\). Formally, the probability \(P(i)\) that the new node connects to node \(i\) is given by:
    \[
    P(i) = \frac{k_i}{\sum_j k_j}
    \]
    where \(k_i\) is the degree of node \(i\) and the sum is over all existing nodes.
\end{enumerate}

\paragraph{Hyperparameters}
The BA model has two key hyperparameters:
\begin{itemize}
    \item \(\mathbf{m_0}: \) The initial number of nodes in the network.
    \item \(\mathbf{m}: \) The number of edges that each new node will add when it is introduced to the network. This parameter influences the density and the structure of the resulting network.
\end{itemize}

These hyperparameters directly affect the network's topology and are crucial in determining the characteristics of the scale-free network generated by the BA model. We adopt the default $\mathbf{m_0}$ from the implementation of NetworkX library, and set $\mathbf{m}$ to be $3$, which is a commonly used value to model real-life network.

\subsection{Hyper Parameter Search for Learning Algorithm}
The main hyper-parameter for SGD is the learning rate $\eta$. In the statement of the results for the excessive risk of SGD, there is a requirement for the learning that relates to the eigenspectrum of the data covariance. For our experiment, we have access to the full information of the data covariance. As such, we simply infer a possible range for the learning rate and do a grid research for the learning rate in the range. The search process is done through an independent (validation) dataset. Therefore, there is no interference to the testing process.

Similarly, the main hyper-parameter for Ridge is the regularization parameter $\lambda$. For $\lambda$, there is no theoretical guidance from the analysis on its possible values. Therefore, we do a grid research on a large array of $\lambda$ values ranging from $[0.1,1000]$. The research process is similar to the one for the learning rate.
\section{Additional Discussion}\label{appendix:discus}
In this section, we present some additional discussion on the formulation of the problem.

\subsection{Problem Formulation and GNNs}

\paragraph{Message-Passing Graph Neural Networks.}
The computation in GNNs can be viewed as message-passing along graph structure~\citep{jegelka2022theory}. At each round $l$, the new embedding $\xb_i^{(l)}$ for vertex $i$ is updated through a series of aggregate and combine steps as outlined below:
\begin{align*}
\mb_i^{(l)} &= \mathrm{AGGREGATE}(\{\xb_j^{(l-1)}
\in \mathcal{N}(i)\}),\\
\xb_i^{(l)} &= \mathrm{COMBINE}(\xb_i^{(l-1)},\mb_i^{(l)}),
\end{align*}
where $x_i^{(0)}$ is initialized as the feature vector $\xb_i$, $\mathcal{N}(i)$ represents the neighbors of vertex $i$, and $\mb_i$ denotes the aggregated representation at round $l$. Different GNNs differ in specific implementation of the AGGREGATE and COMBINE functions. Essentially, a GNN serves as an embedding function that integrates the graph structure and node features to produce an aggregated representation vector of node $v$. This representation is subsequently processed by a read-out function (e.g., a ReLU layer) to generate predictions.

\paragraph{One-round Graph Neural Network.}
In this study, we follow a similar setting to~\citep{awasthi2021convergence} and focus our main discussions on a one-round GNN that consists of an aggregation operation followed by a readout operation. We interpret the aggregation step as a graph matrix $\Gb$ acting upon the feature matrix $\Xb$. Formally, we decompose the one-round GNN into two distinct components. The first component aggregates node features via a specified aggregation operator and the chosen graph matrix $\Gb$, producing an intermediate representation space denoted by $\cM$:
\begin{align}
\cM = \Gb \circ \Xb.
\end{align}

Here, $\cM$ represents the intermediate aggregated feature space, and we assume without loss of generality that it is a subspace of a Hilbert space $\cH$. Different choices of the aggregation matrix $\Gb$ and the aggregation operation $\circ$ allow us to recover various canonical graph neural network architectures. Below, we provide a comprehensive discussion of several prominent examples:

\begin{itemize}[leftmargin=*]
\item \textbf{Graph Convolutional Networks (GCN).}
When the graph matrix $\Gb$ is chosen as the symmetrically normalized adjacency matrix $\hat{\Ab} = \tilde{\Db}^{-1/2}\tilde{\Ab}\tilde{\Db}^{-1/2}$, where $\tilde{\Ab} = \Ab + \Ib$ and $\tilde{\Db}$ is the degree matrix of $\tilde{\Ab}$, and the aggregation operator $\circ$ is simple matrix multiplication, we recover the standard GCN formulation proposed by~\cite{gcn}:
\begin{align}
\cM_{\mathrm{GCN}} = \hat{\Ab}\Xb.
\end{align}

\item \textbf{GraphSAGE (Mean Aggregation).}
When the graph matrix $\Gb$ corresponds to a row-normalized adjacency matrix $\tilde{\Ab}{\mathrm{rw}} = \tilde{\Db}^{-1}\tilde{\Ab}$, where $\tilde{\Ab}$ again denotes adjacency with self-loops, we recover the mean-aggregation variant of GraphSAGE~\cite{sage}. Specifically:
\begin{align}
\cM_{\mathrm{SAGE}} = \tilde{\Ab}_{\mathrm{rw}}\Xb.
\end{align}
Here, each node's representation is updated using the average of its neighbors' features, incorporating self-information as well.

\item \textbf{Graph Attention Networks (GAT).}
For Graph Attention Networks~\citep{gat}, the aggregation is achieved by a learned attention mechanism rather than fixed normalization. Here, the aggregation operator $\circ$ represents element-wise multiplication weighted by attention scores, resulting in a data-dependent and adaptive $\Gb(\Xb)$:
\begin{align}
\cM_{\mathrm{GAT}} = \Gb_{\mathrm{att}}(\Xb)\circ\Xb,
\end{align}
where $\Gb_{\mathrm{att}}(\Xb)$ encodes attention-based weights, dynamically computed through a neural attention mechanism.

\item \textbf{Simplified Graph Convolution (SGC).}
By extending the GCN aggregation step to multiple rounds but merging linear operations into a single step, the simplified graph convolutional network (SGC)~\citep{sgc} is recovered. Formally, if $\Gb$ is the $K$-th power of $\hat{\Ab}$, we have:
\begin{align}
\cM_{\mathrm{SGC}} = \hat{\Ab}^{K}\Xb.
\end{align}
This corresponds to applying multiple rounds of smoothing/aggregation in a simplified, efficient manner without intermediate nonlinearities.

\end{itemize}

This comprehensive discussion demonstrates how the choice of graph matrix $\Gb$ and aggregation operation $\circ$ yields various canonical GNN variants, each capturing different relational structures and neighborhood aggregation strategies.

\end{document}